\newtheorem{theorem}{Theorem}
\newtheorem{lemma}{Lemma}
\newtheorem{definition}{Definition}
\newtheorem{corollary}{Corollary}
\def\mb{\mathbb}
\def\mbf{\mathbf}
\def\mc{\mathcal}
\def\mk{\mathfrak}
\newcommand{\E}{\mb{E}}
\renewcommand{\P}{\mb{P}}
\renewcommand{\v}[1]{\mbf{#1}}
\newcommand\redout{\bgroup\markoverwith{\textcolor{red}{\rule[.5ex]{2pt}{0.4pt}}}\ULon}
\title{Differentially Private Learning of Hawkes Processes}
\newcommand{\printfnsymbol}[1]{%
  \textsuperscript{\@fnsymbol{#1}}%
}
\author{%
  Mohsen Ghassemi\thanks{Equal contribution} \and 
  % examples of more authors
    \textbf{Eleonora Krea\v{c}i\'{c}}\printfnsymbol{1} \and 
    \textbf{Niccol\`o Dalmasso} \AND
    \textbf{Vamsi K. Potluru} \and
    \textbf{Tucker Balch} \and
    \textbf{Manuela Veloso} \AND
{\rm J.P. Morgan AI Research}\\[2pt]
$\{$\texttt{mohsen.ghassemi}, \texttt{eleonora.kreacic}, \texttt{niccolo.dalmasso}$\}$\texttt{@jpmchase.com}
}
\begin{document}

\maketitle

\begin{abstract}
  %The abstract paragraph should be indented \nicefrac{1}{2}~inch (3~picas) on
  %both the left- and right-hand margins. Use 10~point type, with a vertical
  %spacing (leading) of 11~points.  The word %\textbf{Abstract} must be centered,
  %bold, and in point size 12. Two line spaces precede the abstract. The abstract
  %must be limited to one paragraph.
  Hawkes processes have recently gained increasing attention from the machine learning community for their versatility in modeling event sequence data. While they have a rich history going back decades, some of their properties, such as sample complexity for learning the parameters and releasing differentially private versions, are yet to be thoroughly analyzed.
  In this work, we study standard Hawkes processes with background intensity $\mu$ and excitation function $\alpha e^{-\beta t}$. % in the regime $\alpha<1$. 
  We provide both non-private and differentially private estimators of $\mu$ and $\alpha$, and obtain sample complexity results in both settings to quantify the cost of privacy.
  %Our method relies on branching process representation of Hawkes process. 
  Our analysis exploits the strong mixing property of Hawkes processes and classical central limit theorem results for weakly dependent random variables. We validate our theoretical findings on both synthetic and real datasets.
\end{abstract}

%\section{Paper structure: please fill}
%\begin{enumerate}
%    \item Introduction + Related Work 2 pages
%    \item Hawkes processes 1 page
%    \item Sample complexity 1 page
%    \item DP estimator 1 page
%    \item Experiments 1 page
%\end{enumerate}
\section{Introduction}

Event sequence data capture the unfolding of a discrete set of events over time. Events data are ubiquitous in many domains, with frequencies spanning many time scales. In finance, limit order book data record the orders placed on specific securities every tenth of a second \cite{bacry2015hawkes}. In advertising, daily clickstream data detail the search habits of consumers \cite{laxman2008stream} or the popularity and interactions within a social media network \cite{farajtabar2015coevolve}. In geology, earthquake logs report the strength of surface movements over several decades \cite{ogata1998space}. Unlike time series however, events data are asynchronous \cite{ross1996stochastic}. In other words, the time between events is not regular and is in fact a fundamental piece in understanding the dynamics of the event data. Temporal point processes \cite{cox1980point} are a powerful mathematical tool to model inter-arrival times between events. One can use point processes not only to learn event dynamics but also to simulate event sequences with the same inter-arrival dynamic \cite{ogata1981lewis}. Hawkes processes \cite{hawkes1971spectra, isham1979self} are a class of point processes that capture self-excitation dynamics, i.e., situations in which the arrival of one event increases the rate of the next. Hawkes processes have recently gained increasing attention from the machine learning community due to their versatility in modeling event sequence data. Successful applications include topic modeling and clustering for text document streams \cite{he2015hawkestopic, du2015dirichlet}, network structure reconstruction and inference \cite{yang2013mixture, choi2015constructing, etesami2016learning}, personalized recommendations \cite{du2015time} and learning causality structures \cite{xu2016learning}, among others. Recent works have focused on deep learning approaches to estimate the Hawkes process intensity function \cite{mei2017neural, zuo2020transformer, zhang2020self}. 

In certain applications, however, the use of Hawkes processes for both inference and simulation with event sequence data can raise privacy concerns. In epidemiology, infection events data are used to model the infection dynamics and eventually inform mitigation measures, but the personal information of specific infected individuals should remain anonymous. In finance, modeling trading activity on financial securities is key for market makers, but one must be careful not to leak information about a specific client activity. Differential privacy (DP) \cite{dwork2006calibrating} provides theoretical guarantees that, from information that is available, potential adversary is not able to distinguish whether a particular individual was present in a dataset or not. %In comparison to the standard DP setting  databases with independent records, where an individual is represented by number of rows linked to its identifier, 
The standard DP setting assumes independence of the records in a database. However, 
event sequences generated by Hawkes processes come with the caveat that a single individual is identifiable not only through their own activity, but all the activities they have influenced or by which they have been influenced. %\cite{Dwork10pan-privatestreaming}. 
Thus a potential adversary would need to be prevented from identifying \textit{all activities linked to a single individual}. 

Overall, while Hawkes processes have a rich history going back decades, some of their properties, such as sample complexity for learning the parameters and releasing differentially private versions, are yet to be thoroughly analyzed. In this work we provide results on sample complexity of the estimates on Hawkes process. To the best of our knowledge, this is the first result of such kind. We also introduce private versions of the estimates and study their sample complexity.\\
In summary, our contribution is threefold:
\begin{enumerate}
    \item We obtain first sample complexity result for the estimates of Hawkes process (Section~\ref{sect_estimating_params}). Sample complexity is in terms of an upper bound on the minimum period of time Hawkes process needs to be observed for the estimator to reach within desired distance of true parameters. 
    \item We provide $(\gamma,\epsilon)$-random differentialy private versions of the estimates, and obtain sample complexity results in this context (Sections~\ref{sect_DP_modeling} and \ref{sect_estimating_params_private}).
    \item Based on the above two, we obtain theoretical result on the \textit{cost of privacy}, measured by additional time required for private estimates to reach within desired distance of true ones, in comparison to their non-private counterparts. We also access privacy-utility tradeoff empirically through a series of numerical experiments (Section~\ref{sect_experiments}).
\end{enumerate}

% The rest of the paper is structured as follows. Section~\ref{sect_related_work} provides an overview of related work on differential privacy and statistical features of Hawkes processes. Section~\ref{sect_prelim} formally introduces the setup of our study. Section~\ref{sect_estimating_params} present samples complexity results for the non-private Hawkes process parameters estimator. Section \ref{sect_DP_modeling} introduces appropriate notion of privacy in the context of sequential events data. In Section \ref{sect_estimating_params_private}, we introduce a private estimators, prove appropriate privacy guarantees and provide sample complexity analysis. In Section~\ref{sect_experiments}, we present experiments on the privacy-utility trade-off on both synthetic and real datasets. 

We refer the reader to the supplementary materials for proofs and further experiments.

\subsection{Related work}\label{sect_related_work}

%Random DP
Differential privacy was proposed by Dwork et al. \cite{dwork2006calibrating} and has since been employed in a wide range of applications %settings and fields of artificial intelligence and data processing
\cite{dwork2014algorithmic,price2019privacy,tambe2019artificial,hassan2019differential,tang2017privacy,zhu2017differentially,mendes2017privacy,abadi2016deep,mcmahan2017learning,ding2017collecting,cho2020contact}. In this work, we consider the definition of DP presented in the original work of Dwork et al. \cite{dwork2006calibrating} as well as the \textit{random differential privacy} setting proposed by Hall et al. \cite{hall2013random}. In the random differentially private setting, the data samples are seen as random draws from an unknown distribution and differential privacy is guaranteed with high probability with respect to the realizations of the underlying data generating distribution. Note that random differential privacy is different from approximate differential privacy (also known as $(\epsilon, \delta)$-differential privacy) \cite{dwork2014algorithmic} as well as probabilistic differential privacy \cite{lin2021privacy} in which the high probability argument is with respect to the randomness in the outcome of the randomized mechanism, %for a any pair of neighboring datasets sampled from the data generating distribution, 
and not over the realizations of the unknown data generating distribution.

%Correlated DP
Our work is closely related to the growing literature on differential privacy for non-i.i.d. data.  In a non-i.i.d. setting, presence of each data point may reveal information about presence of data points correlated to it in the dataset. Hence, removing one entry from the dataset may impact other entries as well. When analyzing privacy guarantees in this scenario it is thus not sufficient to simply remove one entry from the dataset, but any ``trace'' of the presence of such entry needs to be removed. %one needs to take into account not only the impact of the presence of each individuals data on the output of the private mechanism, but also 
Differential privacy in non-i.i.d. settings has been analyzed for time series data \cite{cao2018quantifying} and tabular data \cite{lv2018correlated,zhao2017dependent,zhang2020correlated,zhu2014correlated,song2017pufferfish}. In comparison, in this work we analyze privacy for modeling sequential events data, another setting where the records (i.e. occurrence of the events) are often correlated with one another. %This work is also related to the literature on learning and estimation in non-i.i.d. settings \cite{mohri2008rademacher, tikhomirov1981convergence, bentkus1997berry}. 

%Bounded DP
%Our work is also informed by the line of work on bounded-noise differentially private mechanisms. Such mechanisms are required when the setting of a problem puts constraints on the values the perturbed variable query may obtains. Holohan et al. \cite{holohan2020bounded} and Croft et al. \cite{croft2022differential}, among others, study the bounded Laplace mechanism. Sommer et al. \cite{sommer2021learning} investigate additive DP mechanisms with optimal utility-privacy trade-offs. Medina et al. \cite{munoz2021private} use bounded Laplace noise for solving linearly-constrained optimization problems in a DP manner. In this work, we also make use of bounded Laplace mechanism due to the fact that the parameters of interest in our model need to be non-negative.  

Our work on sample complexity and the privacy analysis relies on the line of work on statistical properties of Hawkes processes. In our analysis we take advantage of stationarity of Hawkes processes \cite{laub2015hawkes,gao2018functional}, and strong mixing property of the count series associated with a Hawkes process \cite{cheysson2021strong} which allows us to use analytical tools used for weekly dependent data \cite{tikhomirov1981convergence, bentkus1997berry}. Our analysis further relies on the moments of Hawkes process which have been studied by Cui et al. \cite{cui2020elementary} and Daw and Pender \cite{daw2018ephemerally, daw2018queues}, among others. Moreover, in order to analyze the sensitivity of the differentially private estimator, we turn to the immigration-birth representation of Hawkes processes and view Hawkes processes as branching processes \cite{athreya2004branching}. This allows us to build upon results obtained by Daw and Pender \cite{daw2018ephemerally} to find probabilistic upper bounds on the number of records (events) that are correlated. We note that our results are different in nature from the line of work on asymptotic behavior of Hawkes processes \cite{torrisi2022asymptotic, bordenave2007large, hillairet2021malliavin,cheysson2021strong}, in that we focus on obtaining error bounds and sample complexity results for estimation of Hawkes parameters. %\mg{mention strong mixing paper and others that give asymptotic results. }
Our theoretical estimation bounds (in the non-private setting) are most closely related to the regret bounds obtained by \cite{yang2017online} for an online maximum likelihood estimator of Hawkes processes. Our results, in contrast, give bounds directly on the estimation error of the Hawkes parameters.   

%While we provide non-asymptotic analysis of Hawkes processes, our work is related to asymptotic analysis of Hawkes 

In sample complexity literature, in general there are two types of results, one involves learning the densities and other is parameter learning. In this paper, we are focused on the latter namely parameter learning similar to ~\cite{krishnamurthy2020algebraic} and hence do not need to concern ourselves with proper vs improper learning. Note that while the homogenous case of Poisson process reduces to learning the parameter of an exponential distribution, it immediately jumps in difficulty the moment we add self-excitation. We can find a good overview of sample complexity in learning distributions in ~\cite{moitra2018algorithmic,diakonikolas2016learning}.

%%%%%%%%%%%%%%%%%%%%%%%%%%%%%%%%%%%%%%%%%%

\section{Preliminaries}\label{sect_prelim}
\paragraph{Hawkes Processes}
In this paper, we study Hawkes process with intensity given by 
\begin{align}\label{hawkes_def}
    \lambda^*_{t} = \mu + \int_{-\infty}^{t} \alpha e^{-\beta(t-s)}dN(s) = \mu+\sum_{t_{i}<t}\alpha e^{-\beta(t_{i}-t)},
\end{align}
where $\mu$ corresponds to the exogenous base rate of Hawkes process, i.e. the rate of the background Poisson process and $\sum_{t_{i}<t}\alpha e^{-\beta(t_{i}-t)}$ captures the indigenous component, i.e. the impact of previous events. The exponential impact function $\alpha e^{-\beta(t_{i}-t)}$ is parameterized by the excitation rate $\alpha$ and the decay rate $\beta$. In this work, we study regime $\alpha<\beta$ in a simplified variant of Hawkes parameter estimation where the decay rate $\beta$ is known to the learner. For simplicity, in this work we assume $\beta=1$. For the  Hawkes process $\mbf H(t)$ defined by \ref{hawkes_def}, associated count process $\mbf N(t) = (N(t): t\geq 0)$ is given by $N(t) = H(t)-H(0)$.

\paragraph{Differential privacy.} Differential privacy is a quantitative definition of privacy that measures the privacy leakage from publishing functions of private data. %Specifically, the original definition of DP guarantees that for any two neighboring datasets, i.e. datasets that differ in only one record, the probability of observing any outcome 
\begin{definition}\cite{dwork2014algorithmic}
A randomized mechanism $\mk M: \mc X^n \rightarrow \mc Y$ is $(\epsilon,\delta)$-differentially private if for any two datasets $\mc D, \mc D'\in \mc X^n$ that differ in only one entry, %for any possible outcome $\mc C$ of the mechanism, 
we have
\[
\forall \mc C\subseteq \mc Y,~\quad \P(\mk M(\mc D) \in \mc C) \leq e^{\epsilon} ~\P(\mk M(\mc D') \in \mc C) +\delta.
\]
When $\delta =0$, we say $\mk M$ satisfies $\epsilon$-differential privacy.
\end{definition}
Many differentially private mechanisms are built on the idea of adding deliberate noise to a non-private mechanism. In this work we employ one such mechanism, the Laplace mechanism, which is defined as
\[
M_{Lap}(\mc D, f(.), \epsilon) = f(\mc D) + \Lambda(0, \Delta_f/\epsilon),
\]
where $\Delta_f = \max_{\mc D,\mc D'}  \|f(\mc D) - f( \mc D' )\|_1$  is the $\ell_1$ sensitivity of $f$ with respect to change of a single query in the dataset, and $\Lambda$ denotes a Laplace random variable parametrized by the mean and scale. Note that here $\mc D,\mc D'\in \mc X^n$ are two neighboring (adjacent) datasets, meaning that they differ in only one entry.

Since the introduction of $(\epsilon,\delta)$-differential privacy, other closely notions of differential privacy have been introduced \cite{mironov2017renyi,dwork2016concentrated,hall2013random}. This work makes use of a relaxation of the standard $\epsilon$-differential privacy definition called \textit{random differential privacy}.
\begin{definition}\label{def_random_dp}
A randomized mechanism $\mk{M}: \mc X^n\to\mc Y$ is $(\epsilon,\gamma)$-randomly differentially private if
\begin{align}
    \P\Big( \forall \mc C\subset\mc Y, \quad \mathbb{P}\left(\mk M(\mc D)\in \mc C \right)\leq e^{\epsilon} \P\left(\mk M(\mc D')\in \mc C \right) \Big)\geq 1-\gamma
\end{align}
where the inner probability is over the randomness of the mechanism, and the outer probability is over neighbouring datasets $\mc D, \mc D'\in \mc X^n$ drawn from distribution $P$ on the space $\mc X^n$.
\end{definition}

%\paragraph{Laplace Mechanism}
%\paragraph{Bounded Laplace Mechanism}
%\paragraph{Random differential privacy}

%%%%%%%%%%%%%%%%%%%%%%%%%%%%%%%%%%%%%%%%%%

\section{Estimating parameters of Hawkes process}\label{sect_estimating_params}

%\mg{Add: some background on common estimators in the literature, }

In order to estimate parameters $\mu$ and $\alpha$ of a Hawkes process $\mbf H(t)$ defined by \ref{hawkes_def}, we study the discrete time count data time series (counts series for short) associated with $\mbf H(t)$, generated by the count measure on consecutive intervals of size $\Delta$. We denote the count series of $\mbf H(t)$ with interval size $\Delta$  by %$\mb{N}_K^{\Delta} \coloneqq 
$\{Y_{i}(\Delta) = \mbf N(i\Delta)- \mbf N((i-1)\Delta) \}_{i=1}^K$ where $\mbf N(t)$ is the count process associated with $\mbf H(t)$, $K=T/\Delta$, and $T$ represents the length of the observation period.

Let $\lambda_{\infty} \coloneqq \lim_{t\to \infty} \E[\lambda^*_t]=\frac{\mu}{1-\alpha}$. Assuming that the Hawkes process under consideration has started at $-\infty$, then an any time during the observation period $t\in [0, T]$, we have $\E[\lambda^*_t] = \lambda_{\infty}$. Under this standard \textit{stationarity} assumption (see e.g. Section 2.1 of \cite{gao2018functional} ), 
for the count series $Y_1({\Delta}), Y_2({\Delta}),\cdots, Y_{K}({\Delta})$ we have \cite{daw2018queues}:%(as a consequence of e.g. Proposition 2.2 of \cite{daw2018queues}) 
\begin{align} 
    &\eta \coloneqq\E[Y_i(\Delta)]= \frac{\mu \Delta}{(1-\alpha)} \label{eq_stationary_mean_hawkes}\\ 
    &\sigma^2 \coloneqq\text{Var}[Y_i(\Delta)] = \frac{\mu\Delta}{(1-\alpha)^3} +\frac{\alpha^2 \mu \left(1-e^{-2(1-\alpha)\Delta}\right)}{2(1-\alpha)^4}-\frac{2\alpha\mu \left(1-e^{-(1-\alpha)\Delta}\right)}{(1-\alpha)^4} \label{eq_stationary_var_hawkes}
\end{align}
for all $i\in {1, 2,\cdots, K}$. The detailed derivation of equations \eqref{eq_stationary_mean_hawkes} and \eqref{eq_stationary_var_hawkes} is provided in Supplementary Material, Section~\ref{appendix_calculation_mean_var_Hawkes_stationary}.

%\mg{either here or later the weak correlation and strong mixing properties need to be mentioned}. 

%Our study focuses on count series associated to Hawkes process. 
%More precisely, we split interval up until time $t=K\cdot\Delta$ into $K$ bins of size $\Delta$ and observe the number of events occuring in each bin. For the bin size $\Delta>0$ and $i\geq 1$, we introduce
%\begin{align} \label{def_counting_series}
%    Y_{i}=N((i-1)\cdot\Delta,i\cdot\Delta],
%\end{align}
%i.e. $Y_{i}$ represents the number of events of Hawkes process between $i-1$ and $i$. 

%Let $\eta \coloneqq \E [Y_i(\Delta)]$ and $\sigma^2 \coloneqq \mathbf{Var}[Y_i^{\Delta}]$. %Note in particular that $\eta$ does not have dependency on $\Delta$ whilst $\sigma^{2}(\Delta)$ does. 
%Then from (\ref{eq_stationary_mean_hawkes}) and (\ref{eq_stationary_var_hawkes}) we have
%\begin{align}
%    \eta &= \frac{\mu}{(1-\alpha)},\\
%    \sigma^2(\Delta) &= \frac{\mu}{(1-\alpha)^3}\Delta +\frac{\alpha^2 \mu}{2(1-\alpha)^4}\left(1-e^{-2(1-\alpha)\Delta}\right)-\frac{2\alpha\mu}{(1-\alpha)^4}\left(1-e^{-(1-\alpha)\Delta}\right).\\ \label{eq_sigma_Delta_def}
%\end{align}
%
%
%We will often write $\sigma^2$ and $Y_{i}$ instead of $\sigma^2(\Delta)$ and $Y_{i}^{\Delta}$ for brevity. For $\Delta$ large enough, $\sigma^2(\Delta)/\Delta$ gets arbitrary close to $\mu/(1-\alpha)^3$. Thus, given estimates of $\eta$ and $\sigma^{2}(\Delta)$, it is straightforward to recover $\mu$ and $\alpha$. Let $T=K\cdot\Delta$ denote the length of the observation period consisting of $K$ bins of size $\Delta$. We propose the following sample estimates:
%
From equations \eqref{eq_stationary_mean_hawkes} and \eqref{eq_stationary_var_hawkes} one can compute $\mu$ and $\alpha$ given the values of $\mu$ and $\sigma^2$. Therefore,
by deriving the \emph{sample mean $\hat \eta$} and the \emph{sample variance} $\hat \sigma^2$ of $Y_i$, one can obtain estimates for $\hat{\mu}$ and $\hat{\alpha}$. 
%
%\begin{align} 
%    \hat{\eta} = \frac{1}{K}\sum_{i=1}^{K}\frac{Y_{i}}{\Delta}, \quad
%    \hat{\sigma}^{2} = \frac{1}{K-1}\sum_{i=1}^{K}\left(Y_{i}-\bar{Y}_{K}\right)^{2},\quad
%    \hat{\alpha} = 1-\sqrt{\hat{\eta}/\left(\frac{\hat{\sigma}^{2}}{\Delta}\right)}, \quad
%    \hat{\mu} = %\hat{\eta}(1-\hat{\alpha}).
%\end{align}
%
Note that the random variables $Y_i(\Delta)$ are not independent. However, we use the strong mixing property of stationary Hawkes processes and their associated count series \cite{cheysson2021strong} which enables us to employ standard results on statistical properties of weakly dependent random variables \cite{tikhomirov1981convergence}, and thus obtain sample complexity results. We present our sample complexity results in Section~\ref{sec_sample_complexity_nonprivate} below.

\subsection{Sample complexity of the estimator} \label{sec_sample_complexity_nonprivate}

In this section we obtain an upper bound on the minimum period of time the Hawkes process needs to be observed (i.e. the minimum length of sequence) required for the estimator to reach within a desired distance of the true parameters of the Hawkes process. Here, we abuse the word ``sample'' and call this requirement the \textit{sample complexity}. To the best of our knowledge our analysis is the first to study sample complexity for learning of Hawkes processes.

Let $\eta_{4}(\Delta) = \E[(N_\Delta)^4]$ denote the fourth moment of $N_\Delta$ in the stationary regime (see e.g. \cite{cui2020elementary} or \cite{daw2018ephemerally}).  Let $\Phi$ denote CDF of standard normal distribution.
For a fixed bins size $\Delta$, our sample complexity result determines minimal observation period $T=K\Delta$ that is needed in order to achieve desired precision with desired probability.

\begin{theorem}\label{thm_non_private_params}
Let $\hat\mu$ and $\hat \alpha$ be the estimates of parameters $\mu_{\text{lower}}<\mu<\mu_{\text{upper}}$ and $\alpha^{\text{lower}}<\alpha<\alpha^{\text{upper}}$ of Hawkes process started from stationarity. Let $\Psi(\cdot)$ denote the inverse CDF of the standard normal distribution. Let $T=K\Delta$ denote the length of time the Hawkes process is observed, divided into $K$ intervals of size $\Delta$ such that $\Delta > \frac{4C_{9}\mu^{\text{upper}}}{(1-\alpha^{\text{upper}})^4\xi}$ for some $0<\xi<\frac{C_{9}\mu_{\text{lower}}}{6}$ where $C_{9}=\max\{ \frac{8}{\mu_{\text{lower}}(1-\alpha_{\text{upper}})}, 1+\frac{8\mu_{\text{upper}}}{\mu_{\text{lower}}(1-\alpha_{\text{upper}})^2}+\frac{4}{3(1-\alpha_{\text{upper}})} \}$.
Then, if 
\begin{align}\label{condition_T_nonprivate}
T\geq \frac{\sigma^2}{\xi} \max\Big\{ \frac{C_{9}^{2}\Psi(1-\frac{\delta}{8})^{2}}{\xi\Delta},
\frac{9C_{9}^{2}\Psi(1- \frac{\delta}{16})^2(\eta_4 - \sigma^2)}{\xi\Delta},
 3C_{9}\Psi( 1-\frac{\delta}{16}) ^2, 
 \frac{24C_{9}}{\delta}\Big\},
\end{align}
for some $0<\delta\leq 1$, we have $\mathbb{P}\left( \left\vert \alpha - \hat{\alpha}\right\vert >\xi\right)\leq \delta$ and $\mathbb{P}\left(\left\vert \mu-\hat{\mu} \right\vert>\xi\right)\leq \delta$.
%
%\begin{align}
%    \mathbb{P}\left( \left\vert \alpha - \hat{\alpha}\right\vert >C_{5}\cdot\xi\right)\leq 2\delta\quad \text{and} \quad
%    \mathbb{P}\left(\left\vert \mu-\hat{\mu} \right\vert>C_{6}\cdot\xi\right)\leq 2\delta.
%\end{align}

%\mg{also, is this correct that only one term has dependence on $\delta$? just double checking.} \textcolor{red}{All of them have dependence on $\delta$, for the $3$ of them it is within $\Phi$. Also all have dependence on $\Delta$, as $\sigma^2$ is a function of $\Delta$. We had previously a version where we write $\sigma^{2}(\Delta)$ instead of $\sigma^2$ but then decided to change back to $\sigma^2$. Potentially can go back to $\sigma^{2}(\Delta)$?}

%\mg{let us discuss where the $\xi^2$ terms come from and if they can be tightened.}\textcolor{red}{They come from Berry Essen, in CLT we have scaling over $\sqrt{K}$, and so for a given $\delta$ Berry Essen requires $\sqrt{K}>\text{Something}\cdot\frac{1}{\xi}$. Not possible to tighten with proofs we have.}
\end{theorem}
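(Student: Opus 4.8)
The plan is to reduce the theorem to two concentration statements---one for the sample mean $\hat\eta=\tfrac1K\sum_{i=1}^K Y_i(\Delta)$ and one for the sample variance $\hat\sigma^2$---and then transfer these back to $\hat\mu,\hat\alpha$ through the inverse of the moment map. First I would make the estimators explicit: equations \eqref{eq_stationary_mean_hawkes} and \eqref{eq_stationary_var_hawkes} define a smooth map $(\mu,\alpha)\mapsto(\eta,\sigma^2)$ on the box $[\mu_{\text{lower}},\mu_{\text{upper}}]\times[\alpha_{\text{lower}},\alpha_{\text{upper}}]$, and $\hat\mu,\hat\alpha$ are obtained by applying its inverse to $(\hat\eta,\hat\sigma^2)$. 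The first substantive step is to show this inverse is Lipschitz on the parameter box: by bounding the partial derivatives of the forward map and inverting its Jacobian, I would establish an inequality of the form $|\alpha-\hat\alpha|+|\mu-\hat\mu|\le c_1|\hat\eta-\eta|+c_2|\hat\sigma^2-\sigma^2|$ with $c_1,c_2$ controlled by the quantities collected into $C_{9}$. The side conditions $\Delta>\tfrac{4C_{9}\mu^{\text{upper}}}{(1-\alpha^{\text{upper}})^4\xi}$ and $\xi<\tfrac{C_{9}\mu_{\text{lower}}}{6}$ are exactly what is needed to keep the Jacobian nondegenerate and to guarantee that the perturbed point $(\hat\eta,\hat\sigma^2)$ remains inside the region where the inverse is defined, so that it suffices to force $|\hat\eta-\eta|$ and $|\hat\sigma^2-\sigma^2|$ below thresholds of order $\xi/C_{9}$.

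Next I would establish the two concentration bounds. Because the $Y_i(\Delta)$ are not independent, I would invoke the strong mixing property of the count series of a stationary Hawkes process \cite{cheysson2021strong} together with the Berry--Esseen-type central limit theorem for weakly dependent sequences \cite{tikhomirov1981convergence,bentkus1997berry}. For the mean this yields $\P(|\hat\eta-\eta|>t)\le 2\bigl(1-\Phi(t\sqrt{K}/\sigma_\infty)\bigr)+r_K$, where $\sigma_\infty^2=\mathrm{Var}(Y_1)+2\sum_{k\ge1}\mathrm{Cov}(Y_1,Y_{1+k})$ is the long-run variance (which the mixing bound controls in terms of $\sigma^2$) and $r_K$ is the mixing/Berry--Esseen remainder; requiring the Gaussian term to be at most a fraction of $\delta$ produces a lower bound on $K=T/\Delta$ matching the first entry of the $\max$ in \eqref{condition_T_nonprivate}. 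For the variance I would write $\hat\sigma^2$ through the empirical second moment and apply the same theorem to the weakly dependent array $\{(Y_i(\Delta))^2\}$; the relevant asymptotic variance is governed by the fourth moment $\eta_4(\Delta)$, which is precisely what produces the factor $(\eta_4-\sigma^2)$ in the second entry.

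Finally I would assemble the pieces by a union bound, allocating the total failure probability $\delta$ among the mean event, the variance event, and the remainder terms---this is the source of the $\delta/8$ and $\delta/16$ splits. Controlling the Berry--Esseen/mixing remainder below its share of $\delta$ is what contributes the last two entries $3C_{9}\Psi(1-\tfrac{\delta}{16})^2$ and $24C_{9}/\delta$, since these terms decay only polynomially in $K$ and must be driven down separately from the Gaussian tails; taking the largest of the resulting lower bounds on $T$ gives condition \eqref{condition_T_nonprivate}, and the Lipschitz inequality then converts $|\hat\eta-\eta|,|\hat\sigma^2-\sigma^2|\le \xi/C_9$ into $|\alpha-\hat\alpha|\le\xi$ and $|\mu-\hat\mu|\le\xi$. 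I expect the main obstacle to be the variance estimate: establishing concentration of $\hat\sigma^2$ for a strongly mixing, non-Gaussian sequence requires uniform control of the fourth-order moment structure and of the long-run variance of $\{(Y_i)^2\}$, and the mixing-dependent remainder is both harder to bound and more delicate to budget against $\delta$ than in the mean case. Tracking every constant through the Lipschitz inversion so that they collapse into the single explicit $C_{9}$ is the accompanying bookkeeping difficulty.
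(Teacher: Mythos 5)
Your overall architecture matches the paper's: reduce the claim to concentration of the sample mean $\hat\eta$ and the sample variance $\hat\sigma^2$, prove those via the strong mixing property of the count series and the Berry--Esseen theorem for weakly dependent sequences \cite{tikhomirov1981convergence}, and transfer back to $(\hat\mu,\hat\alpha)$ by a Lipschitz-type bound on the inverse of the moment map. (The paper carries out that last step by direct algebra on $\vartheta=\eta/\Delta=\mu/(1-\alpha)$ and $\omega=\sigma^2/\Delta=\mu/(1-\alpha)^3$ rather than by inverting a Jacobian, and the thresholds it actually needs are $|\hat\eta-\eta|,|\hat\sigma^2-\sigma^2|\le \xi\Delta/C_9$, the extra $\Delta$ coming from this rescaling.) Your reading of the roles of the side conditions on $\Delta$ and $\xi$, and of the first two entries of the max in \eqref{condition_T_nonprivate}, is accurate.

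The gap is in the variance step. You propose to apply the dependent Berry--Esseen theorem to $\{(Y_i(\Delta))^2\}$ and then assert that the last two entries, $3C_9\Psi(1-\frac{\delta}{16})^2$ and $\frac{24C_9}{\delta}$, arise from ``controlling the Berry--Esseen/mixing remainder.'' That is not where they come from, and the argument as you describe it would not produce them. The paper's key step is the exact decomposition
\begin{align*}
\hat{\sigma}^2 - \sigma^2 \;=\; \frac{1}{K}\sum_{i=1}^{K}\bigl[(Y_{i}-\eta)^2-\sigma^2\bigr] \;-\; \Bigl( \frac{1}{K}\sum_{i=1}^{K}Y_{i}-\eta \Bigr)^{2} \;+\; \frac{\hat{\sigma}^2}{K},
\end{align*}
after which the three pieces are budgeted separately: the first by Berry--Esseen on the centered squares (second entry, carrying the $(\eta_4-\sigma^2)$ factor); the second by Berry--Esseen on the mean again but at the square-root threshold $\sqrt{\xi\Delta/(3C_9)}$ --- which is why the third entry scales like $\xi^{-1}$ rather than $\xi^{-2}$ yet still involves a Gaussian quantile; and the third by Markov's inequality applied to the bias $\hat\sigma^2/K$ of the unbiased sample variance, which is the sole source of the $1/\delta$ term. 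The genuine Berry--Esseen remainder $O\bigl(K^{-\frac{1}{2}\frac{a-1}{a+1}}\bigr)$ is disposed of separately by taking the mixing exponent $a$ large (possible because the exponential kernel has all moments, so the count series mixes faster than any polynomial rate) and contributes nothing to the condition on $T$. Without the decomposition above, your plan has no mechanism for generating either the $1/\delta$ entry or the differently scaled third entry, so the variance concentration would not close as stated.
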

\begin{proof}[Proof sketch]

In order to prove $\mathbb{P}\left( \left\vert \alpha - \hat{\alpha}\right\vert >\xi\right)\leq \delta$ and $\mathbb{P}\left(\left\vert \mu-\hat{\mu} \right\vert>\xi\right)\leq \delta$, we show that it suffices to have $\mathbb{P} \big(| \hat\eta - \eta | >\frac{\xi \Delta}{C_{9}}\big)<\delta/2$ and $\mathbb{P} \big(| \hat\sigma^2 - \sigma^2 | >\frac{\xi \Delta}{C_{9}}\big)<\delta/2$. %Keeping in mind that $T=K\Delta$ and , 
The strong mixing property of Hawkes processes allows us to employ Berry-Essen theorem for weakly dependent random variables \cite{tikhomirov1981convergence} to show that the condition $\mathbb{P} \big(| \hat\eta - \eta | >\frac{\xi \Delta}{C_{9}}\big)<\delta/2$ on the sample mean is satisfied by the first term in condition \eqref{condition_T_nonprivate}. We now turn to sample variance. We have $\hat{\sigma}^2 - \sigma^2 =
\frac{1}{K}\sum_{i=1}^{K}[(Y_{i}-\eta)^2-\sigma^2] - ( \frac{1}{K}\sum_{i=1}^{K}Y_{i}-\eta )^2+\frac{\hat{\sigma}^2}{{K}}$. Therefore, to guarantee $\mathbb{P} \big(| \hat\sigma^2 - \sigma^2 | >\frac{\xi \Delta}{C_{9}}\big)<\delta/2$, it is sufficient to show 
\begin{align}%\label{condition_sample_var_nonprivate}
    &\mathbb{P} \big(\big|\frac{1}{K} \sum\nolimits_{i=1}^{K}(Y_{i}-\eta)^2-\sigma^2\big| >\frac{\xi \Delta}{3C_{9}}\big)<\frac{3\delta}{16},\label{condition_nonprive_2}\\
    &\mathbb{P} \big(\big|\frac{1}{K} \sum\nolimits_{i=1}^{K}Y_{i}-\eta \big|^2 >\frac{\xi \Delta}{3C_{9}}\big)<\frac{3\delta}{16},\label{condition_nonprive_3} \\
    &\mathbb{P} \big(\frac{\hat\sigma^2}{K} >\frac{\xi \Delta}{3C_{9}}\big)<\frac{\delta}{8}.\label{condition_nonprive_4}
\end{align}
 %$\end{align} 
By employing Berry-Essen theorem for weakly dependent RVs as well as Markov's inequality, we show that conditions \eqref{condition_nonprive_2}, \eqref{condition_nonprive_3}, and \eqref{condition_nonprive_4} are respectively satisfied by the second, the third, and the fourth term in condition \eqref{condition_T_nonprivate}. % of Theorem \ref{thm_non_private_params}.
The detailed proof of Theorem \ref{thm_non_private_params} is provided in section \ref{appendix_non_private_estimators} of the supplementary materials.
\end{proof}

\section{DP modeling of sequential events data}\label{sect_DP_modeling}
%\subsection{DP modeling of sequential events data}

We now discuss differentially private modeling of sequential events data. Consider a sequence of events $S(t) = \{e_i = (x_i,t_i)| t_i<t\}$. We assume the occurrence of the events is governed by a Hawkes process defined in \eqref{hawkes_def}. In this paper, we study the setting where $t_i$ denote the timestamp of the $i$-th event and $x_i$ contains the identity of the individual associated with that event. 
We aim to learn the true parameters of the underlying Hawkes process while preserving the privacy of the individuals whose data (events) are present in the stream. That is, we want to make it impossible for an adversary to infer with certainty whether $e_i = (x_i, t_i)$ is present the sequence used to learn the learning algorithm $\mk M$ by observing the output of $\mk M$ (i.e. the estimates of the Hawkes parameters). 
%In this work we focus on preserving the privacy of individuals whose data is used to learn the underlying Hawkes process governing the occurrence of the events.

%The self-exciting nature of Hawkes processes suggests that many events (records) in the sequence can be related to one another and the presence of one record, may reveal some information regarding the presence of other events in the sequence. %This invokes the adoption of 

%The following definition represents adjusted Definition 2.2 of \cn \url{https://www.cs.toronto.edu/~toni/Papers/panprivacy.pdf}.
%
The formulation of Hawkes process \ref{hawkes_def} implies that the occurrence of any event $j$ may be a result of  an \textit{immigration} (due to the background Poisson process) or birth of a child of any of the previous events (indigenous excitation), with probabilities depending on $\mu$, $\alpha$, $\beta$, and the time passed since the previous events. In our setting, it is reasonable to think of a parent-child relation between two events in the event sequence as a result of some type of real-world relation between the two individuals associated with those events. Let us define a \emph{cluster} as a group of individuals whose events forms a \emph{tree} of parent-child relations.
If a cluster is known to an adversary, the presence of a cluster member's data in the sequence may be revealed  by the presence of not only the event directly associated with them, but also the events associated with other individuals in the same cluster. 

%to protect the privacy of each individual in the one needs to define ``presence'' of an individual's data in the stream as not only the event directly associated to them, but also the events associated with the individuals
%whether the relation between events leaks information regarding their presence in the sequence or not depends on if the adversary has knowledge of such relations. 

In light of this discussion, let us  define \emph{neighboring sequences} of events in this setting, which will inform the adoption of the appropriate notion of privacy for sequential events data problems.
\begin{definition} \label{def_neighbouring_sequences}
Consider sequences of events in form of $S(t) = \{(x_i,t_i)| t_i<t\}$. % where $t_i$ is the timestamp of the $i$-th event and $x_i$ denotes all other information about the $j$-th event. 
Two sequences of events $S(t)$ and $S_{-j}(t)$ drawn from an unknown  point process observed up until some $t>0$ are ``neighboring'' if they differ only in the presence of %any number of 
the events that belong to the same cluster as event $j$.
\end{definition}

In this paper we consider two scenarios with regards to the knowledge of the clusters. In the first scenario, we assume that \emph{i)} the adversary has knowledge of the relations among the individuals whose events is present in the data and \emph{ii)} the learner is aware of the relations or at least the maximum size of clusters of connected individuals (equivalently, maximum tree size in the sequence). In this scenario we say the learner is \emph{relation-aware}. By contrast, in the second scenario, the learner is \emph{relation-unaware} in that it does not have full knowledge of the relations and therefore does not know the maximum size of the clusters. However, in this scenario we assume that the adversary may have greater knowledge of the relations compared to the learner, either now or at some point of time in future.
%either the adversary is fully aware of the relations or may acquire this knowledge at some point in the future (say after publication of some other dataset in the future). %Since we want to protect the privacy of the individuals at any time,  
Providing privacy guarantees in the second scenario requires obtaining probabilistic upper bounds on the size of the event trees in a sequence generated by a Hawkes process. 
We provide such bounds by viewing Hawkes processes as branching processes which results in the immigration-birth representation of Hawkes processes \cite{laub2015hawkes}. 
We state this result in Lemma \ref{lemma_trees_bound} which is an immediate consequence of properties of branching processes. See Section \ref{appendix_lemma_trees} in the supplementary materials for proof.

%Random differential privacy in our setting is then defined similar to Definition \ref{def_random_dp} except that rather than ``neighboring datasets $\mc D, \mc D'\in \mc X^n$'', we consider ``neighboring streams $ S(t),  S_{-i}(t)\in \mc S^t_{\mbf H}$. 

%\paragraph{Maximum tree size}\label{sec_max_tree_len}

%\mg{We also need to introduce immigrant/children representation. Also notion of a cluster or a tree or a family (which term is appropriate for the audience?)}

%Recall discussion from \textcolor{red}{Section ??} on the branching process representation of Hawkes process. In this representation a cluster of correlated events corresponds to a tree whose root represents an immigrant of the Poisson process, and the consequent events resulting from excitation represent its descendants. Note that events resulting from excitation of any descendant are also in the tree. In addition, the trees corresponding to different Poisson process immigrants are evolving independently. Thus, in order to bound the difference in the number of events of the two neighbouring sequences of events (as per Definition \ref{def_neighbouring_sequences}, it is enough to study the size of the largest tree.

\begin{lemma}[] \label{lemma_trees_bound}
Consider a Hawkes process $H(t)$ defined by intensity function \ref{hawkes_def} observed until time $T$. For any $0<\gamma\leq 1$ and %let $t$ be such that
%\begin{align}\label{eq_lower_bound_t_tree}
$
    T\geq \left( \frac{\mu\cdot e^2}{\gamma} \right)^{5/2}
$,
%\end{align}
with probability at least $1-\gamma$, all existing trees contain at most $\frac{3 \log T}{(1-\alpha)^2}$ individuals. % where
%\begin{align}
%    C_{1} = \frac{3}{(1-\alpha)^2}
%\end{align}
\end{lemma}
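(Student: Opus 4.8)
The plan is to control the total number of individuals in any single tree (cluster) by bounding the size of the largest tree among all trees that could be present by time $T$, and then to take a union bound over trees. In the immigration-birth representation, each immigrant (arriving according to the background Poisson process of rate $\mu$) is the root of an independent branching process whose offspring distribution is governed by the excitation kernel $\alpha e^{-\beta t}$ with $\beta=1$. The first step is therefore to describe the total progeny of a single such tree. Because the expected number of direct children of any event equals $\int_0^\infty \alpha e^{-t}\,dt = \alpha < 1$, each tree is a subcritical Galton-Watson process, and the expected total number of individuals in one tree is the geometric sum $\sum_{k\ge 0}\alpha^k = \frac{1}{1-\alpha}$. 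I would first make this precise and then obtain an exponential tail bound on the total progeny $Z$ of one tree, using subcriticality; the key quantity is a bound of the form $\P(Z \ge m) \le C\,\rho^{m}$ for some $\rho<1$ determined by $\alpha$, or equivalently a bound whose inversion yields a threshold of order $\log(1/\text{target probability})$ scaled by $(1-\alpha)^{-2}$. This is where I expect to lean on the branching-process results of Daw and Pender referenced in the related work.

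The second step is to bound the number of trees present by time $T$. The number of immigrants up to time $T$ is Poisson with mean $\mu T$, so with high probability the number of trees is at most a constant multiple of $\mu T$ (this is a standard Poisson tail estimate, and $\mu T$ will be polynomially large given the hypothesis $T \ge (\mu e^2/\gamma)^{5/2}$). Let $M$ denote this high-probability upper bound on the number of trees. The third step is a union bound: I want every tree to contain at most $m \coloneqq \frac{3\log T}{(1-\alpha)^2}$ individuals, so I require
\[
M \cdot \P\!\left(Z \ge m\right) \le \gamma/2,
\]
together with the Poisson-count event (that there are at most $M$ trees) holding with probability at least $1-\gamma/2$, so that the two bad events together have probability at most $\gamma$.

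The main technical work, and the step I expect to be the crux, is choosing the threshold $m$ so that the single-tree tail $\P(Z\ge m)$ is small enough to survive multiplication by $M \approx \mu T$. Concretely, I would need the per-tree tail to decay like $T^{-c}$ for a constant $c$ large enough that $\mu T \cdot T^{-c} \le \gamma/2$ under the stated lower bound on $T$; the specific form $m = \frac{3\log T}{(1-\alpha)^2}$ suggests that the progeny tail behaves roughly like $\exp\!\big(-(1-\alpha)^2 m / 3\big)$, i.e. $T^{-1}$ at this threshold, and the exponent $5/2$ and the factor $e^2$ in the hypothesis on $T$ are exactly what is needed to absorb the $\mu T$ factor and the constant in the progeny tail bound. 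I would first derive a clean exponential progeny tail with an explicit constant in the exponent (the delicate part, since the offspring law is not a simple Bernoulli but is tied to the Poisson number of children with an exponentially-distributed kernel), then verify that substituting $m=\frac{3\log T}{(1-\alpha)^2}$ and the hypothesized lower bound on $T$ makes the union bound at most $\gamma$. The remaining arithmetic — combining the Poisson count bound and the progeny tail bound and checking constants — is routine once the exponential progeny tail is in hand.
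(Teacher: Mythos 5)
Your plan matches the paper's proof essentially step for step: the paper likewise uses the immigration--birth representation with Poisson($\alpha$) offspring (cited from Daw and Pender), derives the exponential progeny tail $\P(W>a/(1-\alpha))\leq e^{2}\exp\bigl(-\tfrac{10a}{21}(1-\alpha)\bigr)$ from the exact Borel formula $\P(W=j)=e^{-\alpha j}(\alpha j)^{j-1}/j!$ via Stirling, and combines it with the Poisson($\mu T$) count of trees before substituting $a=\Theta\bigl(\tfrac{\log T}{1-\alpha}\bigr)$ and using the hypothesis on $T$ to absorb the $\log(\mu e^{2}/\gamma)$ term. The only (immaterial) difference is that instead of your union bound with a $\gamma/2$--$\gamma/2$ split, the paper conditions on the number of trees and sums the Poisson generating function exactly, obtaining $\exp\bigl(-\mu T e^{2}\exp(-\tfrac{10a}{21}(1-\alpha))\bigr)\geq 1-\gamma$ without needing a separate high-probability truncation of the immigrant count.
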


To account for the fact that the upper bound on the tree sizes in the sequence is probabilistic over the realizations of the underlying Hawkes process, we adopt random DP as the notion of privacy\footnote{Note that the standard DP definition can be recovered from the definition of random DP when $\gamma=0$, accounting for the first scenario with relation-aware learner.}. We formalize random DP for sequential events data generated from a point process as follows.

%The notion of DP that we adopt in this work is random DP. This is due to the fact that as it will become clear in the next sections, certain conditions hold only with high probability over the realizations of the Hawkes process, and not almost surely.

\begin{definition}\label{def_random_DP}
Let $\mc S^T_{\mbf P}$ be the set of all possible realizations of a temporal point process $\mbf P(t)$ 
%\mg{is this correct or should we say "family of point processes defined by fixed parameters"?} 
until time $T$. A randomized mechanism $\mk{M}: \mc S^T_{\mbf P} \to\mc Y$ is $(\epsilon,\gamma)$-randomly differentially private if
\[
    \P\Big( \forall \mc C\subset\mc Y, \quad \mathbb{P}\left(\mk M(S(T))\in \mc C \right)\leq e^{\epsilon} \P\left(\mk M(S_{-i}(T))\in \mc C \right) \Big)\geq 1-\gamma
\]
where the inner probability is over the randomness of the mechanism, and the outer probability is over neighboring streams $ S(T),  S_{-i}(T)\in \mc S^t_{\mbf P}$ drawn from point process $\mbf P(t)$ until time $T$.
\end{definition} 
%\textcolor{red}{I think the issue with the def as it stands is that we assume notion of neighbouring for general point process and we defined it for Hawkes above. Maybe better to be specific and and say 'all possible realization of Hawkes process $\mbf H(t)$ defined by \ref{hawkes_def} until time $T$'}

%In the rest of this section we consider the more general setting where the publisher does not have any knowledge of connections between the individuals ($\gamma>0$). Note that the case where both the adversary and the publisher have full knowledge of such connections could be regarded as a special case of this general case with $\gamma = 0$.

%%%%%%%%%%%%%%%%%%%%%%%%%%%%%%%%%%%%%%%%%%

%\section{Estimating parameters of Hawkes process}\label{sect_estimating_params}
%\input{nondp_estimator}\label{sect_Estimating_Hawkes}

%%%%%%%%%%%%%%%%%%%%%%%%%%%%%%%%%%%%%%%%%%

%\subsection{A differentially private estimator}\label{sec:dp_estimator}
%\input{dp_estimator}

%%%%%%%%%%%%%%%%%%%%%%%%%%%%%%%%%%%%%%%%%%

\section{Private Estimation parameters of Hawkes process}\label{sect_estimating_params_private}
%\subsection{A differentially private estimator}\label{sec:dp_estimator}
%\mg{something to double check: the sensitivity and the amount of noise cannot depend on the true parameters or our estimates. upper bounds and lower bounds that are data-independent are fine.}

Our DP estimator is based on the Laplace mechanism \cite{dwork2014algorithmic}, i.e. sampling from a Laplace distribution centered at the output of the non-private estimates. To calculate the variance of the Laplace distribution, one needs to calculate the sensitivity of the estimates $\hat \mu$ and $\hat\alpha$ with respect to the change in any two neighboring data streams generated by a Hawkes process with parameters $\mu$ and $\alpha$.

As mentioned above, the Hawkes parameters estimates $\hat \mu$ and $\hat\alpha$ can be uniquely calculated from the sample mean and sample variance of $Y_i(\Delta)$. Unfortunately, obtaining closed form expressions for $\hat\mu$ and $\hat\alpha$ in terms of $\hat\eta$ and $\hat\sigma^2$ is difficult, which makes it difficult to find the sensitivity of $\hat\mu$ and $\hat\alpha$ with respect to the input sequence. %  inputs $\{Y_i\}$ of the estimation procedure. %\textcolor{red}{Which mechanism? I guess we want to say: it is not possible to obtain closed form expressions of $\alpha$ and $\mu$ in terms of $\eta$ and $\sigma^{2}$} 
Due to post processing immunity of DP \cite{dwork2014algorithmic} and consequently random DP, if we provide privacy guarantees for the $(\hat \mu, \hat\alpha)$, same guarantees hold for $(\hat\eta,\hat\sigma^2)$. Hence, we focus on obtaining privacy guarantees for $\hat\eta$ and $\hat\sigma^2$.

\paragraph{Sensitivity}

%In this section, we study how adding or removing $B$ events from the sequence grouped within $K$ bins affects sample mean and variance. This will enable us to quantify the sensitivity with respect to neighbouring sequences as per Definition \ref{def_neighbouring_sequences}. 

%Our DP estimator is based on the Laplace mechanism \cite{dwork2014algorithmic}, i.e. sampling from a Laplace distribution centered at the output of the non-private estimates. To calculate the variance of the Laplace distribution, one needs to calculate the sensitivity of the sample mean and the sample variance with respect to any two neighboring data streams generated by a Hawkes process with parameters $\mu$ and $\alpha$. 
The sensitivity of estimates $\hat\eta$ and $\hat\sigma^2$ with respect to the change in the input sequence depends on $\mu$ and $\alpha$. Although we do not have access to $\mu$ and $\alpha$, we assume we know lower and upper bounds on their values: 
%
%We assume that we know upper bounds on $\mu$ and $\alpha$ for the estimation task. Note in particular that the upper bound on $\alpha$ is \textit{strictly less} than $1$.
\begin{align} \label{eq_params_bounds}
    0<\mu_{\text{lower}}\leq\mu \leq \mu_{\text{upper}}, \qquad
    0<\alpha_{\text{lower}}\leq \alpha \leq \alpha_{\text{upper}} <1. 
\end{align}
%Based on this bounds, for a fixed $0<\gamma\leq 1$ we introduce a number of constants in Appendix \ref{appendix_constants}.
The following lemma states the sensitivity of $\hat \eta$ and $\hat \sigma^2$ with respect to removing $B$ events from the sequence. The proof of this lemma can be found in Section \ref{appendix_sensitivity_proofs} of the supplementary materials.

\begin{lemma}\label{lemma_sensitivity}
Consider the count series $\{Y_i(\Delta)\}_{i=1}^K$ associated with a Hawkes process $\mbf H$ defined by intensity function \ref{hawkes_def}. Suppose that the maximum number of correlated events in the Hawkes process is $B$. Then, the maximal amount of change in the sample mean $\hat\eta$ of the counts is at most  $\frac{B}{K}$. Moreover, with probability at least $1-\gamma$, the maximal amount of change in the sample variance is upper bounded by $\frac{B^2}{K} + \frac{2B^{3/2}\sqrt{\Delta}C_{1}}{(K-1)}$ where $C_{1} = \sqrt{\frac{1.1\cdot\mu_{\text{upper}}}{(1-\alpha_{\text{upper}})^3}\cdot\frac{1}{\gamma}}$.
\end{lemma}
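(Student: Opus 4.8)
The plan is to treat the two estimators separately, handling the mean by an elementary averaging bound and the variance by an exact algebraic decomposition followed by a high-probability control of a single data-dependent cross term. Throughout, write $\{Y_i\}$ for the original count series and $\{Y_i'\}$ for a neighboring one obtained by deleting the (at most $B$) events of the cluster of some event $j$. Set $d_i = Y_i - Y_i' \ge 0$, $b = \sum_i d_i \le B$, and let $\mathcal I = \{i : d_i>0\}$ be the affected intervals; since each deleted event lies in exactly one interval, $|\mathcal I|\le b\le B$. For the mean, deleting the cluster changes the total count by exactly $b$, so $|\hat\eta - \hat\eta'| = \frac1K\big|\sum_i d_i\big| = \frac bK \le \frac BK$, which is the first claim.

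For the variance I would first record the exact identity obtained by expanding $\sum_i (Y_i-\bar Y)^2 = \sum_i Y_i^2 - \frac1K(\sum_i Y_i)^2$ for both series (with $\bar Y = \hat\eta$) and subtracting:
\[
(K-1)(\hat\sigma^2 - \hat\sigma'^2) = 2\sum_{i} d_i (Y_i - \bar Y) - \sum_i d_i^2 + \frac{b^2}{K}.
\]
The last two terms are purely combinatorial: because $0\le\sum_i d_i^2\le(\sum_i d_i)^2 = b^2$ and $|\mathcal I|\le K$, the quantity $-\sum_i d_i^2 + b^2/K$ is nonpositive with absolute value at most $b^2(1-\tfrac1K)$, maximized when the whole cluster falls in one interval; dividing by $K-1$ yields a contribution of at most $\frac{b^2}{K}\le\frac{B^2}{K}$, which is exactly the first term of the claimed bound.

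The remaining work is the cross term $\frac{2}{K-1}\big|\sum_{i\in\mathcal I} d_i (Y_i-\bar Y)\big|$. I would bound it by Cauchy--Schwarz over the affected indices,
\[
\Big|\sum_{i\in\mathcal I} d_i (Y_i-\bar Y)\Big| \le \Big(\sum_{i\in\mathcal I} d_i^2\Big)^{1/2}\Big(\sum_{i\in\mathcal I}(Y_i-\bar Y)^2\Big)^{1/2} \le B\,\Big(\sum_{i\in\mathcal I}(Y_i-\bar Y)^2\Big)^{1/2},
\]
using $\sum_i d_i^2\le b^2\le B^2$. It then suffices to show that, with probability at least $1-\gamma$, $\sum_{i\in\mathcal I}(Y_i-\bar Y)^2\le B\,\Delta C_1^2$, since then the cross contribution is at most $\frac{2B\cdot\sqrt{B}\,\sqrt\Delta C_1}{K-1}=\frac{2B^{3/2}\sqrt\Delta C_1}{K-1}$, matching the second term. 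For this I would invoke stationarity, $\E[(Y_i-\eta)^2]=\sigma^2$, together with the explicit bound $\sigma^2 \le \frac{1.1\,\mu_{\text{upper}}\Delta}{(1-\alpha_{\text{upper}})^3} = \Delta C_1^2\gamma$ read off from \eqref{eq_stationary_var_hawkes} (the leading term dominates and the factor $1.1$ absorbs the two lower-order corrections under the bounds \eqref{eq_params_bounds}), replace $\bar Y$ by $\eta$ at the cost of the already-controlled mean-deviation term via $(Y_i-\bar Y)^2\le 2(Y_i-\eta)^2+2(\eta-\bar Y)^2$, and apply Markov's inequality to the per-interval deviations at level $\sigma^2/\gamma\le\Delta C_1^2$.

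The main obstacle is precisely this last step. The set $\mathcal I$ is data-dependent and, for a worst-case-neighbor (sensitivity) statement, effectively adversarial, so a naive union bound over all $K$ intervals would insert a spurious factor of $K$ and destroy the $B^{3/2}/K$ rate. The resolution exploits that only $|\mathcal I|\le B$ intervals contribute, so the deviation bound is needed on the $B$ relevant intervals rather than on all $K$; this is exactly where the guarantee becomes probabilistic (random DP) and where the $1/\gamma$ enters $C_1$ through Markov's inequality. The delicate point, which I expect to require the most care, is making the ``at most $B$ affected intervals'' count interact correctly with the high-probability event so that a factor $\sqrt{B}$, rather than $\sqrt{K}$, survives in $\big(\sum_{i\in\mathcal I}(Y_i-\bar Y)^2\big)^{1/2}$.
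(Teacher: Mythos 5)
Your proposal is correct and takes essentially the same route as the paper: the mean bound is identical, and for the variance the paper likewise expands the difference of sample variances, bounds the combinatorial terms by $B^2/K$, and controls the cross term $\frac{2}{K-1}\sum_i b_i\lvert Y_i-\bar Y\rvert$ via a Chebyshev bound on the per-bin deviations, with the same constant $C_1$ arising from $\sigma^2\le 1.1\,\mu_{\text{upper}}\Delta/(1-\alpha_{\text{upper}})^3$ divided by $\gamma$ (Lemmas~\ref{lemma_high_prob_bins_event}--\ref{lem_app_var_sensitivity_aux}); your Cauchy--Schwarz step is a cosmetic variant of the paper's $\sum_i b_i\lvert Y_i-\bar Y\rvert\le(\sum_i b_i)\max_{i\in\mathcal I}\lvert Y_i-\bar Y\rvert$, and note that the $1.1$ in $C_1$ also requires the lower bound $\Delta>5\alpha^2/(1-\alpha)$ assumed in Lemma~\ref{lemma_high_prob_bins_event}, not only \eqref{eq_params_bounds}. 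The ``main obstacle'' you flag --- uniformity over the data-dependent set of affected bins, and replacing $\bar Y$ by $\eta$ --- is likewise left unaddressed by the paper (Lemma~\ref{lemma_high_prob_bins_event} is a per-set guarantee against the true mean, applied to the realized affected bins without further argument), so you are no worse off there, though your proposed $(Y_i-\bar Y)^2\le 2(Y_i-\eta)^2+2(\eta-\bar Y)^2$ patch would inflate the constant beyond the stated $C_1$.
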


%\begin{lemma} %\label{lemma_mean_sensitivity}
%If we add/remove the total of $B$ events from the sequence we observe up until time $T=K\cdot\Delta$, the scaled sample mean changes by $\frac{B}{K}$.
%\end{lemma}
%One line proof is provided in Appendix \ref{appendix_sample_mean_sensitivity} for completeness.

%\mg{consider cropping the data for this estimator}

%The following lemma requires an upper bound (\ref{eq_upper_bound_mean_discrepancy_B_bins}) on the discrepancy from the mean over $B$ bins. In Appendix \ref{appendix_cum_discrepancy_B_bins}, we prove that for large enough bin size $\Delta$, this bound holds with probability $1-\gamma$ (recall that $C_{1}$ has dependence on $\gamma$). Alternatively, one could enforce the bound by cropping the number of events in each bin, in the similar fashion as in \cn \textcolor{red}{What was ref in literature for Gaussian case?}

%\paragraph{Bounded sensitivity event \mg{what does this mean?}}

 %Thus, in order to be in the bounded sensitivity setting we need an upper bound on the maximal number of events that distinguish two neighbouring sequences. 
As previously discussed, the maximal number of events that distinguish two neighbouring sequences corresponds to the number of individuals in the largest tree in branching process representation. In the scenario with relation-unaware learner where we do not know $B$, recall that Lemma \ref{lemma_trees_bound} provides probabilistic upper bound on the size of the largest tree.
%In the relation-unaware scenario, as a consequence of Lemma \ref{lemma_trees_bound}, for $0<\gamma<1$ and $T$ such that  $T\geq \left( {\mu\cdot e^2}/{\gamma} \right)^{5/2},$ the size of the maximum tree is at most $C_{2}\log t$ with probability at least $1-\gamma$, where $C_{2}$ is defined in (\ref{eq_def_C2_C4}). 
%Thus, on $1-\gamma$ probability event the maximum number of distinguishing events we need to consider is $B=C_{2}\log T$. 
%By results from \ref{appendix_cum_discrepancy_B_bins}, for large enough bin size $\Delta>10\cdot{\frac{\alpha_{\text{upper}}}{2(1-\alpha_{\text{upper}})}}$ we have that condition (\ref{eq_upper_bound_mean_discrepancy_B_bins}) holds for $B=C_{2}\log T$ with probability at least $1-\gamma$. Thus, by the union bound argument, for $\gamma>0$ with probability at least $1-2\gamma$ we are in the setting where both Lemma \ref{lemma_mean_sensitivity} and Lemma \ref{lemma_var_sensitivity} hold for $B=C_{2}\log T$, i.e. we are in bounded sensitivity setting.

\begin{corollary}\label{cor_sensitivity_relation_unaware}
%It follows from Lemmas \ref{} and \ref{} that in the scenario with relation-unaware learner, we have ..
As a consequence of Lemma \ref{lemma_trees_bound} and Lemma \ref{lemma_sensitivity}, for $0<\gamma\leq 1/2$ and $T$ such that $T\geq \left( {\mu\cdot e^2}/{\gamma} \right)^{5/2},$ in the scenario with relation-unaware learner, sensitivity of the sample mean and variance are $\frac{C_{2}\log T}{K}$ and $\frac{(C_{2}\log T)^2}{K} + \frac{2(C_{2}\log T)^{3/2}\sqrt{\Delta}C_{1}}{(K-1)}$, respectively with probability at least $1-2\gamma$, where $C_{2} = \frac{3}{(1-\alpha_{\text{upper}})^2}$.
\end{corollary}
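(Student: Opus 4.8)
The plan is to chain the two preceding lemmas together through a union bound, using the monotonicity of the sensitivity expressions in the tree-size parameter $B$. First I would invoke the discussion preceding the corollary to identify $B$, the maximal number of correlated events that distinguishes two neighboring sequences, with the number of individuals in the largest tree of the immigration-birth representation. This identification is precisely what allows Lemma \ref{lemma_trees_bound} to control $B$ in the relation-unaware scenario, where $B$ is not known a priori but is itself a random quantity depending on the realization of the process.

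Next I would apply Lemma \ref{lemma_trees_bound} with the given $\gamma$ and the hypothesis $T\ge(\mu e^2/\gamma)^{5/2}$ to obtain an event $E_1$ on which every tree, and hence the largest one, contains at most $\frac{3\log T}{(1-\alpha)^2}$ individuals; by that lemma $\P(E_1)\ge 1-\gamma$. On $E_1$ I would then use $\alpha\le\alpha_{\text{upper}}$, so that $(1-\alpha)^{-2}\le(1-\alpha_{\text{upper}})^{-2}$, to upgrade the bound to $B\le \frac{3\log T}{(1-\alpha_{\text{upper}})^2}=C_2\log T$, matching the definition $C_2=\frac{3}{(1-\alpha_{\text{upper}})^2}$.

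With this high-probability bound on $B$ in hand, I would feed it into Lemma \ref{lemma_sensitivity}. That lemma gives the mean sensitivity deterministically as $B/K$, and the variance sensitivity as $\frac{B^2}{K}+\frac{2B^{3/2}\sqrt{\Delta}C_1}{K-1}$ on a further event $E_2$ with $\P(E_2)\ge 1-\gamma$ (the event controlling the magnitudes of the $Y_i$ that enter $C_1$). Both right-hand sides are increasing in $B$, so on $E_1\cap E_2$ I may substitute $B\le C_2\log T$ to obtain $\frac{C_2\log T}{K}$ for the mean sensitivity and $\frac{(C_2\log T)^2}{K}+\frac{2(C_2\log T)^{3/2}\sqrt{\Delta}C_1}{K-1}$ for the variance sensitivity. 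A union bound then gives $\P(E_1\cap E_2)\ge 1-2\gamma$, and the restriction $\gamma\le 1/2$ guarantees that this probability is nonnegative, which is exactly the stated conclusion.

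The step requiring the most care is the union bound, and in particular verifying that the two $\gamma$-level events arise from compatible sources of randomness: $E_1$ concerns the branching/tree structure while $E_2$ concerns the deviations of the count values $Y_i$, both taken with respect to the \emph{same} realization of the Hawkes process. One should also check that the $\gamma$ appearing inside $C_1$ in Lemma \ref{lemma_sensitivity} is the same $\gamma$ used in Lemma \ref{lemma_trees_bound}, since $C_1$ carries a $1/\gamma$ factor; this consistency is what makes a single union bound at shared level $\gamma$ legitimate and produces precisely the claimed $1-2\gamma$. The remaining work, substituting $C_2\log T$ for $B$ via monotonicity, is routine.
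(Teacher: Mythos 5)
Your proposal is correct and follows exactly the route the paper intends (the paper treats this corollary as immediate and does not write out a separate proof): bound the largest tree size by $C_2\log T$ via Lemma \ref{lemma_trees_bound} after replacing $\alpha$ by $\alpha_{\text{upper}}$, substitute this for $B$ in the monotone sensitivity expressions of Lemma \ref{lemma_sensitivity}, and take a union bound over the two $\gamma$-level events to get $1-2\gamma$. Your attention to the consistency of the $\gamma$ appearing in $C_1$ with the $\gamma$ of Lemma \ref{lemma_trees_bound} is a worthwhile check that the paper leaves implicit.
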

%\mg{lets define the constants the first place they are used, not in appendix.}
%\textcolor{red}{Done.}

\paragraph{DP sample mean and sample variance}
For a given $\epsilon>0$, we introduce the following DP version of the sample mean and the sample variance for the sequence observed until time $T=K\Delta$.
% OLD VERSION - changed to reflect absence of Delta scaling in eta
%\begin{align}\label{eq_DP_scaled_mean}
%    \hat{\eta}_{\text{private}} = \hat{\eta}+\Lambda\left(\frac{C_{2}\log{T}}{ {T}\cdot \epsilon}\right).
%\end{align}
\begin{align}
    &\hat{\eta}_{\text{private}} = \hat{\eta}+\Lambda\Big(\frac{C_{2}\log{T}}{ {K}\cdot \epsilon}\Big)\label{eq_DP_mean}\\
    &{\hat{\sigma}^{2}_{\text{private}}}
    = {\hat{\sigma}^{2}}+\Lambda\Big( \frac{C_{2}^{2}(\log T)^2 + 2C_{2}^{3/2}C_{1}\cdot\frac{K}{K-1}(\log T)^{3/2}\sqrt{\Delta}}{K\cdot\epsilon}\Big)\label{eq_DP_var}
\end{align}

\begin{lemma}\label{lemma_random_DP_mean_var}
The noisy mean estimator $\hat{\eta}_{\text{private}}$ is $(2\gamma,\epsilon)$-random-DP. Moreover, the noisy variance estimator ${\hat{\sigma}^{2}_{\text{private}}}$ is $(2\gamma,\epsilon)$-random-DP.
\end{lemma}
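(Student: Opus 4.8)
The plan is to reduce the statement to the classical Laplace-mechanism guarantee, applied conditionally on the high-probability sensitivity event supplied by Corollary~\ref{cor_sensitivity_relation_unaware}. First I would recall the pointwise density-ratio argument behind the Laplace mechanism. Writing $p_b$ for the density of $\Lambda(b)$ (scale $b$, mean $0$), for any fixed pair of neighboring streams $S(T), S_{-i}(T)$ and any output value $y$, the output density of $\hat\eta_{\text{private}}$ at $y$ is $p_b(y-\hat\eta(S(T)))$, and the reverse triangle inequality gives $p_b(y-\hat\eta(S(T)))/p_b(y-\hat\eta(S_{-i}(T))) \le \exp(|\hat\eta(S(T))-\hat\eta(S_{-i}(T))|/b)$. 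Integrating this ratio over an arbitrary measurable $\mc C$ shows that the inner inequality $\mathbb{P}(\mk M(S(T))\in\mc C) \le e^{\epsilon}\,\mathbb{P}(\mk M(S_{-i}(T))\in\mc C)$ holds simultaneously for \emph{all} $\mc C$ as soon as the realized change obeys $|\hat\eta(S(T))-\hat\eta(S_{-i}(T))| \le b\,\epsilon$, and symmetrically for $\hat\sigma^2$.

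Second, I would match the noise scales in \eqref{eq_DP_mean} and \eqref{eq_DP_var} to the sensitivity bounds of Corollary~\ref{cor_sensitivity_relation_unaware}: the scales are chosen precisely so that $b\,\epsilon$ equals the corresponding bound. For the mean, $b_{\text{mean}}\,\epsilon = C_{2}\log T / K$; for the variance, $b_{\text{var}}\,\epsilon = (C_{2}\log T)^2/K + 2(C_{2}\log T)^{3/2}\sqrt{\Delta}\,C_{1}/(K-1)$. Hence the activation condition $|\hat\eta(S(T))-\hat\eta(S_{-i}(T))|\le b_{\text{mean}}\,\epsilon$ from the first step is exactly the event that the realized change in $\hat\eta$ does not exceed the Corollary's sensitivity, and likewise for $\hat\sigma^2$.

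Third, I would invoke Corollary~\ref{cor_sensitivity_relation_unaware} to control the outer probability over the draw of the neighboring pair from the Hawkes process: with probability at least $1-2\gamma$, the realized change in the sample mean (resp. sample variance) is at most its stated sensitivity. On this event the bound from the first step activates and the inner $\epsilon$-DP inequality holds for every $\mc C$; since the event has outer probability at least $1-2\gamma$, Definition~\ref{def_random_DP} is met with parameters $(2\gamma,\epsilon)$, establishing both claims.

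The point requiring care is the clean separation of the two probability scopes that Definition~\ref{def_random_DP} entangles: the inner probability ranges only over the Laplace noise, whereas the $1-2\gamma$ guarantee is an \emph{outer} statement over realizations of the Hawkes process, and it is there that the probabilistic sensitivity bound lives—combining the tree-size bound of Lemma~\ref{lemma_trees_bound} (holding with probability $\ge 1-\gamma$) with the variance-change bound of Lemma~\ref{lemma_sensitivity} (also $\ge 1-\gamma$) through a union bound. Two subtleties must be verified: the Laplace scale is data-independent, so conditioning on the high-probability sensitivity event does not distort the mechanism's conditional law; and the density-ratio bound must hold uniformly over all measurable $\mc C$, which the pointwise argument of the first step delivers. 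I do not expect any genuinely hard estimate to remain once the sensitivity bounds of Corollary~\ref{cor_sensitivity_relation_unaware} are granted—the work is the bookkeeping of the inner and outer probabilities.
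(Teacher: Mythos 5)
Your proposal is correct and follows essentially the same route as the paper's proof: invoke Corollary~\ref{cor_sensitivity_relation_unaware} to get the sensitivity bounds on an event of outer probability at least $1-2\gamma$, and observe that on that event the Laplace mechanism with the scales in \eqref{eq_DP_mean} and \eqref{eq_DP_var} yields $\epsilon$-DP, hence $(2\gamma,\epsilon)$-random DP. The only difference is that you spell out the density-ratio argument for the Laplace mechanism, which the paper simply cites from \cite{dwork2014algorithmic}.
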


Proof of Lemma \ref{lemma_random_DP_mean_var} (see Section \ref{appendix_random_DP} in the supplementary material) relies on Corollary \ref{cor_sensitivity_relation_unaware} and standard results on Laplace mechanism. %the following two ingredients: 1) with probability at least $1-2\gamma$, the sensitivity is bounded, 2) in the bounded sensitivity regime, standard results on Laplace mechanism yield $\epsilon$-DP.

%\paragraph{DP sample variance}
%For a given $\epsilon>0$, we introduce the noisy version of sample variance,
% OLD VERSION - rescaled sigma
%\begin{align} \label{eq_DP_scaled_var}
%    \frac{\hat{\sigma}^{2}_{\text{private}}}{\Delta}
%    & = \frac{\hat{\sigma}^{2}}{\Delta}+\Lambda\left( \frac{C_{2}^{2}(\log T)^2 + 2C_{2}^{3/2}C_{1}\cdot\frac{K}{K-1}(\log T)^{3/2}\sqrt{\Delta}}{T\cdot\epsilon}\right)
%\end{align}

%\begin{lemma}\label{lemma_random_DP_var}
%Noisy estimate ${\hat{\sigma}^{2}_{\text{private}}}$ is $(2\gamma,\epsilon)$-random-DP.
%\end{lemma}

%The proof can be found in Appendix \ref{appendix_random_DP}, based on the similar reasoning as in the sample mean case.

%\textcolor{red}{Keeping close form now before we discuss}
\paragraph{DP estimation of the Hawkes parameters}
As previously discussed, $\hat\eta$ and $\hat{\sigma}^{2}$ uniquely define $\hat{\mu}$ and $\hat{\alpha}$ from system of equations \eqref{eq_stationary_mean_hawkes} and \eqref{eq_stationary_var_hawkes}. 
%\begin{align}
%    \hat{\alpha}_{\text{private}} = 1-\sqrt{\hat{\eta}_{\text{private}}/\left(\frac{\hat{\sigma}^{2}_{\text{private}}}{\Delta}\right)}, \qquad
%    \hat{\mu}_{\text{private}} = \hat{\eta}_{\text{private}}\cdot(1-\hat{\alpha}_{\text{private}}).
%\end{align}

\begin{theorem}\label{thm_alpha_mu_random_DP}
Let $\hat{\mu}_{\text{private}}$ and $\hat{\alpha}_{\text{private}}$ be the estimators for $\mu$ and $\alpha$ by solving system of equations \eqref{eq_stationary_mean_hawkes} and \eqref{eq_stationary_var_hawkes} with private estimates $\hat{\eta}_{\text{private}}$ and ${\hat{\sigma}^{2}_{\text{private}}}$. Then, this estimation mechanism is  $(2\gamma,2\epsilon)$-random-DP.
\end{theorem}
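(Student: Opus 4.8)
The plan is to combine the two workhorses of differential privacy—composition and post-processing immunity—but carried out pointwise over neighboring datasets so that the failure probability stays at $2\gamma$ rather than inflating. First I would fix an arbitrary neighboring pair $S(T), S_{-i}(T)$ drawn from the Hawkes process and condition on the data-level ``good event'' $G$ on which the sensitivity bounds of Corollary \ref{cor_sensitivity_relation_unaware} hold. The crucial observation is that a \emph{single} event $G$—namely that the largest tree contains at most $C_{2}\log T$ individuals (Lemma \ref{lemma_trees_bound}) and that the variance-sensitivity estimate of Lemma \ref{lemma_sensitivity} holds—simultaneously controls the $\ell_1$ sensitivities of both $\hat\eta$ and $\hat\sigma^2$, and satisfies $\P(G)\ge 1-2\gamma$. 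On $G$, the Laplace scales chosen in \eqref{eq_DP_mean} and \eqref{eq_DP_var} make $\hat\eta_{\text{private}}$ and $\hat\sigma^2_{\text{private}}$ each $\epsilon$-DP for this fixed pair, exactly as in the proof of Lemma \ref{lemma_random_DP_mean_var}.

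Next I would invoke basic composition for ordinary (fixed-dataset) differential privacy: since the Laplace noises in \eqref{eq_DP_mean} and \eqref{eq_DP_var} are drawn independently, the joint release $(\hat\eta_{\text{private}},\hat\sigma^2_{\text{private}})$ is $2\epsilon$-DP for every neighboring pair lying in $G$. Because this $2\epsilon$ guarantee holds on the \emph{same} event $G$ with $\P(G)\ge 1-2\gamma$—rather than on two separate good events, which a naive union bound would combine into a $4\gamma$ budget—the pair $(\hat\eta_{\text{private}},\hat\sigma^2_{\text{private}})$ is $(2\gamma,2\epsilon)$-random-DP in the sense of Definition \ref{def_random_DP}. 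Isolating this shared good event is the one step that genuinely needs care: it is what keeps the outer failure probability at $2\gamma$, and it is the reason Lemma \ref{lemma_random_DP_mean_var} deliberately states both estimators with the common parameter $2\gamma$ tied to Corollary \ref{cor_sensitivity_relation_unaware}.

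Finally I would appeal to post-processing immunity. The estimators $\hat\mu_{\text{private}}$ and $\hat\alpha_{\text{private}}$ are obtained by solving the deterministic system \eqref{eq_stationary_mean_hawkes}--\eqref{eq_stationary_var_hawkes} for $(\mu,\alpha)$ given $(\hat\eta_{\text{private}},\hat\sigma^2_{\text{private}})$, a map that is data-independent and does not touch the raw stream again. Since the standard post-processing argument applies verbatim inside the inner probability of Definition \ref{def_random_DP} for each fixed neighboring pair in $G$, the $2\epsilon$-DP guarantee transfers to $(\hat\mu_{\text{private}},\hat\alpha_{\text{private}})$ on $G$; taking the outer probability over the neighboring streams then yields $(2\gamma,2\epsilon)$-random-DP. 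I expect the main obstacle to be the composition step rather than the post-processing one: one must argue that random-DP composes with additive $\epsilon$ but \emph{non}-additive $\gamma$ in this instance, which rests entirely on the two sensitivity bounds of $\hat\eta$ and $\hat\sigma^2$ being governed by the same underlying event $G$.
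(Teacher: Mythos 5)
Your proposal is correct and follows essentially the same route as the paper: Laplace-mechanism privacy on the sensitivity good event, standard composition to get $2\epsilon$ for the pair $(\hat{\eta}_{\text{private}},\hat{\sigma}^2_{\text{private}})$, and post-processing immunity to transfer the guarantee to $(\hat{\mu}_{\text{private}},\hat{\alpha}_{\text{private}})$. Your explicit observation that both sensitivities are controlled by the \emph{same} event of probability at least $1-2\gamma$ (so the outer failure probability does not accumulate to $4\gamma$ under composition) is a point the paper's proof leaves implicit, and it is the right justification.
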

Theorem \ref{thm_alpha_mu_random_DP} is a consequence of the standard composition theorem \cite{kairouz2015composition} and the post processing immunity \cite{dwork2014algorithmic} of differential privacy. See Section \ref{appendix_random_DP} in the supplementary material.

\paragraph{Remark.} In the scenario with relation aware learner where $B$ is known, the term $C_2 \log T$ in equations \eqref{eq_DP_mean} and \eqref{eq_DP_var}
will be replaced by the known value $B$. Also the random privacy guarantees in Lemma \ref{lemma_random_DP_mean_var} and Theorem \ref{thm_alpha_mu_random_DP} will improve to $(\gamma, \epsilon)$ and $(\gamma, 2\epsilon)$, respectively.

\subsection{Sample complexity of the private estimator}

To provide a concrete privacy utility trade-off analysis of the DP estimation procedure presented in this section, we obtain sample complexity upper bounds for the private estimator described by equations \eqref{eq_DP_mean} and  \eqref{eq_DP_var}. By comparing these results with those of the non-private estimator, stated in Theorem \ref{thm_non_private_params}, one can quantify the cost of making our estimator differentially private in terms of the additional observation time required by the private estimator to reach within a desired distance of the true Hawkes parameters. 
To the best of our knowledge our analysis is the first to formalize the privacy utility trade-off for DP learning of Hawkes processes.% study sample complexity for either private or non-private learning of Hawkes processes.

\begin{theorem}\label{thm_private_params}
Consider the same setup and conditions as in Theorem \ref{thm_non_private_params}, except with private estimates $\hat{\mu}_{\text{private}}$ and $\hat{\alpha}_{\text{private}}$ as described in equations \eqref{eq_DP_mean} and \eqref{eq_DP_var}. In addition, let $C_{1} = \sqrt{\frac{1.1\cdot\mu_{\text{upper}}}{(1-\alpha_{\text{upper}})^3}\cdot\frac{1}{\gamma}}$ and $C_{2} = \frac{3}{(1-\alpha_{\text{upper}})^2}$. For the choice of $\Delta =c \log T$ for some constant $c$, if
\begin{align}%\label{condition_T_private}
&T\geq \frac{C_{9}^{2}\mu_{\text{upper}} }{(1-\alpha_{\text{upper}})^3\xi^2} \max\Big\{ \Psi(1-\frac{\delta}{16})^{2},~
9C_{9}^{2}\Psi(1- \frac{\delta}{32})^2(\eta_4 - \sigma^2) \Big\} \quad \text{and}\label{condition_dp_1}\\
 & \frac{T}{\log T}\geq \frac{c\mu_{\text{upper}} }{(1-\alpha_{\text{upper}})^3\xi} \max\Big\{
 3C_{9}\Psi( 1-\frac{\delta}{32}) ^2, ~
 \frac{48C_{9}}{\delta}\Big\}\quad \text{and}\label{condition_dp_2}\\
 &\frac{T}{(\log T)^{5/2}}>\frac{4\sqrt{c}C_{1}C_{2}^2C_{9}}{\epsilon\xi}\log \left(\frac{4}{\delta}\right)\label{condition_dp_3}
\end{align}
%\begin{align}\label{condition_T_private}
%T\geq\max\Big\{ \frac{C_{9}^{2}\Psi(1-\frac{\delta}{16})^{2}\sigma^2}{\xi^2\Delta},
%\frac{9C_{9}^{2}\Psi(1- \frac{\delta}{32})^2(\eta_4 - \sigma^2)}{\xi^2\Delta},
% \frac{3C_{9}\Psi( 1-\frac{\delta}{32}) ^2}{\xi}, 
% \frac{48C_{9}}{\delta\xi}\Big\},
%\end{align}
%
 %and in addition
%\begin{align}
%\frac{T}{(\log T)^{5/2}}>\frac{4\sqrt{c}C_{1}C_{2}^2C_{9}}{\epsilon\xi}\log \left(\frac{4}{\delta}\right),
%\end{align}

%\mg{I think the condition that $\Delta$ grows with $\log T$ only makes sense in the private regime. Here, perhaps we can actually replace $\Delta$ with $c\log T$. }

%\textcolor{red}{Previous version, rough bound $\Delta<t$}
%\begin{align}\label{eq_additional_cond_T_private}
%\frac{T}{(\log T)^4}>\frac{16C_{2}^4C_{1}^2C_{9}^2}{\epsilon^2\xi^2}\left(\log \left(\frac{4}{\delta}\right)\right)^2,
%\end{align}

%\textcolor{red}{In terms of $\Delta$ and general $B$}
%\begin{align}
%\frac{T}{\sqrt{\Delta}B^2}>\frac{4C_{1}C_{9}}{\epsilon\xi}\log \left(\frac{4}{\delta}\right),
%\end{align}
%\textcolor{red}{In terms of $\Delta$ and $t$ and for $B=C_{2}\log t$}

%\begin{align}
%\frac{T}{\sqrt{\Delta}(\log t)^2}>\frac{4C_{1}C_{2}^2C_{9}}{\epsilon\xi}\log \left(\frac{4}{\delta}\right),
%\end{align}
%
%
for some $0<\delta\leq 1$, then we have $\mathbb{P}\left(\left\vert\hat{\mu}_{\text{private}}-\mu\right\vert>\xi\right) \leq \delta$ and $\mathbb{P}\left(\left\vert\hat{\alpha}_{\text{private}}-\alpha\right\vert>\xi\right) \leq \delta$ for $(\gamma, 2\epsilon)$-random DP estimates $\hat{\mu}_{\text{private}}$ and $\hat{\alpha}_{\text{private}}$. % \mg{should we have the conditions in term of upper bounds on $\mu$ and $\alpha$?}
\end{theorem}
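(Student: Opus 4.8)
The privacy claim requires no new work: the estimator here is exactly the one analyzed in Theorem~\ref{thm_alpha_mu_random_DP}, so its random-DP guarantee is inherited directly. The content of the statement is therefore the utility (estimation-error) bound, and the plan is to reduce it to Theorem~\ref{thm_non_private_params}, splitting each error into a statistical fluctuation, handled exactly as in the non-private analysis, and an injected Laplace term, handled by a Laplace tail bound.

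Since $\hat\mu_{\text{private}},\hat\alpha_{\text{private}}$ are obtained from $(\hat\eta_{\text{private}},\hat{\sigma}^2_{\text{private}})$ by inverting \eqref{eq_stationary_mean_hawkes}--\eqref{eq_stationary_var_hawkes}, i.e.\ through the \emph{same} map that sends $(\hat\eta,\hat\sigma^2)\mapsto(\hat\mu,\hat\alpha)$, the $C_9$-Lipschitz reduction from the proof of Theorem~\ref{thm_non_private_params} applies verbatim, and it suffices to establish
\[
\mathbb{P}\Big(|\hat\eta_{\text{private}}-\eta|>\tfrac{\xi\Delta}{C_9}\Big)\le\tfrac{\delta}{2}
\qquad\text{and}\qquad
\mathbb{P}\Big(|\hat{\sigma}^2_{\text{private}}-\sigma^2|>\tfrac{\xi\Delta}{C_9}\Big)\le\tfrac{\delta}{2}.
\]
I would then write $\hat\eta_{\text{private}}-\eta=(\hat\eta-\eta)+\Lambda_\eta$ and $\hat{\sigma}^2_{\text{private}}-\sigma^2=(\hat\sigma^2-\sigma^2)+\Lambda_{\sigma^2}$, with $\Lambda_\eta,\Lambda_{\sigma^2}$ the Laplace variables of \eqref{eq_DP_mean}--\eqref{eq_DP_var}, split the threshold $\tfrac{\xi\Delta}{C_9}$ between the two parts, and reserve half of each failure budget for the noise. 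The statistical halves are controlled identically to Theorem~\ref{thm_non_private_params} (Berry--Esseen for the weakly dependent count series), the only change being that the per-channel budget shrinks from $\delta/2$ to $\delta/4$, which replaces each $\delta/8,\delta/16,24/\delta$ appearing in \eqref{condition_T_nonprivate} by $\delta/16,\delta/32,48/\delta$ respectively. Upper-bounding $\sigma^2$ by a constant times its leading term $\tfrac{\mu_{\text{upper}}\Delta}{(1-\alpha_{\text{upper}})^3}$ and substituting $\Delta=c\log T$ into \eqref{condition_T_nonprivate}, the two terms carrying a $1/\Delta$ lose their $\Delta$ and collapse into \eqref{condition_dp_1}, whereas the two terms linear in $\sigma^2$ retain one factor of $\Delta=c\log T$ and become the $T/\log T$ conditions \eqref{condition_dp_2}.

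For the noise halves I would invoke the Laplace tail $\mathbb{P}(|\Lambda|>t)=e^{-t/b}$ with $b$ the scales in \eqref{eq_DP_mean}--\eqref{eq_DP_var} (coming from Corollary~\ref{cor_sensitivity_relation_unaware}), and require it to drop below $\delta/4$ at $t\asymp\tfrac{\xi\Delta}{C_9}$, i.e.\ $b\lesssim\tfrac{\xi\Delta}{C_9\log(4/\delta)}$. The variance scale dominates; its two pieces $\tfrac{C_2^2(\log T)^2}{K\epsilon}$ and $\tfrac{2C_2^{3/2}C_1(\log T)^{3/2}\sqrt{\Delta}}{(K-1)\epsilon}$ are both, after setting $\Delta=c\log T$ and $K=T/\Delta$, dominated by a single $C_1C_2^2(\log T)^{5/2}$ term for large $T$ (using $C_2\ge 3$ and $C_1\sqrt{\log T}\ge 1$ eventually), which is exactly condition \eqref{condition_dp_3}; the mean scale yields a strictly weaker requirement and is subsumed. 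A union bound over the two statistical and two noise failure events then gives total failure at most $\delta$ for each of $\hat\mu_{\text{private}}$ and $\hat\alpha_{\text{private}}$.

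The crux is the noise step, because the Laplace scale is not a fixed quantity but grows with the horizon --- it inherits a $\log T$ from the tree-size bound of Corollary~\ref{cor_sensitivity_relation_unaware} and carries the bin count $K=T/\Delta$ in its denominator --- so the noise requirement is implicit (transcendental) in $T$ and is tightly coupled to the choice of $\Delta$. Resolving it is precisely what forces the scaling $\Delta=c\log T$: a larger $\Delta$ sharpens concentration of the count series but shrinks the number of bins $K$, and only this logarithmic balance keeps the statistical conditions \eqref{condition_dp_1}--\eqref{condition_dp_2} and the polylogarithmically growing sensitivity simultaneously satisfiable, yielding a clean closed-form $T/(\log T)^{5/2}$ threshold. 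The mild looseness incurred when collapsing the two distinct sensitivity pieces ($B^2/K$ and $B^{3/2}\sqrt{\Delta}/(K-1)$, with $B=C_2\log T$) into one is the source of both the $C_1C_2^2$ coefficient and the $5/2$ exponent.
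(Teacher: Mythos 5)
Your proposal matches the paper's proof in both structure and detail: the same decomposition of each private estimate into a statistical fluctuation (handled by the non-private Berry--Esseen analysis with the per-quantity budget halved from $\delta/2$ to $\delta/4$, which is exactly why $\delta/8,\delta/16,24/\delta$ become $\delta/16,\delta/32,48/\delta$) plus a Laplace term (handled by the tail bound, with the two sensitivity pieces collapsed into the single $4\sqrt{c}\,C_1C_2^2(\log T)^{5/2}$ scale), followed by the same $C_9$-rescaled inversion to $(\mu,\alpha)$ and the substitution $\Delta=c\log T$, $\sigma^2\lesssim \mu_{\text{upper}}\Delta/(1-\alpha_{\text{upper}})^3$ to sort the terms of \eqref{condition_T_nonprivate} into \eqref{condition_dp_1} and \eqref{condition_dp_2}. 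The only cosmetic difference is that the paper widens the tolerance bands in the inversion step (from $(\xi,2\xi)$ to $(2\xi,3\xi)$, changing the intermediate constants $C_5,C_6$ to $C_7,C_8$) rather than splitting the threshold $\xi\Delta/C_9$ in half as you phrase it, which is the same accounting.
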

\begin{proof}[Proof sketch]
Theorem \ref{thm_non_private_params} give us lower bound on $T$ which yield $\hat{\mu}$ and $\hat{\alpha}$ being within desired distance of $\mu$ and $\alpha$ respectively. We want Theorem \ref{thm_non_private_params} to hold with $\frac{\xi}{2}$ and  $\frac{\delta}{2}$, and so we modify condition on $T$ to reflect this budget. 
The remaining $\frac{\xi}{2}, \frac{\delta}{2}$ budget is used to bound distance between sample estimates and their private counterparts, and relies on the tail bound of Laplace random variable. We show that conditions \eqref{condition_dp_1} and \eqref{condition_dp_2} are sufficient for the non-private estimates to be within $\xi/2$ distance of the true values with $\Delta = c\log T$ with probability at least $1-\delta/2$. We then show that condition \eqref{condition_dp_3} which is required to bound the tail of Laplace distribution %in terms of $\delta$ 
(recall (\ref{eq_DP_mean}) and (\ref{eq_DP_var})) guarantees that the private estimates are within $\xi/2$ distance of the non-private estimates with probability at least $1-\delta/2$. See Section \ref{appendix_private_estimates} in the supplementary material for the detailed proof of Theorem \ref{thm_private_params}.
%
%As highlighted in the sketch of proof of Theorem \ref{thm_non_private_params} results from Appendix \ref{appendix_non_private_estimators} give us lower bound on $T$ which yield $\frac{\hat{\eta}}{\Delta}$ and $\frac{\hat{\sigma}^2}{\Delta}$ being within desired distance of $\frac{\mu}{(1-\alpha)}$ and $\frac{\mu}{(1-\alpha)^3}$, respectively.
%
%As highlighted in the sketch of proof of Theorem \ref{thm_non_private_params}, based on the results from Appendix \ref{appendix_non_private_estimators}, for a fixed $\xi$ and $\delta$, we can determine minimal $T$ needed in order to have $\mathbb{P}\big(\big\vert \frac{\hat{\eta}}{\Delta} - \frac{\eta}{\Delta} \big\vert>\xi\big)<\delta$ and $\mathbb{P}\big(\big\vert \frac{\hat{\sigma}^2}{\Delta} - \frac{\sigma^2}{\Delta} \big\vert>\xi\big)<\delta$. 
%Together with a bound on accuracy of Laplace mechanism, these enable us to provide upper bounds on $\mathbb{P}\big(\big\vert\frac{\hat{\eta}_{\text{private}}}{\Delta}-\frac{\mu}{(1-\alpha)}\big\vert>2\xi\big)$ (see Appendix \ref{appendix_private_estimates}) and $\mathbb{P}\big(\big\vert\frac{\hat{\sigma}^2_{\text{private}}}{\Delta}-\frac{\mu}{(1-\alpha)^3}\big\vert>3\xi\big)$ (see Appendix \ref{appendix_private_estimates}). The rest of the proof follows from  \ref{eq_params_bounds}. See Appendix \ref{appendix_private_estimates}.
\end{proof}

%mean: Proof relies on the error bound of non private estimator $\hat{\eta}$ from Lemma \ref{lemma_mean_berry_essen}, and the bound on the accuracy of Laplace mechanism. See Appendix \ref{appendix_private_sample_mean}. 
%Var: Proof combines Lemma \ref{lemma_berry_essen_var}, and the bound on the accuracy of Laplace mechanism (Appendix \ref{appendix_private_sample_var}). 
%This is a consequence of Lemma \ref{lemma_priv_complex_mean} and Lemma \ref{lemma_priv_var_accurate}. Refer to Appendix \ref{appendix_thm_private_params}.

\paragraph{Remark.}
For the relations aware scenario with the maximum number of correlated events $B$, condition (\ref{condition_dp_3}) becomes
$\frac{T}{\sqrt{\log T}}>\frac{4\sqrt{c}B^2 C_{1}C_{9}}{\epsilon\xi}\log \left(\frac{4}{\delta}\right)
$. %Note in particular that condition (\ref{eq_additional_cond_T_private}) is special case of this one with $B=C_{2}\log t$. \mg{note to self: some more words}

\subsection{Cost of privacy} 
%Let's have a target $\delta$, and compare the $T$ required in private and nonprivate.
%
For the inverse CDF function $\Psi(\cdot)$, we have $\lim_{x\to 0}\Psi(1-x) =\sqrt{2 \log \frac{1}{x}} $ \cite{blair1976rational}. Therefore, for arbitrarily small $\delta$ and large enough $T$, the condition on $T$ in Theorem \ref{thm_non_private_params} becomes $T = O(\frac{1}{\delta\xi})$. In contrast, for small enough $\delta$ and large enough $T$,   for the private estimator (in the relation unaware scenario) the requirement on $T$ is $\frac{T}{\log T} = O(\frac{1}{\xi\delta})$ which, ignoring $\log\log$ terms, can be written as $T=O(\frac{\log(1/\delta\xi)}{\delta\xi})$. 
Equivalently, one can state these results in terms of how fast the estimates converge to the true values $(\mu,\alpha)$. While the non-private estimates $(\hat\mu,\hat\alpha)$ have a convergence rate of $O(\frac{1}{T})$, the rate of convergence for the private estimates $(\hat\mu_{\text{private}},\hat\alpha_{\text{private}})$ is $O(\frac{\log T}{T})$. This means that the DP guarantee of the private estimator comes at the cost of an extra $\log T$ term (as well as larger constants, i.e. $48$ instead of $24$ in the dominant term) in the convergence rate. 
Furthermore, in the relation aware scenario, the convergence rate $O(\frac{1}{T})$ of the non-private estimator is preserved, however with larger constants ($48$ instead of $24$).% in the dominant term).

%%%%%%%%%%%%%%%%%%%%%%%%%%%%%%%%%%%%%%%%%%

%\section{Cost of privacy: Sample complexity analysis}\label{sec_sample_complexity}
%\input{sample_comp}

%%%%%%%%%%%%%%%%%%%%%%%%%%%%%%%%%%%%%%%%%%

\section{Experiments}\label{sect_experiments}
In this section we present numerical experiments to assess the privacy-utility tradeoff of the differentially private estimator introduced in Section~\ref{sect_estimating_params}. We define the utility as the numerical error in estimating the $\mu$ and $\alpha$ parameters. We first add noise to the estimate of the mean and variance of interval event counts according to equations~\eqref{eq_DP_mean} and \eqref{eq_DP_var}. We then obtain an estimate of the Hawkes process parameters by numerically solving the system of two equations \eqref{eq_stationary_mean_hawkes} and \eqref{eq_stationary_var_hawkes}. In all experiments, we set $\alpha_{\rm upper}=0.75$, $\mu_{\rm upper}=2.0$, $\gamma=0.05$ and the Hawkes process decay $\beta=1.0$. The interval length $\Delta$ is set as $\mathcal{O}(\log T)$ rounded to the closest multiple of $5$.

\textbf{Synthetic Data.} We simulate timestamps from two different Hawkes processes (using the \texttt{tick} Python package \cite{bacry2017tick}), with the following two sets of parameters: $(\mu_1, \alpha_1)=(1, 0.5)$ and $(\mu_2, \alpha_2)=(1.5, 0.3)$. We let the simulation run for $T=100,000$ units of time, with a interval size $\Delta=10$. Figure~\ref{fig:pu_synth} shows the privacy-utility trade-off for the differentially private estimator for $\alpha_1$ and $\alpha_2$ respectively, with the utility presented as the normalized absolute estimation error (see Supplementary Material for the same figure for $\mu_1$ and $\mu_2$). We explore the trade-off by varying the maximum tree length $B$ to $10, 25$ and $100$, as well as setting $B=C_2 \log T$ as in Corollary~\ref{cor_sensitivity_relation_unaware}. We append to both figure the non-private estimate (indicated by a vertical black line), which indeed achieves a negligible estimation error. The colored bands represent the $95\%$ bounds, obtained over 50 repetitions per each privacy level $\epsilon$. As expected, we see the utility increasing (smaller estimation error) with the privacy budget increasing. For a given privacy budget, the utility decreases the larger the maximum tree length $B$. In addition, we have observed the estimation process failing to converge if the privacy budget is too small, which is the reason behind the missing trend for $B \approx 100$. In Figure~\ref{fig:time_synth}, we instead explore how long the simulation needs to run to achieve an error below a certain threshold, for a given privacy budget $\epsilon$ and across different maximum tree lengths $B$. We set the error threshold to $10\%$ of $\alpha_1$ and $\alpha_2$ respectively, and take the median running time over 10 repetitions per each $\epsilon$ value. We observe that the running time required to achieve a specific error level (a) decreases with the privacy budget but (b) increases with longer trees due to the fact that more events are affected by a single immigration or birth.

\textbf{Real Data.} We consider the following real event sequence datasets: (1) \textit{MathOverflow} \cite{leskovec2014snap}, a dataset of user interactions in a question-answering website focused on mathematics from 2009 to 2016. We select three of the most active users in terms of interactions, grouping together all the user interactions with the website, (2) \textit{MOOC} \cite{feng2019dropout}, a dataset of student actions in a massive open online course at a Chinese university. Again we select $3$ of the most active users in terms of interactions with the open course, grouping all actions for the entire online course and (3) \textit{911 Calls}\footnote{Data available at \url{https://www.kaggle.com/datasets/mchirico/montcoalert}}, a dataset of emergency calls in Montgomery, PA, from 2015 to 2020. We filter out the medical emergency calls and consider only fire and traffic-related emergency calls, for a which a self-excitation dynamic is more plausible. 
We use the non-private estimates of the Hawkes process parameters as ground truth, as the true generating parameters are not available. Figure~\ref{fig:pu_real} shows the privacy utility trade-off across all real datasets listed above, with the utility being reported as the normalized absolute error for the $\mu$ parameter. We set the maximum tree length to $B=10$, as higher values were leading to estimation issues. We report the non-private estimates of $(\mu, \alpha)$ as part of the legend for each event sequence. As in the synthetic data case, a larger privacy budget corresponds to a lower estimation error. However, the slope of the trade-off is not uniform, seemingly steeper when the magnitude between $\alpha$ and $\mu$ is different (911 Calls and MOOC) and flatter when the two parameters are comparable (MathOverflow).

\begin{figure}[!ht]
    \centering
    \includegraphics[width=1.0\textwidth]{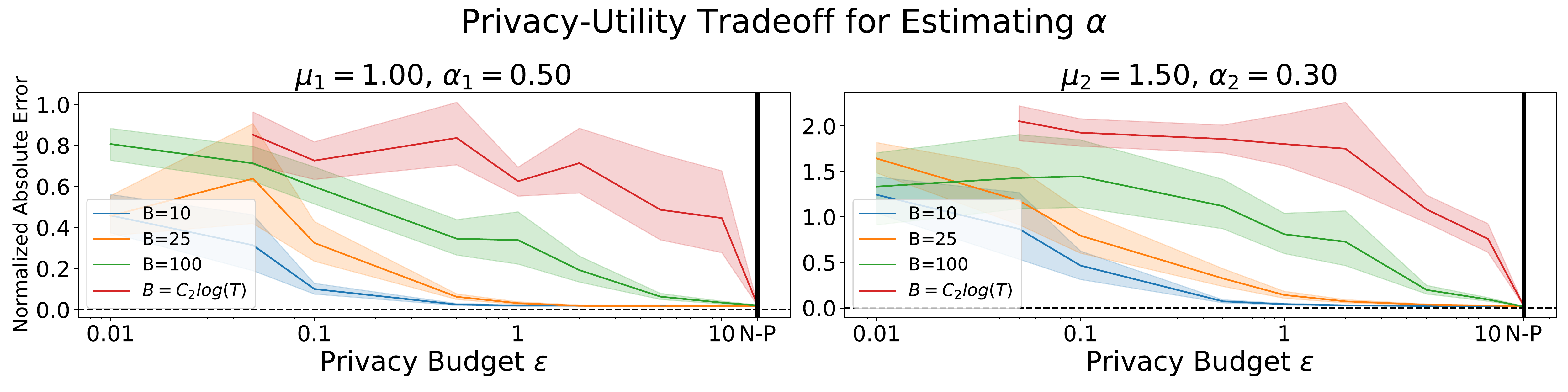}
    \caption{Privacy-utility trade-off for our differentially private estimator for $\alpha_1$ (left) and $\alpha_2$ (right). Mean and $95\%$ bands observed over 50 repetitions at each privacy level, along with the non-private estimation error appended at the rightmost part of the x-axis. Utility increases with a larger privacy budget, and decreases with the maximum tree length $B$. See text for more details.  }
    \label{fig:pu_synth}
\end{figure}

\begin{figure}[!ht]
    \centering
    \includegraphics[width=1.0\textwidth]{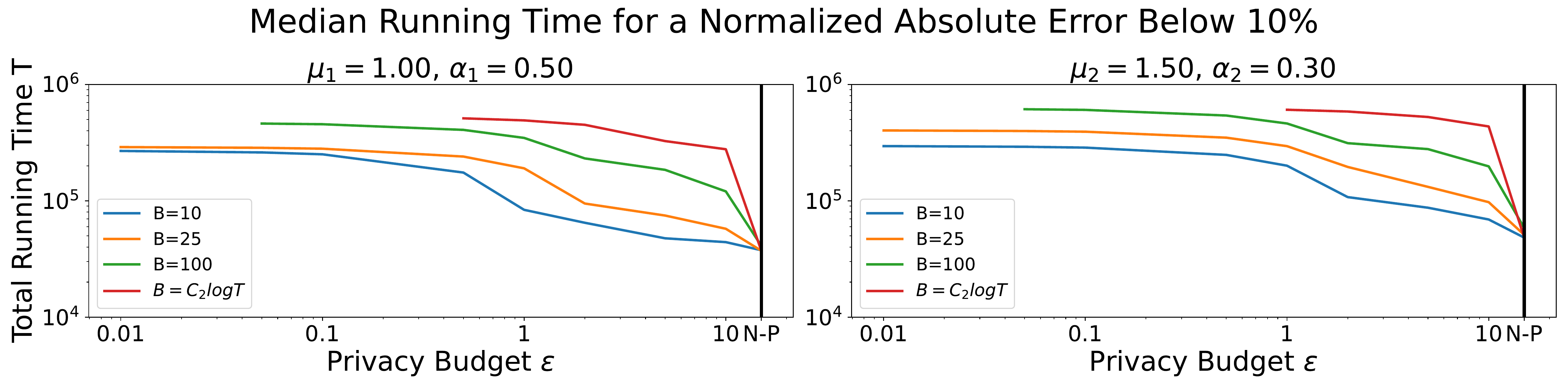}
    \caption{Running time to achieve an estimation error below $10\%$ for $\alpha_1$ (left) and $\alpha_2$ (right). Median time over 10 repetitions per privacy budget is reported. Higher privacy budgets achieve the same estimation errors faster, while longer trees in Hawkes process events require a longer running time.}
    \label{fig:time_synth}
\end{figure}

\begin{figure}[!ht]
    \centering
    \includegraphics[width=1.0\textwidth]{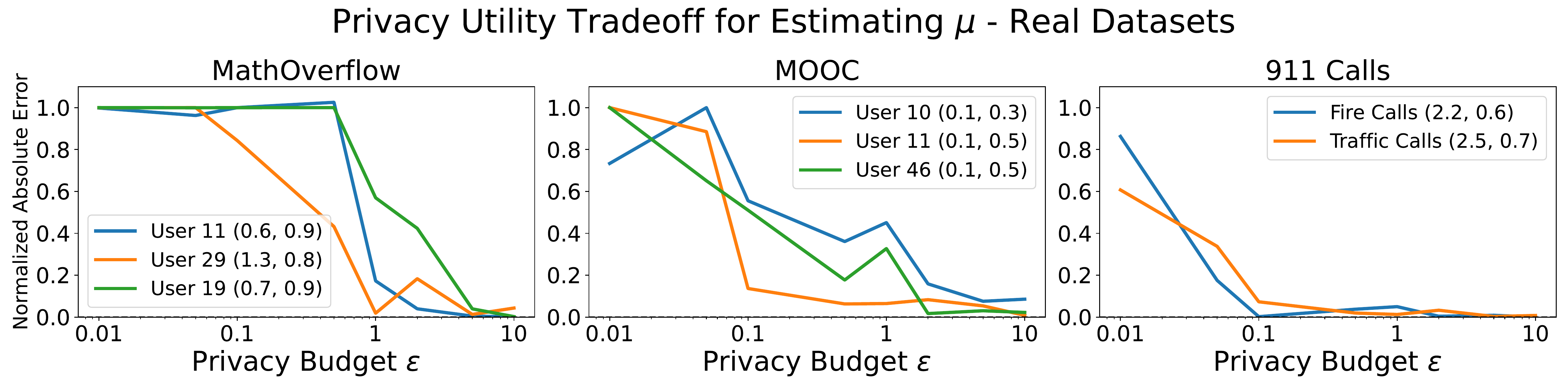}
    \caption{Privacy-utility tradeoff for our differentially private estimator for $\mu$ in all real datasets. Ground truth $(\mu, \alpha)$ parameters indicated in the legend for each event sequence data. Utility increases as privacy budget increases, with the increase being flatter if $\mu$ and $\alpha$ have similar magnitudes magnitude (MathOverflow) and steeper otherwise (MOOC and 911 Calls). See text for more details.}
    \label{fig:pu_real}
\end{figure}

%%%%%%%%%%%%%%%%%%%%%%%%%%%%%%%%%%%%%%%%%%

\section{Conclusion and Discussion}\label{sect_conclusion}
In this work we provide sample complexity results for estimating the parameters of a Hawkes process with an immigration rate $\mu>0$ and exponential kernel $\alpha e^{-\beta t}$ in the regime $\alpha < 1$. We present a differentially private version of the estimates within the settings of random differential privacy, along with sample complexity results. This allows us to analyse the cost of adding privacy to the estimates. %, that is how much additional time is required to provide accurate estimates with respect to the non-private case. 
Specifically, we showed the the observation time $T$ required to reach within $\xi$ distance of the true parameters with probability at least $1-\delta$ increases from $T=O(\frac{1}{\delta\xi})$ for the non-private estimator to $T=O(\frac{\log(1/\delta\xi)}{\delta\xi})$ for the private estimator.
In future work, we plan on extending such results to $D > 1$ event types, where $\alpha \in \mathbb{R}_+^{D \times D}$ is the adjacency matrix describing the self and cross-excitation terms. 
We also plan to study differentially private estimators for avoiding leakage of entire sequences rather than single events, which is common in domains such as advertising or finance where sequences represent set of actions for specific customers.

%Mention here the potential extension to customer-level privacy. It is not only important to protect the single event, but also the identity of the customer if multiple customers are used to train a Hawkes process.

\textbf{Disclaimer.} This paper was prepared for informational purposes by the Artificial Intelligence Research group of JPMorgan Chase \& Co and its affiliates (“J.P. Morgan”), and is not a product of the Research Department of J.P. Morgan.  J.P. Morgan makes no representation and warranty whatsoever and disclaims all liability, for the completeness, accuracy or reliability of the information contained herein.  This document is not intended as investment research or investment advice, or a recommendation, offer or solicitation for the purchase or sale of any security, financial instrument, financial product or service, or to be used in any way for evaluating the merits of participating in any transaction, and shall not constitute a solicitation under any jurisdiction or to any person, if such solicitation under such jurisdiction or to such person would be unlawful.  

%%%%%%%%%%%%%%%%%%%%%%%%%%%%%%%%%%%%%%%%%%

%\pagebreak

%Footnotes should be used sparingly.  If you do require a footnote, indicate footnotes with a number\footnote{Sample of the first footnote.} i

%\begin{figure}
%  \centering
%  \fbox{\rule[-.5cm]{0cm}{4cm} \rule[-.5cm]{4cm}{0cm}}
%  \caption{Sample figure caption.}
%\end{figure}

%\begin{table}
%  \caption{Sample table title}
%  \label{sample-table}
%  \centering
%  \begin{tabular}{lll}
%    \toprule
%    \multicolumn{2}{c}{Part}                   \\
%    \cmidrule(r){1-2}
%    Name     & Description     & Size ($\mu$m) \\
%    \midrule
%    Dendrite & Input terminal  & $\sim$100     \\
%    Axon     & Output terminal & $\sim$10      \\
%    Soma     & Cell body       & up to $10^6$  \\
%    \bottomrule
%  \end{tabular}
%\end{table}

%\begin{ack}
%Use unnumbered first level headings for the acknowledgments. All acknowledgments go at the end of the paper before the list of references. Moreover, you are required to declare funding (financial activities supporting the submitted work) and competing interests (related financial activities outside the submitted work). More information about this disclosure can be found at: \url{https://neurips.cc/Conferences/2022/PaperInformation/FundingDisclosure}.

%Do {\bf not} include this section in the anonymized submission, only in the final paper. You can use the \texttt{ack} environment provided in the style file to autmoatically hide this section in the anonymized submission.
%\end{ack}

%\section*{References}

\bibliographystyle{abbrv}

%References follow the acknowledgments. Use unnumbered first-level heading for the references. Any choice of citation style is acceptable as long as you are consistent. It is permissible to reduce the font size to \verb+small+ (9 point) when listing the references. Note that the Reference section does not count towards the page limit.
\newpage

\appendix

\section{Appendix}
\subsection{Progeny of the largest tree: proof of Lemma \ref{lemma_trees_bound}}\label{appendix_lemma_trees}

%Before we proceed to the proof of Lemma \ref{lemma_trees_bound}, we collect some results on branching procsses, and more specifically  Hawkes process branching representation. We mainly rely on \cite{rochLectureNotes} and \cite{daw2018ephemerally}.

%\begin{lemma}[Hitting Time Theorem, also Lemma 3.2 of \cite{daw2018ephemerally}]
%For the total progeny $W$ of a branching process with descendant distribution equivalent to $Z$, for any $k\in\mathbb{Z}^{+}$ we have
%\begin{align}
%    \mathbb{P}\left(W=k\right) = \frac{1}{k}\mathbb{P}\left(Z_{1}+\dots+Z_{k-1}=k\right),
%\end{align}
%where $Z_{1},\dots,Z_{k-1}$ are iid with distribution $Z$.
%\end{lemma}

%\begin{lemma}[Proposition 3.3 of \cite{daw2018ephemerally}]
%Let $W$ be total progeny of an arbitrary arrival of Hawkes with $\beta>\alpha>0$. For $k\in\mathbb{Z}^{+}$ we have
%\begin{align}
%    \mathbb{P}\left( W=k \right) = \frac{e^{-\frac{\alpha}{\beta}k}}{k!}\left(\frac{\alpha k}{\beta}\right)^{k-1}
%\end{align}
%\end{lemma}

%\begin{remark}
%Note in particular that by the Hitting Time Theorem we have
%\begin{align}
%    \mathbb{P}\left( W=k \right) = \frac{1}{k}\mathbb{P}\left(Z_{1}+\dots+Z_{k-1}=k\right)
%\end{align}
%where $Z$ has Poisson distribution with parametar $\frac{\alpha}{\beta}$ distribution.
%\end{remark}

%\begin{remark}
%We have
%\begin{align}
%    \mathbb{E}(W) = \frac{\beta}{\beta-\alpha}
%\end{align}
%\end{remark}

We first highlight some well known results on Galton-Watson branching processes with Poisson offspring distribution. We rely on presentation from Chapter 6 of \cite{rochLectureNotes}.

%Before we proceed, let us introduce the following class of functions. Taken from (6.7) of lecture notes \cite{rochLectureNotes} \url{https://people.math.wisc.edu/~roch/mdp/roch-mdp-chap6.pdf}. 

%\begin{lemma}[Poisson Tail]
%Let $Z_{1},\dots,Z_{n}$ be iid Poisson random variables with mean $\nu$. For $a>\nu$ we have
%\begin{align}
%    -\frac{1}{n}\log \mathbb{P}\left(Z_{1}+\dots+Z_{n}>a\cdot n\right)\geq I_{\nu}(a)
%\end{align}
%Note that this is equivalent to
%\begin{align}
%  \mathbb{P}\left(Z_{1}+\dots+Z_{n}>a\cdot n\right)\leq e^{-n\cdot I_{\nu}(a)}  
%\end{align}
%\end{lemma}

%We collect some auxiliary results.

%\begin{remark}
%For $j>0$ we have
%\begin{align}
%    j\log\left(1+\frac{1}{j}\right)<2.
%\end{align}
%Use e.g. Taylor expansion.
%\end{remark}

%\begin{proposition}
%For $j>1$ and $\nu>0$ we have
%\begin{align}
%    I_{\nu}(1)-I_{\nu}\left(\frac{j-1}{j}\right)\leq \frac{2}{j}
%\end{align}
%\end{proposition}
%\begin{proof}
%\begin{align}
%    I_{\nu}(1)-I_{\nu}\left(\frac{j-1}{j}\right) &= \log %\left(\frac{1}{\nu}\right) - \frac{j-1}{j}\log\left(\frac{j-1}{j\nu}\right)-\frac{1}{j}\\
%    & = \log \left(\frac{1}{\nu}\right) - \log \left(\frac{j-1}{j}\right)-\log\left(\frac{1}{\nu}\right)+\frac{1}{j}\log\left(\frac{j-1}{j\nu}\right)-\frac{1}{j}\\
%    & = \frac{j-1}{j}\log \left(\frac{j}{j-1}\right)-\frac{1}{j}\log \nu -\frac{1}{j}\\
%    &\leq \frac{j-1}{j} \log \left(\frac{j}{j-1}\right)\\
%    &\leq \frac{2}{j}
%\end{align}
%where the last inequality follows by previos Remark.
%\end{proof}

\begin{lemma}
Let $W$ be the total progeny of Galton-Watson branching process with Poisson offspring distribution with mean $\nu$, $\nu\geq 0$. For integers $j\geq 1$, we have
\begin{align*}
    \mathbb{P}\left(W=j\right) = e^{-\nu j}\frac{(\nu j)^{j-1}}{j!}
\end{align*}
\end{lemma}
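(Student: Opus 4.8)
The plan is to reduce the statement to one classical combinatorial identity for Galton--Watson processes and then specialize it to the Poisson offspring law, where the arithmetic simplifies immediately to the claimed Borel form.

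First I would recall the encoding of the total progeny $W$ as the first passage of an associated random walk, following Chapter 6 of \cite{rochLectureNotes}. Let $X_1,X_2,\ldots$ be i.i.d.\ copies of the offspring law (here $\mathrm{Poisson}(\nu)$) and set $S_n=\sum_{i=1}^n (X_i-1)$. Exploring the tree one vertex at a time, the number of yet-unexplored vertices after $n$ steps equals $1+S_n$, so extinction occurs exactly when the walk first hits $-1$; since the increments are bounded below by $-1$ the walk is downward skip-free and $W=\inf\{n\ge 1: S_n=-1\}$.

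Second, I would invoke the hitting-time theorem (equivalently the Dvoretzky--Motzkin cycle lemma, or the Otter--Dwass formula) for downward skip-free walks, which gives
\[
\mathbb{P}(W=j)=\frac{1}{j}\,\mathbb{P}(S_j=-1)=\frac{1}{j}\,\mathbb{P}\!\Big(\sum_{i=1}^{j} X_i=j-1\Big).
\]
This is the only nontrivial ingredient; everything else is a direct computation. As an alternative route I would note that the progeny generating function $F$ satisfies the functional equation $F(s)=s\,\phi(F(s))$, where $\phi$ is the offspring generating function, so Lagrange inversion yields $[s^j]F(s)=\tfrac{1}{j}[w^{j-1}]\phi(w)^j$ and hence the same identity without passing through the walk.

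Third, I would specialize to Poisson offspring. Because a sum of $j$ independent $\mathrm{Poisson}(\nu)$ variables is $\mathrm{Poisson}(j\nu)$, we have $\mathbb{P}(\sum_{i=1}^j X_i=j-1)=e^{-j\nu}\frac{(j\nu)^{j-1}}{(j-1)!}$, and substituting gives
\[
\mathbb{P}(W=j)=\frac{1}{j}\cdot e^{-j\nu}\frac{(j\nu)^{j-1}}{(j-1)!}=e^{-\nu j}\frac{(\nu j)^{j-1}}{j!}.
\]
The main obstacle is establishing the hitting-time/Otter--Dwass identity, which rests on the cycle lemma: a rotation argument showing that among the $j$ cyclic shifts of any path with the correct endpoint exactly one is a first-passage path. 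Once this is quoted from \cite{rochLectureNotes}, the Poisson specialization is immediate, since the $j$-fold convolution of a Poisson law is again Poisson and the prefactor $1/j$ turns the $(j-1)!$ into $j!$.
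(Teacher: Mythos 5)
Your proposal is correct, and the computation checks out: the hitting-time (Otter--Dwass) identity $\mathbb{P}(W=j)=\tfrac{1}{j}\mathbb{P}(\sum_{i=1}^{j}X_i=j-1)$ combined with the fact that a sum of $j$ independent $\mathrm{Poisson}(\nu)$ variables is $\mathrm{Poisson}(j\nu)$ gives exactly $e^{-\nu j}(\nu j)^{j-1}/j!$. The paper does not prove this lemma at all --- it simply quotes it as a well-known result from Chapter 6 of the cited lecture notes --- and your random-walk/cycle-lemma argument is precisely the standard derivation that citation rests on, so there is nothing to fault and no substantive difference in approach.
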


In particular, we have $E(W)=\frac{1}{1-\nu}$. As a straightforward consequence, we have the following result on the tail of $W$. We provide proof for completeness.

%Finally, we state the result on the probabilistic upper bound on the total progeny of subcritical Poisson process. This is a well known result, and can be found in for example proof of Lemma 6.34 of \cite{rochLectureNotes}, however we state it here for completeness and provide step by step proof.

%For $\nu>0$, and $a>0$, let
%\begin{align}
%$I_{\nu}(a) = a\log\left({\frac{a}{\nu}}\right)-a+\nu,$    
%\end{align}
%and $I_{\nu} = I_{\nu}(1) = \nu-1-\log \nu.$ As a function of $\nu>0$, $I_{\nu}$
%is non negative and achieves minimum of $0$ for $\nu=1$. 
%\begin{align}
%    I_{\nu} = I_{\nu}(1) = \nu-1-\log \nu.
%\end{align}

%\begin{remark}
%As a function of $\nu>0$, $I_{\nu}$
%is non negative and achieves minimum of $0$ for $\nu=1$.
%\end{remark}

\begin{lemma}\label{lemma_app_poisson_progeny}
Let $W$ be the total progeny of Poisson branching process with parameter $\nu<1$, and $D>\mathbb{E}(W) = \frac{1}{1-\nu}$. We have
\begin{align}
    \mathbb{P}\left(W>D\right)\leq e^2 \exp(-I_{\nu} D).
\end{align}
where $0<I_{\nu} = \nu-1-\log \nu.$
\end{lemma}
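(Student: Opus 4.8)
The plan is to exploit the exact progeny law supplied by the preceding lemma, $\mathbb{P}(W=j)=e^{-\nu j}(\nu j)^{j-1}/j!$ for integers $j\ge 1$, and the single algebraic identity that drives everything,
\[
\nu\,e^{1-\nu}=\exp\bigl(\log\nu+1-\nu\bigr)=\exp(-I_\nu),
\]
so that powers of the ``natural base'' $\nu e^{1-\nu}$ are exactly powers of $e^{-I_\nu}$. Since $\nu<1$ gives $I_\nu>0$ and hence $e^{-I_\nu}<1$, this is the source of the exponential decay. I would run two mutually confirming arguments. The cleaner one is the hitting-time (Otter--Dwass) representation of total progeny: $W$ equals the first time the random walk $\tilde S_n=\sum_{i\le n}(X_i-1)$, with $X_i$ i.i.d.\ $\mathrm{Poisson}(\nu)$, reaches $-1$. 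Because the increments are bounded below by $-1$, the event $\{W>D\}$ (i.e.\ the walk has not hit $-1$ within its first $\lfloor D\rfloor$ steps) forces $\tilde S_{\lfloor D\rfloor}\ge 0$, so
\[
\mathbb{P}(W>D)\le\mathbb{P}\bigl(\tilde S_{\lfloor D\rfloor}\ge 0\bigr)=\mathbb{P}\bigl(\mathrm{Poisson}(\lfloor D\rfloor\nu)\ge\lfloor D\rfloor\bigr),
\]
and the standard Poisson upper-tail Chernoff bound evaluates this to exactly $(\nu e^{1-\nu})^{\lfloor D\rfloor}=e^{-I_\nu\lfloor D\rfloor}$.

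As a self-contained alternative that uses the given formula directly, I would bound each term by Stirling's inequality $j!\ge\sqrt{2\pi j}\,(j/e)^j$, which yields
\[
\mathbb{P}(W=j)=e^{-\nu j}\frac{\nu^{j-1}j^{j-1}}{j!}\le\frac{\nu^{j-1}e^{(1-\nu)j}}{\sqrt{2\pi}\,j^{3/2}}=\frac{1}{\nu\sqrt{2\pi}}\,\frac{e^{-I_\nu j}}{j^{3/2}},
\]
again via the identity above. Summing over the integers $j\ge m:=\lfloor D\rfloor+1$ (note $m\ge 2$, since $D>\mathbb{E}(W)=1/(1-\nu)\ge 1$) then dominates the tail by a geometric series of ratio $e^{-I_\nu}$, and one is left with constant factors $1/\nu$, $m^{-3/2}$, and $1/(1-e^{-I_\nu})$ multiplying $e^{-I_\nu m}$. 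Either route delivers the exponential rate $I_\nu$; the remaining task is to fold the integer rounding $\lfloor D\rfloor\mapsto D$ and the accumulated constants into the clean prefactor $e^2$, using $\lfloor D\rfloor>D-1$ so that $e^{-I_\nu\lfloor D\rfloor}\le e^{I_\nu}e^{-I_\nu D}$.

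I expect the constant-chasing to be the main obstacle, and it is genuinely delicate. In the random-walk route the honest prefactor from rounding is $e^{I_\nu(D-\lfloor D\rfloor)}\le e^{I_\nu}$, while in the summation route the prefactor $1/\nu$ is large as $\nu\to 0$ and $1/(1-e^{-I_\nu})\approx 1/I_\nu$ is large as $\nu\to 1$; only their combination, pinned down with the help of $D>1/(1-\nu)$ and the polynomial factor $m^{-3/2}$, is meant to stay uniformly controlled. I would therefore treat the two regimes $\nu$ near $0$ and $\nu$ near $1$ with the two routes respectively (the summation bound handling $\nu\to 1$ poorly but the walk bound handling it cleanly, and vice versa), and verify that the worst-case constant over $\nu\in(0,1)$ and admissible $D$ is bounded by $e^2$, which is the value asserted. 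The rate $I_\nu$ itself is never in question; it is only this final uniform constant that requires the careful bookkeeping.
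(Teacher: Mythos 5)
Your second route is, almost line for line, the paper's own proof: the paper bounds $j!$ from below by the refined Stirling inequality $\sqrt{2\pi j}\,(j/e)^j e^{1/(12j+1)}$, rewrites $e^{-\nu j}\nu^{j-1}e^{j}$ as $e^{1-\nu}e^{-(j-1)I_\nu}$, absorbs $e^{1-\nu}/(\sqrt{2\pi}\,j^{3/2})$ into a constant, and sums a geometric series. One difference in bookkeeping: the paper pairs $\nu^{j-1}$ with $j-1$ factors of $e^{-I_\nu}$, so its prefactor is $e^{1-\nu}\leq e$ rather than your $1/(\nu\sqrt{2\pi})$, which removes the $\nu\to 0$ blow-up you worry about in that particular term (the $\nu\to 1$ blow-up of the geometric ratio remains and is tamed only by the $j^{-3/2}$ factor, which the paper in fact discards). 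Your first route --- the hitting-time representation plus the Poisson Chernoff bound, which evaluates exactly to $e^{-I_\nu\lfloor D\rfloor}$ --- does not appear in the paper; it is cleaner for extracting the rate $I_\nu$ and handles $\nu\to 1$ with no effort, and would be a legitimate alternative proof of the exponential decay.

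The difficulty you flag is real, and you are right not to wave it away: the uniform prefactor $e^2$ is exactly where both your sketch and the paper's proof are incomplete. The paper's final two displayed lines replace $\sum_{j>D}e^{-(j-1)I_\nu}$ by $e^{-I_\nu D}$, silently dropping both the factor $1/(1-e^{-I_\nu})$ and the integer-rounding factor $e^{I_\nu(D-\lfloor D\rfloor+1)}$, and the stated justification (``$I_\nu<1$ for $\nu<1$'') is false for $\nu\lesssim 0.16$. The rounding loss $e^{I_\nu(D-\lfloor D\rfloor)}$ is unbounded as $\nu\to 0$, and the lemma as stated actually fails there: for $\nu=e^{-10}$ and $D=1.5$ one has $\mathbb{P}(W>D)=1-e^{-\nu}\approx\nu\approx 4.5\times 10^{-5}$, while $e^{2}e^{-I_\nu D}=e^{-11.5}\approx 1.0\times 10^{-5}$. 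So the ``careful bookkeeping'' you defer cannot succeed uniformly over $\nu\in(0,1)$ and real $D>1/(1-\nu)$; one needs $D$ restricted to integers, or $\nu$ bounded away from $0$, or a $\nu$-dependent prefactor (any of which suffices for the paper's downstream use, where $\nu=\alpha\geq\alpha_{\text{lower}}$ and $D=a/(1-\alpha)$ with $a$ of order $\log T$). Your patch-the-two-regimes plan is the right instinct, but it must be carried out and the resulting (weaker or conditional) statement recorded, rather than asserting the constant $e^2$ as claimed.
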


%\begin{proof}
%We have
%\begin{align}
%    \mathbb{P}\left(W>D\right) &= \sum_{j>D} \frac{1}{j} \mathbb{P}\left(\sum_{i=1}^{j}Z_{i}=j-1\right)\\
%    & \leq \sum_{j>D} \mathbb{P}\left(\sum_{i=1}^{j}Z_{i}=j-1\right)\\
%    & \leq \sum_{j>D} \exp \left( -j \cdot I_{\nu}\left(\frac{j-1}{j}\right)\right)
%\end{align}
%where the last inequality comes from (15) as $\frac{j-1}{j}<\nu$ for $j>D>\frac{1}{1-\nu}$. By Proposition 1 this is further bounded by
%\begin{align}
%    \mathbb{P}\left(W>D\right) &\leq \sum_{j>D} \exp\left( -j  \left( I_{\nu} - O\left(\frac{1}{j}\right) \right)\right)\\
%    & = \sum_{j>D}\exp\left( -j I_{\nu}  \right)\cdot e^{2}\\
%   & = e^2 \cdot \sum_{j>D} \left( e^{-I_{\nu}} \right)^j\\
%   & =\frac{e^2}{1-e^{-I_{\nu}}}e^{-D\cdot I_{\nu}}\\
%   & \leq e^2 \exp\left(-I_{\nu}D\right)
%\end{align}
%where we rely on the fact that $I_{\nu}$ is non negative for $\nu<1$.
%\end{proof}

\begin{proof}
We have
\begin{align}
   \mathbb{P}\left(W>D\right) = \sum_{j>D}\mathbb{P}\left( W=j \right) &= \sum_{j>D}\mathbb{P}\left( e^{-\nu j}\frac{(\nu j)^{j-1}}{j!} \right)\\
   & \leq \sum_{j>D} \frac{e^{-\nu j}\cdot \nu^{j-1}\cdot j^{j-1} \cdot e^{j}}{\sqrt{2\pi j}\cdot j^j \exp(\frac{1}{12j+1})} \label{ineq_stirling_tree} \\ 
   & = \sum_{j>D} \frac{e^{-(j-1)(\nu-\log\nu-1)}\cdot{e^{1-\nu}}}{\sqrt{2\pi}\cdot j^{3/2}\cdot \exp(\frac{1}{12j+1})}\\
   & \leq e^{2} \cdot \sum_{j>D} e^{-(j-1)(\nu-\log\nu-1)}\\
   & =  e^{2} \cdot \sum_{j\geq D} (e^{-(\nu-\log\nu-1)})^{j}\\
   & = e^{2}\exp(-I_{\nu}D) \label{eq_branching_progeny_last}
 \end{align}
where (\ref{ineq_stirling_tree}) follows by Stirling formula, and (\ref{eq_branching_progeny_last}) is the consequence of $I_{\nu}<1$ for $\nu<1$.
\end{proof}

\begin{corollary}\label{remark_app_progeny_Poiss_branching}
In particular, for the total progeny of Poisson branching process with mean $\nu<1$, and $a>1$ we have
% WAS HERE BEFORE
%\begin{align}
%    \mathbb{P}\left(W> a\mathbb{E}(W)\right)\leq C\exp\left(-\frac{a}{3}\cdot\frac{1}{\mathbb{E}(W)}\right)
%\end{align}
%\begin{align}
%    \mathbb{P}\left(W> a\mathbb{E}(W)\right)\leq C\exp\left(-\frac{10 a}{21}\cdot\frac{1}{\mathbb{E}(W)}\right)
%\end{align}
%or in other words given that $\mathbb{E}(W) = \frac{1}{1-\nu}$,
\begin{align}
    \mathbb{P}\left(W> \frac{a}{1-\nu}\right)\leq e^2\exp\left(-\frac{10a}{21}(1-\nu)\right).
\end{align}
\end{corollary}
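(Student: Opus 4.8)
The plan is to apply Lemma \ref{lemma_app_poisson_progeny} directly with the choice $D = \frac{a}{1-\nu}$ and then simplify the resulting exponent using a clean lower bound on the rate $I_\nu$. First I would verify the hypothesis of the lemma: since $a>1$ we have $D = \frac{a}{1-\nu} > \frac{1}{1-\nu} = \mathbb{E}(W)$, so the lemma applies and gives
\[
\mathbb{P}\left(W > \frac{a}{1-\nu}\right) \leq e^2 \exp\left(-\frac{a\, I_\nu}{1-\nu}\right), \qquad I_\nu = \nu - 1 - \log\nu.
\]
It then remains to show that $\frac{I_\nu}{1-\nu} \geq \frac{10}{21}(1-\nu)$, equivalently $I_\nu \geq \frac{10}{21}(1-\nu)^2$, so that the exponent is at least $\frac{10a}{21}(1-\nu)$.

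Next I would establish the lower bound on $I_\nu$ by a Taylor expansion. Setting $u = 1-\nu \in (0,1)$ and using $-\log(1-u) = \sum_{k\geq 1} u^k/k$, I obtain
\[
I_\nu = -u - \log(1-u) = \sum_{k\geq 2}\frac{u^k}{k} \geq \frac{u^2}{2} = \frac{(1-\nu)^2}{2},
\]
where the inequality simply discards the higher-order (strictly positive) terms of the series. Since $\frac{1}{2} \geq \frac{10}{21}$, this yields $I_\nu \geq \frac{10}{21}(1-\nu)^2$, and substituting back into the displayed tail bound completes the argument.

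The argument is essentially immediate once Lemma \ref{lemma_app_poisson_progeny} is in hand, so there is no genuine obstacle; the only step requiring a line of care is the series lower bound on $I_\nu$, which is routine given convergence of the series for $u\in(0,1)$. I would also note that the constant $\frac{10}{21}$ is not sharp: the Taylor argument actually delivers the better constant $\frac{1}{2}$, and indeed $\frac{I_\nu}{(1-\nu)^2} = \frac{1}{2} + \frac{1-\nu}{3} + \cdots$ is decreasing in $1-\nu$ with infimum $\frac{1}{2}$ attained as $\nu\to 1$, so the stated $\frac{10}{21}$ leaves a comfortable margin and can be used freely in the downstream application (Lemma \ref{lemma_trees_bound}).
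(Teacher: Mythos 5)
Your proof is correct and follows essentially the same route as the paper: apply Lemma \ref{lemma_app_poisson_progeny} with $D=\frac{a}{1-\nu}$ and lower-bound the exponent via the expansion of $\log\nu$ about $1$; your exact series inequality $I_\nu=\sum_{k\geq 2}\frac{(1-\nu)^k}{k}\geq\frac{(1-\nu)^2}{2}$ is in fact cleaner than the paper's $-\frac{1}{2}(1-\nu)+o(1-\nu)$ step, which is not a uniform bound as written. One trivial slip in your closing remark: $\frac{I_\nu}{(1-\nu)^2}=\frac{1}{2}+\frac{1-\nu}{3}+\cdots$ is \emph{increasing} in $1-\nu$, though the infimum is still $\frac{1}{2}$ as $\nu\to 1$ and nothing in the argument depends on this.
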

\begin{proof}
This follows from the Lemma \ref{lemma_app_poisson_progeny} for $D=\frac{a}{1-\nu}$ by
\begin{align}
    \mathbb{P}\left(W> \frac{a}{1-\nu}\right) & \leq e^{2}\exp\left(-I_{\nu}\cdot \frac{a}{1-\nu}\right)\\
    & = e^{2}\exp\left(-(\nu-1-\log \nu)\cdot \frac{a}{1-\nu}\right)\\
    & = e^{2}\exp\left((1-\nu+\log \nu)\cdot \frac{a}{1-\nu}\right)\\
    & = e^{2} \exp\left(a\left(1+\frac{\log \nu}{1-\nu}\right)\right)\\
    & \leq e^{2}\exp\left(a\left(-\frac{1}{2}(1-\nu)+o(1-\nu)\right)\right)\\
    & \leq e^{2} \exp\left(-\frac{10a}{21}(1-\nu)\right)
    %\\
    %& = e^{2}\exp\left(-\frac{10a}{21}\cdot\frac{1}{\mathbb{E}(W)}\right)
    \end{align}
where we recall that $I_{\nu} = \nu-1-\log \nu$ and for $\nu<1$ by Taylor expansion
\begin{align}
    \log \nu = \log(1+\nu-1) = (\nu-1)-\frac{1}{2}(\nu-1)^2+o(\vert\nu-1\vert^2)
\end{align}
and thus
\begin{align}
    1+\frac{\log\nu}{1-\nu} = -\frac{1}{2}(1-\nu)+o(\vert\nu-1\vert)
\end{align}
\end{proof}

%Now we are moving away from the general notes on Poisson branching processes, and focus on the setting that is relevant for us. Recall the beginning of the section: each Poisson arrival (base intensity $\lambda^*$) in Hawkes, generates a branching tree with offspring distribution being Poisson($\frac{\alpha}{\beta}$). We again assume $\beta =1$ and thus offspring is Poisson($\alpha$).

In the branching process representation of Hawkes processes, an event generated by an  excitation is seen as an offspring i.e. a direct child of another event, whose excitation led to the occurrence of the event in question. Moreover, every event generated by excitation has an ancestor event (not necessarily a direct parent) that arrived from the background Poisson process. In this heuristic, any event that has occurred due to the exogenous component (i.e. an immigrant from the background Poisson process) is the root of a tree. In the following lemma, we state the offspring (i.e. direct children) distribution of the arbitrary arrival of Hawkes process defined by (\ref{hawkes_def}) with excitation rate $\alpha<1$ and decay rate $\beta=1$.

\begin{lemma}[Proposition 3.1. of \cite{daw2018ephemerally}.]\label{lemma_app_Hawkes_offspring}
Consider the Hawkes process defined by (\ref{hawkes_def}) with excitation rate $\alpha<1$ and decay rate $\beta=1$. For an arbitrary arrival of a Hawkes process, let $Z$ denote the number of its direct offsprings, i.e. events occurring by a direct excitation of its arrival. Then, $Z$ has Poisson distribution with mean $\alpha$, i.e. for $k\geq 0$ we have
\begin{align}
    %\mathbb{P}\left(Z=k\right) = \frac{e^{-\frac{\alpha}{\beta}}}{k!}\left(\frac{\alpha}{\beta}\right)^k.
    \mathbb{P}\left(Z=k\right) = \frac{e^{-{\alpha}}}{k!}\alpha^k.
\end{align}
\end{lemma}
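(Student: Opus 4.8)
The plan is to exploit the immigration-birth (cluster) representation of the Hawkes process, in which every event acts as the parent of a collection of directly excited offspring events. First I would recall that, conditionally on an event occurring at time $t_i$, the direct offspring it generates arrive according to an inhomogeneous Poisson process on $(t_i,\infty)$ whose intensity is precisely the excitation kernel evaluated at the elapsed time, namely $\alpha e^{-\beta(t-t_i)}$ for $t>t_i$. This is the defining feature of the branching representation of a linear Hawkes process: the total intensity (\ref{hawkes_def}) decomposes additively into the background rate $\mu$ (the immigrant stream) and one independent excitation stream per past event, and the stream attached to the event at $t_i$ contributes exactly $\alpha e^{-\beta(t-t_i)}$ to the intensity at any later time $t$.

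The next step is the standard fact that the number of points of an inhomogeneous Poisson process falling in a region is a Poisson random variable whose mean equals the integral of the intensity over that region. Applying this to the offspring stream of a single event, the number $Z$ of direct children is Poisson-distributed with mean
\begin{align*}
    \mathbb{E}[Z] = \int_{t_i}^{\infty} \alpha e^{-\beta(t-t_i)}\, dt = \int_{0}^{\infty} \alpha e^{-\beta u}\, du = \frac{\alpha}{\beta}.
\end{align*}
Setting $\beta=1$ yields $\mathbb{E}[Z]=\alpha$, so that $\mathbb{P}(Z=k)=e^{-\alpha}\alpha^{k}/k!$ for $k\geq 0$, as claimed. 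The assumption $\alpha<1$ ensures this mean is strictly below one, which is exactly what renders each cluster (equivalently, each tree in the branching picture) almost surely finite, consistent with the subcriticality used throughout the paper.

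The only genuinely delicate point is justifying the branching representation itself, that is, that the offspring of a fixed event truly form an independent inhomogeneous Poisson process with the stated intensity and that this decomposition is faithful to the self-exciting dynamics in (\ref{hawkes_def}). I would establish this by appealing to the classical Poisson-cluster representation of linear Hawkes processes: by the superposition and thinning properties of Poisson processes, the immigrants form a rate-$\mu$ homogeneous Poisson process and each event independently seeds a descendant process governed by the kernel $\alpha e^{-\beta u}$. Once this structural fact is in hand, the distribution of $Z$ follows immediately from the Poisson mean computation above, requiring no further estimates.
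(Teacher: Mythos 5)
Your proof is correct: the paper does not prove this lemma at all but imports it by citation as Proposition 3.1 of Daw and Pender, and your argument --- the cluster/immigration-birth representation of the linear Hawkes process, under which each event's direct offspring form an inhomogeneous Poisson process with intensity $\alpha e^{-\beta(t-t_i)}$, so that $Z$ is Poisson with mean $\int_0^\infty \alpha e^{-\beta u}\,du = \alpha/\beta = \alpha$ for $\beta=1$ --- is exactly the standard derivation underlying that cited result. The one point you rightly flag as delicate, the validity of the Poisson-cluster decomposition itself, is the classical Hawkes--Oakes representation and is taken for granted throughout the paper as well, so your appeal to it is consistent with the paper's own usage.
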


\begin{corollary}\label{remark_app_Hawkes_progeny_tail}
For the Hawkes process defined by (\ref{hawkes_def}) with excitation rate $\alpha<1$ and decay rate $\beta=1$, let $W$ denote total progeny of a tree rooted in an arbitrary exogenous arrival (i.e. an immigrant of the background Poisson process). As a consequence of Corollary \ref{remark_app_progeny_Poiss_branching} and Lemma \ref{lemma_app_Hawkes_offspring} for $a>1$, we have
\begin{align}
    \mathbb{P}\left(W> \frac{a}{1-\alpha}\right)\leq e^2\exp\left(-\frac{10a}{21}(1-\alpha)\right)
\end{align}
\end{corollary}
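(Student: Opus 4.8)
The plan is to identify the progeny tree of a single immigrant with the total progeny of a Galton-Watson branching process having a Poisson offspring distribution, and then read off the tail bound already established in Corollary \ref{remark_app_progeny_Poiss_branching}. In other words, almost all of the work has been front-loaded into the preceding results, and the proof amounts to matching parameters.

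First I would recall the immigration-birth representation: every event of the Hawkes process is either an immigrant from the background Poisson process (the root of a tree) or a direct offspring produced by the exponential excitation of a previously occurred event. Lemma \ref{lemma_app_Hawkes_offspring} tells us that for an arbitrary arrival the number $Z$ of its direct offsprings is Poisson distributed with mean $\alpha$. The key observation is that each arrival spawns its children independently, through its own copy of the exponential kernel $\alpha e^{-(t-t_i)}$, so the offspring counts of distinct events are i.i.d.\ $\mathrm{Poisson}(\alpha)$. Consequently the tree rooted at a fixed immigrant is \emph{exactly} a Galton-Watson branching process with Poisson offspring of mean $\nu=\alpha$, and the total progeny $W$ of that tree coincides in distribution with the random variable $W$ appearing in Corollary \ref{remark_app_progeny_Poiss_branching}.

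Having made this identification, I would note that the assumed regime $\alpha<1$ makes the branching process subcritical, so the hypothesis $\nu<1$ of Corollary \ref{remark_app_progeny_Poiss_branching} holds with $\nu=\alpha$. Applying that corollary verbatim for $a>1$ then yields
\begin{align*}
    \mathbb{P}\left(W>\frac{a}{1-\alpha}\right)\leq e^2\exp\left(-\frac{10a}{21}(1-\alpha)\right),
\end{align*}
which is precisely the claimed estimate.

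The single step that deserves genuine care, rather than mere substitution, is justifying that the cluster generated by an immigrant is a bona fide Galton-Watson tree, i.e.\ that the offspring counts of the individual events are independent and identically $\mathrm{Poisson}(\alpha)$-distributed. This is where the memorylessness of the exponential kernel and the independence of the excitation contributions across events must be invoked; once this branching structure is established, the tail bound follows immediately from the previously proved corollary, so I do not expect any remaining obstacle.
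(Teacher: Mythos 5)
Your proposal is correct and matches the paper's own reasoning: the paper likewise treats this corollary as an immediate substitution of $\nu=\alpha$ into Corollary \ref{remark_app_progeny_Poiss_branching}, justified by the immigration--birth (Galton--Watson) representation and the Poisson$(\alpha)$ offspring law from Lemma \ref{lemma_app_Hawkes_offspring}. Your extra remark about verifying that the cluster of an immigrant is a genuine Galton--Watson tree with i.i.d.\ offspring counts is a point the paper leaves implicit (deferring to the cited branching-process representation), but it does not change the argument.
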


So far, we studied the progeny of any tree in the branching process representation of a given Hawkes process. Next, we study the progeny of the largest tree observed until time $T$, where the size of a tree is defined as the number of events belonging to the tree.

%\begin{remark}
%For $a>1$, the expected number of progenies up to time $t$ with at least $\frac{a}{1-\alpha}$ individuals is at most
%\begin{align}
%    \lambda^* \cdot t \cdot C \cdot \exp(-\frac{10a}{21}(1-\alpha)),
%\end{align}
%where $C=e^2$. This is the consequence of the previous Remark, as the expected number of Poisson arrivals is $\lambda^* t$ and each tree grows independently.

%In particular, note that if we want to make the expected number of progenies with at least $\frac{a}{1-\alpha}$ less than some $\epsilon>0$, is is enough to take
%\begin{align}
%    a>\frac{2.1}{1-\alpha^{\text{upper}}}\left(\log\left(\lambda^*\cdot C \cdot t\right)+\log\left(\frac{1}{\epsilon}\right)\right).
%\end{align}
%\end{remark}

\begin{lemma}\label{lemma_bound_all-trees}
In the same setting as in Corollary \ref{remark_app_Hawkes_progeny_tail}, the probability that the largest tree observed until time $T$ contains at most $\frac{a}{1-\alpha}$ individuals for $a>1$ is at least
%For $a>1$, probability that \textit{all progenies} we see up to time $t$ (i.e. all of the simultaneously) contain at most $\frac{a}{1-\alpha}$ individuals is at least
\begin{align}
    \exp\left(-\mu T e^2 \cdot \exp(-\frac{10a}{21}(1-\alpha)) \right).
\end{align}
%where $\mu$ represents exogenous rate of the background Poisson process.
%where $C=e^2$. This can be made greater than $1-\epsilon>0$ for arbitrary $\epsilon>0$ for the choice of $a$ such that
%\begin{align}
%  a>\frac{2.1}{1-\alpha^{\text{upper}}}\left(\log\left(\lambda^*\cdot C \cdot t\right)+\log\left(\frac{1}{\epsilon}\right)\right)  
%\end{align}
\end{lemma}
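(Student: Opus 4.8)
The plan is to exploit the immigration–birth (branching) representation of the Hawkes process, in which the exogenous arrivals form a Poisson process of rate $\mu$ and each immigrant roots an independent tree whose total progeny is distributed as the variable $W$ studied in Corollary \ref{remark_app_Hawkes_progeny_tail}. First I would record that the number $N$ of immigrants arriving in $[0,T]$ satisfies $N\sim\mathrm{Poisson}(\mu T)$, and that conditionally on $N=n$ the $n$ trees have i.i.d.\ total sizes $W_1,\dots,W_n$, each a copy of the total progeny of a $\mathrm{Poisson}(\alpha)$ Galton–Watson process (Lemma \ref{lemma_app_Hawkes_offspring}). Setting $D=\frac{a}{1-\alpha}$, the event that every tree contains at most $D$ individuals is exactly $\{\max_i W_i\le D\}$; since the total progeny dominates the number of a tree's events actually seen by time $T$, a lower bound on the probability of this event is a (conservative) lower bound for the target event.

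Next I would compute the probability directly. Writing $p=\mathbb{P}(W>D)$ and conditioning on $N$,
\begin{align*}
\mathbb{P}\Big(\max_i W_i\le D\Big)=\mathbb{E}\big[(1-p)^N\big]=\exp\big(\mu T((1-p)-1)\big)=\exp(-\mu T\,p),
\end{align*}
where the middle equality is the Poisson probability generating function $\mathbb{E}[s^N]=e^{\mu T(s-1)}$ evaluated at $s=1-p$. Equivalently, by the marking theorem for Poisson processes the immigrants whose tree exceeds $D$ form a thinned Poisson process of rate $\mu p$, and the displayed quantity is the chance that this thinned process has no points in $[0,T]$.

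Finally I would insert the tail estimate. Corollary \ref{remark_app_Hawkes_progeny_tail} gives $p=\mathbb{P}\big(W>\tfrac{a}{1-\alpha}\big)\le e^2\exp\big(-\tfrac{10a}{21}(1-\alpha)\big)$; since $x\mapsto e^{-\mu T x}$ is decreasing, substituting this upper bound for $p$ yields
\begin{align*}
\mathbb{P}\Big(\max_i W_i\le D\Big)=\exp(-\mu T\,p)\ge \exp\!\Big(-\mu T\,e^2\exp\big(-\tfrac{10a}{21}(1-\alpha)\big)\Big),
\end{align*}
which is precisely the claimed bound.

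I expect the only genuine subtlety to be the bookkeeping of the representation rather than any hard estimate: namely justifying that the trees rooted by distinct immigrants are independent and each has progeny distributed as $W$ (so that the conditional i.i.d.\ structure and the Poisson generating function apply), and checking that bounding the \emph{total} progeny—which may include descendants occurring after $T$—is a legitimate over-counting that only strengthens the conclusion about trees observed up to time $T$. Once these structural facts are in place, the remainder is the one-line Poisson generating-function identity together with the monotone substitution of the Corollary \ref{remark_app_Hawkes_progeny_tail} tail bound.
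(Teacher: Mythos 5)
Your proposal is correct and follows essentially the same route as the paper: both condition on the Poisson$(\mu T)$ number of immigrants, use the conditional i.i.d.\ structure of the tree progenies to reduce the probability that all trees are small to $\mathbb{E}[(1-p)^N]=e^{-\mu T p}$ (the paper writes this out as the explicit series $\sum_k (1-p)^k (\mu T)^k e^{-\mu T}/k!$), and then substitute the tail bound from Corollary \ref{remark_app_Hawkes_progeny_tail}. Your explicit remark that bounding the total progeny (including descendants after $T$) is a legitimate over-count is a point the paper leaves implicit, but the argument is otherwise identical.
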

\begin{proof}
Let $\Upsilon(T)$ denote the number of trees observed until time $T$, and for $i=1,\dots,\Upsilon(T)$ let $W(i)$ denote the progeny of tree $i$, i.e. the total number of individuals tree $i$ contains. Thus, the probability that the largest tree contains at most $\frac{a}{1-\alpha}$ individuals equals
\begin{align}
    \sum_{k=0}^{\infty}&\mathbb{P}\left(\Upsilon(T) = k, W(1)<\frac{a}{1-\alpha},\dots,W(\Upsilon(T))<\frac{a}{1-\alpha}\right)\\
    & = \sum_{k=0}^{\infty}\mathbb{P}\left(W(1)<\frac{a}{1-\alpha},\dots,W(\Upsilon(T))<\frac{a}{1-\alpha} \Big| \Upsilon(T) = k \right)\cdot\mathbb{P}\left( \Upsilon(T) = k \right)\\
    & = \sum_{k=0}^{\infty} \mathbb{P}\left(W(1)<\frac{a}{1-\alpha} \Big| \Upsilon(T) = k \right)^k\cdot\mathbb{P}\left( \Upsilon(T) = k \right)\label{eq_app_trees_indep}\\
     & = \sum_{k=0}^{\infty} \mathbb{P}\left(W(1)<\frac{a}{1-\alpha} \Big| \Upsilon(T) = k \right)^k\cdot \frac{(\mu T)^k e^{-\mu T}}{k!}\label{eq_app_num_trees_poiss}\\
    & \geq \sum_{k=0}^{\infty}\left( 1- e^2\exp\left(-\frac{10a}{21}(1-\alpha)\right) \right)^k \cdot \frac{(\mu T)^k e^{-\mu T}}{k!}\label{eq_app_all_trees_less}\\
    & = \exp\left(-\mu T \cdot e^2 \cdot \exp(-\frac{10a}{21}(1-\alpha)) \right)
\end{align}
where (\ref{eq_app_trees_indep}) holds as progenies $W(i)$ are iid and (\ref{eq_app_num_trees_poiss}) is the consequence of the fact that the number of trees $\Upsilon(T)$ corresponds to the number of arrivals in the background Poisson process up until $T$. Finally, (\ref{eq_app_all_trees_less}) follows by Corollary \ref{remark_app_Hawkes_progeny_tail}. This concludes the proof.
%In particular, as in the previous remark, for the choice of $a$ such that
%\begin{align}
%  a>\frac{2.1}{1-\alpha^{\text{upper}}}\left(\log\left(\lambda^*\cdot C \cdot t\right)+\log\left(\frac{1}{\epsilon}\right)\right)  
%\end{align}
%for $0<\epsilon<1$, the probability can be made at least $1-e^{-\epsilon}>1-\epsilon$.
\end{proof}

\begin{corollary}\label{corr_app_largest_tree}
For $0<\gamma<1$, probability that the largest tree we observe up until time $T$ contains at most
\begin{align*}
    \frac{2.1}{(1-\alpha)^2}\left( \log \left( \mu e^2 T \cdot \frac{1}{\gamma} \right) \right)
\end{align*}
individuals is at least $1-\gamma$. This is a direct consequence of Lemma \ref{lemma_bound_all-trees} for $a=\frac{2.1}{1-\alpha}\left( \log \left( \mu e^2 T \cdot \frac{1}{\gamma} \right) \right)$, as
\begin{align*}
  \exp\left(-\mu T e^2 \cdot \exp(-\frac{10a}{21}(1-\alpha)) \right) & = \exp\left(-\mu T e^2 \exp\left(-\log \left( \mu e^2 T \cdot \frac{1}{\gamma} \right) \right) \right)\\
  &=e^{-\gamma}\\
  &\geq 1-\gamma
\end{align*}
\end{corollary}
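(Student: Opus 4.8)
The plan is to invoke Lemma \ref{lemma_bound_all-trees} with a carefully chosen value of its free parameter $a$ and then simplify the resulting double exponential. Lemma \ref{lemma_bound_all-trees} guarantees that, for any $a>1$, the largest tree observed up to time $T$ contains at most $\frac{a}{1-\alpha}$ individuals with probability at least $\exp\bigl(-\mu T e^2 \exp(-\tfrac{10a}{21}(1-\alpha))\bigr)$. My strategy is to pick $a$ so that two things happen at once: (i) the per-tree bound $\frac{a}{1-\alpha}$ equals the target $\frac{2.1}{(1-\alpha)^2}\log(\mu e^2 T/\gamma)$, and (ii) the inner exponent $\frac{10a}{21}(1-\alpha)$ collapses to a clean logarithm. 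Both are achieved simultaneously by the choice $a=\frac{2.1}{1-\alpha}\log(\mu e^2 T/\gamma)$, which works precisely because $\frac{10}{21}\cdot 2.1=1$.

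First I would substitute this $a$ into the inner exponent. The factor $1-\alpha$ appearing in $a$ cancels the $(1-\alpha)$ already present, so the exponent reduces to $\frac{10a}{21}(1-\alpha)=\log(\mu e^2 T/\gamma)$ and hence $\exp(-\tfrac{10a}{21}(1-\alpha))=\frac{\gamma}{\mu e^2 T}$. Next I would push this into the outer exponential: the prefactor $\mu T e^2$ cancels exactly against the denominator $\mu e^2 T$, leaving $\mu T e^2\cdot\frac{\gamma}{\mu e^2 T}=\gamma$, so the probability lower bound becomes $e^{-\gamma}$. Finally the elementary inequality $e^{-x}\ge 1-x$, valid for all real $x$, yields $e^{-\gamma}\ge 1-\gamma$, which is exactly the claimed confidence level, while the per-tree bound $\frac{a}{1-\alpha}$ is by construction the asserted expression.

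The one hypothesis of Lemma \ref{lemma_bound_all-trees} that requires checking is $a>1$. Since $1-\alpha<1$ we have $a>2.1\log(\mu e^2 T/\gamma)$, so it suffices that $\log(\mu e^2 T/\gamma)>1/2.1$, i.e. that $T$ exceeds a mild constant multiple of $\gamma/(\mu e^2)$; in the regime of interest (large $T$, small $\gamma$) this holds automatically and is in any case implied by the stronger condition $T\ge(\mu e^2/\gamma)^{5/2}$ used later when converting this bound into the $\frac{3\log T}{(1-\alpha)^2}$ form of Lemma \ref{lemma_trees_bound}. Because the remainder is pure substitution, I do not expect a genuine obstacle; the only point needing care is confirming that the constant $2.1$ is exactly what makes $\frac{10}{21}\cdot 2.1=1$, since this is what causes the double exponential to telescope neatly into $e^{-\gamma}$.
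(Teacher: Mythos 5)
Your proposal is correct and follows essentially the same route as the paper: substitute $a=\frac{2.1}{1-\alpha}\log(\mu e^2 T/\gamma)$ into Lemma \ref{lemma_bound_all-trees}, use $\frac{10}{21}\cdot 2.1=1$ to collapse the double exponential to $e^{-\gamma}$, and conclude with $e^{-\gamma}\geq 1-\gamma$. Your explicit verification of the hypothesis $a>1$ is a small addition the paper omits, but it does not change the argument.
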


Finally, we are ready to prove Lemma \ref{lemma_trees_bound}.
\begin{proof}[Proof of Lemma \ref{lemma_trees_bound}]
By Corollary \ref{corr_app_largest_tree}, for $0<\gamma\leq 1$, the largest tree we observe until time $T$ contains at most
\begin{align}\label{eq_tree_bound_appendix}
    \frac{2.1}{(1-\alpha)^2}\left( \log\left(\mu e^2  T \cdot \frac{1}{\gamma} \right) \right)
\end{align}
individuals
with probability at least $1-\gamma$. Having in mind the assumption $T\geq \left(\frac{\mu e^{2}}{\gamma}\right)^{5/2}$, we have
\begin{align}\label{eq_tree_bound_appendix2}
    \frac{2.1}{(1-\alpha)^2}\left( \log\left(\mu e^2  T \cdot \frac{1}{\gamma} \right) \right)&= \frac{2.1}{(1-\alpha)^2}\left(\log T+\log \left(\frac{\mu e^2}{\gamma}\right)\right)\\
    & \leq \frac{2.1\cdot1.4}{(1-\alpha)^2}\log T\\
    & \leq \frac{3}{(1-\alpha)^2}\log T.
\end{align}
Hence, with probability at least $1-\gamma$, the largest tree contains at most $\frac{3\log T}{(1-\alpha)^2}$ individuals.
\end{proof}

%\subsubsection{Constants}\label{appendix_constants}
%For a fixed $0<\gamma\leq 1$
%\begin{align}\label{eq_def_C2_C4}
%    C_{2} = \frac{3}{(1-\alpha_{\text{upper}})^2}, \qquad
%    C_{1} = \sqrt{\frac{1.1\cdot\mu_{\text{upper}}}{(1-\alpha_{\text{upper}})^3}\cdot\frac{1}{\gamma}}
%\end{align}

%We introduce 
%\begin{align}
%    C_{5} = \frac{6}{\mu_{\text{lower}}}\cdot\frac{1}{(1-\alpha_{\text{upper}})}, \qquad
%    C_{6} = 1+\frac{6\mu_{\text{upper}}}{\mu_{\text{lower}}(1-\alpha_{\text{upper}})^2}+\frac{1}{1-\alpha_{\text{upper}}}.
%\end{align}

%We introduce 
%\begin{align}
%    C_{7} = \frac{8}{\mu_{\text{lower}}}\cdot\frac{1}{(1-\alpha_{\text{upper}})} \qquad
%    C_{8} = 1+\frac{8\mu_{\text{upper}}}{\mu_{\text{lower}}(1-\alpha_{\text{upper}})^2}+\frac{4}{3(1-\alpha_{\text{upper}})}
%\end{align}

%\newpage
\subsection{Detailed calculation of the mean and variance of $Y^{\Delta}_i$} \label{appendix_calculation_mean_var_Hawkes_stationary}

Proposition 2.2 from \cite{daw2018queues} states that for $\alpha<1$, $\beta=1$ we have
\begin{align}
    \mathbb{E}(H_{t}) =& \lambda_{\infty}t+ \frac{\lambda_{0}-\lambda_{\infty}}{1-\alpha}\left(1-e^{-(1-\alpha)t}\right),\\
    \mathbf{Var}(H_{t}) =& \frac{\lambda_{\infty}}{(1-\alpha)^2}t+\frac{\alpha^2(2\lambda_{0}-\lambda_{\infty})}{2(1-\alpha)^3}\left(1-e^{-2(1-\alpha)t}\right)-\frac{2\alpha(\lambda_{0}-\lambda_{\infty})}{(1-\alpha)^2}t e^{-(1-\alpha)t}\\
    &+\left( \frac{1+\alpha}{(1-\alpha)^2}\left(\lambda_{0}-\lambda_{\infty}\right)-\frac{2\alpha}{(1-\alpha)^3}\lambda_{\infty} \right)\left(1-e^{-(1-\alpha)t}\right),
\end{align}
and thus for the counting series $\{Y_{i}(\Delta) = \mbf N(i\Delta)- \mbf N((i-1)\Delta) \}_{i=1}^K$ we have

\begin{align}
    \mathbb{E}[Y_i({\Delta})] =& \lambda_{\infty}\Delta+ \frac{\lambda^{*}_{(i-1)\Delta}-\lambda_{\infty}}{1-\alpha}\left(1-e^{-(1-\alpha)\Delta}\right),\\
    \mathbf{Var}[Y_{i}(\Delta)] =& \frac{\lambda_{\infty}}{(1-\alpha)^2}\Delta+\frac{\alpha^2(2\lambda^{*}_{(i-1)\Delta}-\lambda_{\infty})}{2(1-\alpha)^3}\left(1-e^{-2(1-\alpha)\Delta}\right)\\
    &-\frac{2\alpha(\lambda^{*}_{(i-1)\Delta}-\lambda_{\infty})}{(1-\alpha)^2}\Delta e^{-(\beta-\alpha)\Delta}\\
    &+\left( \frac{1+\alpha}{(1-\alpha)^2}\left(\lambda^{*}_{(i-1)\Delta}-\lambda_{\infty}\right)-\frac{2\alpha}{(1-\alpha)^3}\lambda_{\infty} \right)\left(1-e^{-(1-\alpha)\Delta}\right).
\end{align}
The stationarity assumption gives us that $\E[\lambda_{(i-1)\Delta}] = \lambda_{\infty}$. Moreover, we have $\E[Y_i^{\Delta}] = \E_{\lambda_{(i-1)\Delta}}\big[\E[\mbf N(\Delta)]\big ]$, $\E[Y_i^{\Delta}]^2 = \E_{\lambda_{(i-1)\Delta}}\big[\E[\mbf N(\Delta)]^2\big ]$ and $\mathbf{Var}[Y_i^{\Delta}] = \E[Y_i^{\Delta}]^2- (\E[Y_i^{\Delta}])^2$. Bearing in mind $\lambda_{\infty} =\frac{\mu}{1-\alpha}$, we obtain \eqref{eq_stationary_mean_hawkes} and \eqref{eq_stationary_var_hawkes}.

%For the count series we have $\E[Y_i^{\Delta}] = \E_{\lambda_0}\big[\E[\mbf N(\Delta)]\big ]$, $\mathbf{Var}[Y_i^{\Delta}] = \E[Y_i^{\Delta}]^2- (\E[Y_i^{\Delta}])^2$ and $\E[Y_i^{\Delta}]^2 = \E_{\lambda_0}\big[\E[\mbf N(\Delta)]^2\big ]$, and by stationarity assumption we have $\E[\lambda^*_t] = \lambda_{\infty}$.

%$\E[Y_i^{\Delta}]=\E[\mbf N(\Delta)]$ and $\mathbf{Var}[Y_i^{\Delta}] = \mathbf{Var}[\mbf N(\Delta)]$.

%$\E[Y_i^{\Delta}] = \E_{\lambda_0}\big[\E[\mbf N(\Delta)]\big ]$

%$\E[Y_i^{\Delta}]=\E[\mbf N(\Delta)]$ and $\mathbf{Var}[Y_i^{\Delta}] = \mathbf{Var}[\mbf N(\Delta)]$.

%\newpage
%\subsection{Proof of Lemma \ref{lemma_high_prob_bins_event}}\label{appendix_high_prob_bins_event}

\subsection{Number of events: deviation from the mean for a fixed number of bins}\label{appendix_cum_discrepancy_B_bins}

In this section we provide a probabilistic upper bound on the cumulative discrepancy from the mean for the number of events, i.e. an upper bound on $\sum_{i=1}^{B}\left\vert Y_{i}-\mathbb{E}(Y_{i})\right\vert$. Such a bound will be useful for studying the sensitivity of sample variance for the neighbouring datasets that differ in at most $B$ events.

\begin{lemma} \label{lemma_high_prob_bins_event}
%Let $\mu_{\text{lower}}<\mu<\mu_{\text{upper}}$ and $\alpha^{\text{lower}}<\alpha<\alpha^{\text{upper}}$ be parameters of Hawkes process started from stationarity. 
Consider the count series $\{Y_i(\Delta)\}_{i=1}^K$ associated with a Hawkes
process with parameters $\mu_{\text{lower}}<\mu<\mu_{\text{upper}}$ and $\alpha^{\text{lower}}<\alpha<\alpha^{\text{upper}}$ where $\Delta$ is such that
%Let the size of bins $\Delta$ be such that
%\begin{align}
%    \Delta>10\cdot\frac{(\alpha_{\text{upper}})^2}{2(1-\alpha_{\text{upper}})},
%\end{align}
\begin{align}
    \Delta>10\cdot\frac{\alpha^2}{2(1-\alpha)}.
\end{align}
Then, for any set of $B$ bins we have
%\begin{align}
%\sum_{i=1}^{B}\left\vert Y_{i}-\mathbb{E}(Y_{i})\right\vert < B^{3/2}\sqrt{\Delta}\cdot{C_{3}},
%\end{align}
\begin{align}
\sum_{i=1}^{B}\left\vert Y_{i}-\mathbb{E}(Y_{i})\right\vert < B^{3/2}\sqrt{\Delta}\cdot{C_{1}},
\end{align}
with probability at least $1-\gamma$ where $C_{1} = \sqrt{\frac{1.1\cdot\mu_{\text{upper}}}{(1-\alpha_{\text{upper}})^3}\cdot\frac{1}{\gamma}}$.
%\begin{align}
%    C_{3} = \sqrt{\frac{1.1\cdot\lambda^{*}_{\text{upper}}}{(1-\alpha_{\text{upper}})^3}\cdot\frac{1}{\gamma}}
%\end{align}
%\begin{align}
%    C_{3} = \sqrt{\frac{1.1\cdot\mu}{(1-\alpha)^3}\cdot\frac{1}{\gamma}}.
%\end{align}
\end{lemma}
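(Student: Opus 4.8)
The plan is to prove the bound in two essentially independent steps: first control the stationary variance $\sigma^2 = \mathrm{Var}(Y_i)$ from above using the hypothesis on $\Delta$, and then turn this single marginal second-moment estimate into a high-probability bound on $\sum_{i=1}^B |Y_i - \mathbb{E}(Y_i)|$ via Markov's inequality together with a union bound. A pleasant feature of this route is that it only uses the marginal distribution of each $Y_i$, which is identical across $i$ by stationarity, so the dependence among the $Y_i$ never enters and no mixing argument is needed here.

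For the variance, I would start from the closed form \eqref{eq_stationary_var_hawkes}. The third term is negative and may be discarded, and $1 - e^{-2(1-\alpha)\Delta} \leq 1$, so
\begin{align*}
\sigma^2 \leq \frac{\mu\Delta}{(1-\alpha)^3} + \frac{\alpha^2\mu}{2(1-\alpha)^4} = \frac{\mu}{(1-\alpha)^3}\left(\Delta + \frac{\alpha^2}{2(1-\alpha)}\right).
\end{align*}
The hypothesis $\Delta > 10\cdot\frac{\alpha^2}{2(1-\alpha)}$ gives $\frac{\alpha^2}{2(1-\alpha)} < \Delta/10$, so the bracket is at most $1.1\,\Delta$, yielding $\sigma^2 \leq \frac{1.1\,\mu\Delta}{(1-\alpha)^3}$. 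Since the map $(\mu,\alpha)\mapsto \mu/(1-\alpha)^3$ is increasing in each argument on $\alpha<1$, I replace $\mu,\alpha$ by their upper bounds to obtain $\sigma^2 \leq \frac{1.1\,\mu_{\text{upper}}\Delta}{(1-\alpha_{\text{upper}})^3} = \gamma\,\Delta\,C_{1}^2$, i.e.\ $\sigma \leq C_{1}\sqrt{\gamma\Delta}$.

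For the aggregation step, each $(Y_i - \mathbb{E} Y_i)^2$ has mean $\sigma^2$, so Markov's inequality gives $\mathbb{P}\big((Y_i-\mathbb{E} Y_i)^2 > B\sigma^2/\gamma\big) \leq \gamma/B$ for every $i$. A union bound over the $B$ bins shows that, with probability at least $1-\gamma$, one has $|Y_i - \mathbb{E} Y_i| \leq \sigma\sqrt{B/\gamma}$ simultaneously for all $i\leq B$. Summing these $B$ inequalities and inserting $\sigma \leq C_{1}\sqrt{\gamma\Delta}$ gives
\begin{align*}
\sum_{i=1}^B |Y_i - \mathbb{E}(Y_i)| \leq B\cdot\sigma\sqrt{B/\gamma} = \frac{B^{3/2}\sigma}{\sqrt{\gamma}} \leq B^{3/2}\sqrt{\Delta}\,C_{1},
\end{align*}
which is the claim.

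I do not expect a genuine obstacle here; the only real content is the variance estimate, where one must be careful to discard the correct (negative) term, to use the $\Delta$-hypothesis to absorb the $O(1)$ residual into the leading linear-in-$\Delta$ term with the stated constant $1.1$, and to justify replacing $(\mu,\alpha)$ by $(\mu_{\text{upper}},\alpha_{\text{upper}})$ by monotonicity. I would also note that replacing the per-bin Markov/union bound by a single Markov inequality applied to $\sum_i (Y_i-\mathbb{E} Y_i)^2$ (whose mean is $B\sigma^2$ by linearity of expectation, regardless of dependence) together with Cauchy--Schwarz would sharpen the exponent from $B^{3/2}$ to $B$; the weaker $B^{3/2}$ bound stated here is already sufficient for the sensitivity computation in Lemma~\ref{lemma_sensitivity}.
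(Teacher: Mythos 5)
Your proof is correct and follows essentially the same route as the paper's: a Chebyshev (Markov-on-the-square) bound per bin combined with a union bound over the $B$ bins, together with the identical variance estimate $\sigma^2 \leq \tfrac{1.1\,\mu_{\text{upper}}\Delta}{(1-\alpha_{\text{upper}})^3} = \gamma\Delta C_1^2$ obtained by discarding the negative term and absorbing the $O(1)$ residual via the hypothesis on $\Delta$. The only cosmetic difference is that you bound each bin first and then sum, whereas the paper applies the pigeonhole/union bound directly to the event that the sum exceeds the threshold; your closing observation that a single Markov bound on $\sum_i(Y_i-\mathbb{E}Y_i)^2$ plus Cauchy--Schwarz would improve $B^{3/2}$ to $B$ is also valid.
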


\begin{proof}
We have
\begin{align}
\mathbb{P}\left(\sum_{i=1}^{B}\left\vert Y_{i}-\mathbb{E}(Y_{i})\right\vert > B^{3/2}\sqrt{\Delta}\cdot{C_{1}}\right) &\leq
B\cdot\mathbb{P}\left( \left\vert Y_{i}-\mathbb{E}(Y_{i})\right\vert \geq\sqrt{B}\sqrt{\Delta}\cdot{C_{1}} \right)\\
&\leq \frac{B\sigma^{2}(\Delta)}{B\Delta \cdot C_{1}^2} \label{eq_app_Chebishev}\\
& \leq \frac{1}{\Delta\cdot{C_{1}}^2}\left( \frac{\mu\cdot\Delta}{(1-\alpha)^3}+\frac{\alpha^2\mu}{2(1-\alpha)^4} \right)\label{eq_app_var_bound}\\
& \leq\frac{1}{\Delta\cdot{C_{1}}^2}\left( \frac{\mu\cdot\Delta}{(1-\alpha)^3}+0.1\cdot \frac{\mu\cdot\Delta}{(1-\alpha)^3} \right) \label{eq_app_Delta_lower_bins}\\
& \leq \frac{1.1}{C_{1}^2}\cdot\frac{\mu}{(1-\alpha)^3}\\
& \leq \frac{1.1}{C_{1}^2}\cdot\frac{\mu_{\text{upper}}}{(1-\alpha_{\text{upper}})^3}\\
&\leq \gamma
\end{align}
where \eqref{eq_app_Chebishev} is the consequence of Chebyshev's inequality, \eqref{eq_app_var_bound} follows from the expression for $\sigma^{2}(\Delta)$ in \eqref{eq_stationary_var_hawkes} and \eqref{eq_app_Delta_lower_bins} follows from the lower bound on $\Delta$ from the statement of this lemma.
\end{proof}

\subsection{Sensitivity analysis}%Sample variance sensitivity: proof Lemma \ref{lemma_var_sensitivity}} \label{appendix_var_sensitivity}
\label{appendix_sensitivity_proofs}

\begin{lemma}\label{lem_app_sensi_mean_aux}
Consider count series $\{Y_i(\Delta)\}_{i=1}^K$ associated with a Hawkes process $\mbf H$ defined by intensity function (\ref{hawkes_def}). Suppose that the maximum number of correlated events in the Hawkes process is $B$. Then, the maximal amount of change in the sample mean $\hat\eta$ of the counts is at most $\frac{B}{K}$. 
\end{lemma}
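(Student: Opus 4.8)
The plan is to exploit the fact that the sample mean of the count series is, up to the fixed normalization $1/K$, nothing but the total number of events observed in $[0,T]$. First I would write $\hat\eta = \frac{1}{K}\sum_{i=1}^{K} Y_i(\Delta)$ and use the telescoping identity
$\sum_{i=1}^{K} Y_i(\Delta) = \sum_{i=1}^{K} \big(\mbf N(i\Delta) - \mbf N((i-1)\Delta)\big) = \mbf N(K\Delta) - \mbf N(0) = \mbf N(T)$,
so that $\hat\eta = \mbf N(T)/K$ depends on the input stream only through the total event count $\mbf N(T)$.

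Next I would compare two neighboring sequences $S(T)$ and $S_{-j}(T)$ in the sense of Definition \ref{def_neighbouring_sequences}: they differ exactly in the events belonging to the cluster of event $j$. By the hypothesis that the maximum number of correlated events is $B$, that cluster contains at most $B$ events, so passing from $S$ to $S_{-j}$ alters the stream by at most $B$ events. Writing $Y_i$ and $Y_i'$ for the count series associated with the two sequences, each affected event lies in exactly one bin, so $\sum_{i=1}^{K} |Y_i - Y_i'|$ is precisely the number of events by which the two streams differ, hence at most $B$.

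Finally I would bound the change in the sample mean directly:
$|\hat\eta - \hat\eta'| = \frac{1}{K}\big| \sum_{i=1}^{K} Y_i - \sum_{i=1}^{K} Y_i' \big| \le \frac{1}{K}\sum_{i=1}^{K} |Y_i - Y_i'| \le \frac{B}{K}$,
which is exactly the claimed sensitivity of $\hat\eta$. Taking the maximum over neighboring sequences gives the stated bound.

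There is essentially no analytic obstacle here; the only points that merit care are that the neighboring relation of Definition \ref{def_neighbouring_sequences} removes at most $B$ events (so the bound on the number of \emph{correlated} events translates into a bound on the change in the \emph{total} count), and that the telescoping collapses all per-bin discrepancies into a single total-count difference. This lemma is the easy companion to the sample-variance sensitivity bound, where the genuinely nontrivial step—controlling the cumulative deviation $\sum_i |Y_i - \mathbb{E}(Y_i)|$—is supplied by Lemma \ref{lemma_high_prob_bins_event}.
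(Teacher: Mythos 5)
Your argument is correct and is essentially the paper's own proof: the paper likewise distributes the $B$ added/removed events as nonnegative integers $b_1+\dots+b_K=B$ across the bins and observes that the sample mean shifts by exactly $\frac{B}{K}$, which is the same triangle-inequality computation you perform (your telescoping remark that $\hat\eta = \mbf N(T)/K$ is a pleasant but unnecessary extra). No gaps; this matches the intended one-line proof.
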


\begin{proof}
For non negative integers $b_{1},\dots, b_{K}$ such that $b_{1}+\dots+b_{K}=B$, we have
\begin{align}
    \left\vert \frac{1}{K}\sum_{i=1}^{K}{Y_{i}}-\frac{1}{K}\sum_{i=1}^{K}({Y_{i}+b_{i}})\right\vert = \frac{B}{K}.
\end{align}

\end{proof}

\begin{lemma} \label{lemma_aux_var_sensitivity}
Consider count series $\{Y_i(\Delta)\}_{i=1}^K$ associated with a Hawkes process $\mbf H$ defined by intensity function (\ref{hawkes_def}). Suppose that the maximum size of groups of correlated events in the Hawkes process is $B$. For non negative integers $y_{1},\dots, y_{K}$ and $b_{1},\dots, b_{K}$ corresponding to $K$ bins, such that $b_{1}+\dots+ b_{K}=B$, and a fixed $\Delta>0$, let the following hold for any  $B$ bins
\begin{align} \label{eq_app_B_bins_discr_cond}
\sum_{i=1}^{B}\left\vert y_{i}-\bar{y}_{K}\right\vert \leq{B^{3/2}}\sqrt{\Delta}\cdot{C_{1}},
\end{align}
where $\bar{y}_{K}$ denotes mean of $y_{i}$s. For the difference between sample variance of $y_{1},\dots, y_{K}$ and $y_{1}+b_{1},\dots, y_{K}+b_{K}$, we have
\begin{align}
     &\left\vert \frac{1}{K-1}\sum_{i=1}^{K}\left(y_{i}+b_{i}-\bar{y}_{K}-\frac{B}{K}\right)^2 - \frac{1}{K-1}\sum_{i=1}^{K} \left(y_{i}-\bar{y}_{K}\right)^2  \right\vert \\
     & \leq \frac{B^{2}}{K} +\frac{2B^{3/2}\sqrt{\Delta}C_{1}}{K-1}
\end{align}
\end{lemma}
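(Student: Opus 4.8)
The plan is to expand the two sample-variance expressions directly and track the difference term by term. Writing $\bar{y}_K$ for the mean of the $y_i$ and noting that the perturbed sequence $y_i + b_i$ has mean $\bar{y}_K + B/K$, each summand on the left is of the form $\left(y_i + b_i - \bar{y}_K - \tfrac{B}{K}\right)^2 - \left(y_i - \bar{y}_K\right)^2$. Setting $d_i \coloneqq b_i - \tfrac{B}{K}$ and $u_i \coloneqq y_i - \bar{y}_K$, this is $(u_i + d_i)^2 - u_i^2 = 2 u_i d_i + d_i^2$. So the whole difference, before taking absolute values, equals $\frac{1}{K-1}\sum_{i=1}^K \left(2 u_i d_i + d_i^2\right)$, and I would bound the two pieces $\sum_i d_i^2$ and $\sum_i u_i d_i$ separately by the triangle inequality.

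For the quadratic piece, I would observe $\sum_{i=1}^K d_i^2 = \sum_{i=1}^K \left(b_i - \tfrac{B}{K}\right)^2 = \sum_i b_i^2 - \tfrac{B^2}{K}$. Since the $b_i$ are nonnegative integers summing to $B$, we have $\sum_i b_i^2 \le \left(\sum_i b_i\right)^2 = B^2$, hence $\sum_i d_i^2 \le B^2 - \tfrac{B^2}{K} = B^2\cdot\tfrac{K-1}{K}$, which after dividing by $K-1$ contributes exactly the $\tfrac{B^2}{K}$ term. For the cross piece, I would use $|d_i| \le |b_i| + \tfrac{B}{K}$ together with the fact that the $b_i$ are supported on at most $B$ bins, so that $\sum_i |u_i d_i| \le \sum_{i : b_i > 0 \text{ or contributing}} |u_i|\,|d_i|$ can be controlled by the hypothesis \eqref{eq_app_B_bins_discr_cond}, namely $\sum_{i=1}^B |y_i - \bar{y}_K| \le B^{3/2}\sqrt{\Delta}\, C_1$ over any $B$ bins. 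The cleanest route is to bound $\sum_i |u_i d_i|$ by $\left(\max_i |d_i|\right)\sum_{i \in S}|u_i|$ where $S$ is the relevant set of at most $B$ bins, and use $|d_i| \le B$; this yields $\frac{2}{K-1}\sum_i |u_i d_i| \le \frac{2}{K-1}\cdot B^{3/2}\sqrt{\Delta}\,C_1$, matching the second term.

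The main subtlety I expect is getting the combinatorial bookkeeping exactly right so the constants line up, in particular making the cross term land on $\tfrac{2 B^{3/2}\sqrt{\Delta} C_1}{K-1}$ rather than something larger. The hypothesis \eqref{eq_app_B_bins_discr_cond} is stated as a bound over \emph{any} $B$ bins, so I must be careful that the set of bins appearing in the cross term—those where $b_i \neq 0$, which number at most $B$ since the $b_i$ are nonnegative integers summing to $B$—is indeed a set to which the hypothesis applies, and that the factor $|d_i| \le B$ combines correctly with the $\sqrt{B}$ coming from the $B^{3/2}$ bound to avoid overcounting. Once both pieces are bounded, I would assemble them via the triangle inequality $\left|\frac{1}{K-1}\sum_i (2 u_i d_i + d_i^2)\right| \le \frac{2}{K-1}\sum_i |u_i d_i| + \frac{1}{K-1}\sum_i d_i^2$ to obtain the claimed bound $\frac{B^2}{K} + \frac{2 B^{3/2}\sqrt{\Delta} C_1}{K-1}$, completing the proof.
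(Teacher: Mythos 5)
Your decomposition into $d_i = b_i - \tfrac{B}{K}$, $u_i = y_i - \bar{y}_K$ and your treatment of the quadratic piece are correct; in fact $\sum_i d_i^2 = \sum_i b_i^2 - \tfrac{B^2}{K} \le B^2 - \tfrac{B^2}{K} = B^2\cdot\tfrac{K-1}{K}$ is a slightly cleaner route to the $\tfrac{B^2}{K}$ term than the paper's own computation. The gap is in the cross term, and it is precisely the point you flagged as a ``subtlety'' without resolving it. Two things go wrong. First, restricting $\sum_i |u_i d_i|$ to a set $S$ of at most $B$ bins is not legitimate for the $d_i$'s: $d_i = b_i - \tfrac{B}{K}$ is nonzero on \emph{every} bin, not only on the at most $B$ bins where $b_i>0$. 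The correct move is to use the exact cancellation $\sum_{i=1}^{K} u_i = 0$ \emph{before} taking absolute values, which eliminates the $-\tfrac{B}{K}$ contribution entirely and reduces the cross term to exactly $\tfrac{2}{K-1}\sum_i b_i u_i$; this is what the paper does. Second, even after that reduction, your proposed bound $\sum_i b_i|u_i| \le \left(\max_i b_i\right)\sum_{i\in S}|u_i| \le B\cdot B^{3/2}\sqrt{\Delta}\,C_1$ overshoots the target $B^{3/2}\sqrt{\Delta}\,C_1$ by a factor of $B$, and since $b_i$ can genuinely equal $B$ there is no bookkeeping that makes $\max_i |d_i|\le B$ ``combine with the $\sqrt{B}$'' to recover the right exponent.

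The way to close the cross term is to read $\sum_i b_i|u_i|$ as a sum of $B$ summands $|u_{i}|$, one per added event, with each bin repeated according to its multiplicity $b_i$, and apply \eqref{eq_app_B_bins_discr_cond} to that collection of $B$ bins. (Strictly, \eqref{eq_app_B_bins_discr_cond} as written concerns $B$ distinct bins; the repetition is harmless once one notes that the proof of Lemma~\ref{lemma_high_prob_bins_event} actually establishes the per-bin bound $|u_i|\le\sqrt{B\Delta}\,C_1$, from which $\sum_i b_i|u_i|\le B\cdot\sqrt{B\Delta}\,C_1 = B^{3/2}\sqrt{\Delta}\,C_1$ is immediate.) Without the cancellation step and this multiplicity reading, your argument as stated yields $\tfrac{B^2}{K} + O\!\left(\tfrac{B^{5/2}\sqrt{\Delta}C_1}{K-1}\right)$ rather than the claimed bound.
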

\begin{proof}
We have

\begin{align}
    &\left| \frac{1}{K-1}\sum_{i=1}^{K}\left(y_{i}+b_{i}-\bar{y}_{K}-\frac{B}{K}\right)^2 - \frac{1}{K-1}\sum_{i=1}^{K} \left(y_{i}-\bar{y}_{K}\right)^2  \right|\nonumber\\
    &= \left| \frac{1}{K-1}\sum_{i=1}^{K} b_{i}^2 + \frac{1}{K-1}\sum_{i=1}^{K}2 b_i[y_{i}-\bar{y}_{K}]  - \frac{B^2}{K(K-1)} - \frac{B^2}{K^2} \right|\nonumber\\
    & \leq \left\vert \frac{1}{K-1}\sum_{i=1}^{K} b_{i}^2  - \frac{B^2}{K(K-1)} - \frac{B^2}{K^2} \right\vert + \frac{1}{K-1}\sum_{i=1}^{K}2b_{i}\left\vert y_{i}-\bar{y}_{K}\right\vert\\
    & \leq \frac{B^{2}}{K-1}- \frac{B^2}{K(K-1)} - \frac{B^2}{K^2}+ \frac{1}{K-1}\sum_{i=1}^{K}2b_{i}\left\vert y_{i}-\bar{y}_{K}\right\vert\\
    & \leq \frac{B^{2}}{K}+ \frac{1}{K-1}\sum_{i=1}^{K}2b_{i}\left\vert y_{i}-\bar{y}_{K}\right\vert\\
    & \leq \frac{B^{2}}{K}+\frac{2B^{3/2}\sqrt{\Delta}C_{1}}{K-1},
\end{align}
where the last inequality follows from \eqref{eq_app_B_bins_discr_cond}.
\end{proof}

\begin{lemma}\label{lem_app_var_sensitivity_aux}
Consider count series $\{Y_i(\Delta)\}_{i=1}^K$ associated with a Hawkes process $\mbf H$ with parameters $\mu_{\text{lower}}<\mu<\mu_{\text{upper}}$ and $\alpha^{\text{lower}}<\alpha<\alpha^{\text{upper}}$ Suppose that the maximum number of correlated events in the Hawkes process is $B$. Then, with probability at least $1-\gamma$, the maximal amount of change in the sample variance is upper bounded by $\frac{B^2}{K} + \frac{2B^{3/2}\sqrt{\Delta}C_{1}}{(K-1)}$ where $C_{1} = \sqrt{\frac{1.1\cdot\mu_{\text{upper}}}{(1-\alpha_{\text{upper}})^3}\cdot\frac{1}{\gamma}}.$
\end{lemma}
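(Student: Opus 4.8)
The plan is to combine the deterministic sensitivity bound of Lemma~\ref{lemma_aux_var_sensitivity} with the high-probability concentration bound of Lemma~\ref{lemma_high_prob_bins_event}, which together dictate the structure of this proof. First I would fix the realized count series $\{Y_i(\Delta)\}_{i=1}^K$ and note that passing to a neighboring sequence (removing or adding the at most $B$ events of a single cluster) corresponds, at the level of the counts, to perturbing the bins by nonnegative integers $b_1,\dots,b_K$ with $b_1+\cdots+b_K=B$. Lemma~\ref{lemma_aux_var_sensitivity} then shows that \emph{on the event} that the realized counts obey the discrepancy condition \eqref{eq_app_B_bins_discr_cond}, the induced change in sample variance is at most $\frac{B^2}{K}+\frac{2B^{3/2}\sqrt{\Delta}C_1}{K-1}$, and this bound holds uniformly over all admissible perturbations $b_1,\dots,b_K$. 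Taking the maximum over neighboring sequences, the sensitivity is therefore deterministically controlled by the claimed quantity on that event.

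Next I would lower bound the probability of the discrepancy event. This is precisely the content of Lemma~\ref{lemma_high_prob_bins_event}: provided the bin-size condition $\Delta>10\cdot\frac{\alpha^2}{2(1-\alpha)}$ holds, any fixed collection of $B$ bins satisfies $\sum_{i=1}^{B}|Y_i-\mathbb{E}(Y_i)|<B^{3/2}\sqrt{\Delta}\,C_1$ with probability at least $1-\gamma$, where $C_1$ arises from a Chebyshev estimate fed by the stationary variance \eqref{eq_stationary_var_hawkes} and the parameter upper bounds. Combining the two steps gives the result: with probability at least $1-\gamma$ the discrepancy condition holds, and on that event the deterministic bound of Lemma~\ref{lemma_aux_var_sensitivity} applies, yielding the stated upper bound on the variance sensitivity.

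The hard part will be reconciling the two lemmas, which are phrased relative to different centerings: Lemma~\ref{lemma_high_prob_bins_event} controls $\sum_i |Y_i-\mathbb{E}(Y_i)|$ about the population mean $\eta$, while the hypothesis \eqref{eq_app_B_bins_discr_cond} of Lemma~\ref{lemma_aux_var_sensitivity} is about the sample mean $\bar Y_K$. I would close this gap via the triangle inequality $|Y_i-\bar Y_K|\le |Y_i-\eta|+|\eta-\bar Y_K|$, using that $|\eta-\bar Y_K|$ is of order $1/K$ (controlled by the mean-concentration implicit in Lemma~\ref{lem_app_sensi_mean_aux} and stationarity) so that the extra term is lower order and can be absorbed into the constant; alternatively one re-runs the Chebyshev argument of Lemma~\ref{lemma_high_prob_bins_event} directly about $\bar Y_K$. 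Verifying that the bin-size hypothesis of Lemma~\ref{lemma_high_prob_bins_event} is in force is the only other bookkeeping item; everything else is a direct composition of the two preceding lemmas.
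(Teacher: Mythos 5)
Your proposal is correct and follows essentially the same route as the paper: the paper's own proof is a two-line composition of Lemma~\ref{lemma_high_prob_bins_event} (the high-probability discrepancy bound) with Lemma~\ref{lemma_aux_var_sensitivity} (the deterministic sensitivity bound conditional on that discrepancy event). Your observation about the centering mismatch --- Lemma~\ref{lemma_high_prob_bins_event} controls $\sum_i \left\vert Y_i - \mathbb{E}(Y_i)\right\vert$ about the population mean, whereas hypothesis \eqref{eq_app_B_bins_discr_cond} of Lemma~\ref{lemma_aux_var_sensitivity} is stated about the sample mean $\bar{y}_K$ --- is a genuine subtlety that the paper's proof silently elides, so your plan to close that gap via the triangle inequality (or by re-running the Chebyshev argument centered at $\bar{y}_K$) makes your write-up more careful than the original.
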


\begin{proof}
By Lemma \ref{lemma_high_prob_bins_event}, with probability at least $1-\gamma$ for any set of $B$ bins we have $\sum_{i=1}^{B}\left\vert Y_{i}-\mathbb{E}(Y_{i})\right\vert < B^{3/2}\sqrt{\Delta}\cdot{C_{1}}$. Thus, as a consequence of Lemma \ref{lemma_aux_var_sensitivity}, with probability at least $1-\gamma$, the maximal amount of change in the sample variance is upper bounded by $\frac{B^2}{K} + \frac{2B^{3/2}\sqrt{\Delta}C_{1}}{(K-1)}$.
\end{proof}

\begin{proof}[Proof of Lemma \ref{lemma_sensitivity}]
Lemma \ref{lemma_sensitivity} combines statements of Lemma \ref{lem_app_sensi_mean_aux} and Lemma \ref{lem_app_var_sensitivity_aux} from this section. See corresponding proofs.
\end{proof}

%\begin{lemma} \label{lemma_var_sensitivity}
%\For non negative integers $y_{1},\dots, y_{K}$ and $b_{1},\dots, b_{K}$ corresponding to $K$ intervals, such that $b_{1}+\dots+ b_{K}=B$, and a fixed $\Delta>0$, let the following hold for any  $B$ bins
%\\begin{align} \label{eq_upper_bound_mean_discrepancy_B_bins}
%\\sum_{i=1}^{B}\left\vert y_{i}-\bar{y}_{K}\right\vert \leq{B^{3/2}}\sqrt{\Delta}\cdot{C_{1}},
%\\end{align}
%\where $\bar{y}_{K}$ denotes mean of $y_{i}$s. We have the following bound on the scaled discrepancy between sample variance of $y_{1},\dots, y_{K}$ and $y_{1}+b_{1},\dots, y_{K}+b_{K}$
%\\begin{align}
%\\begin{aligned} \label{eq_bound_var_var_noisy}
%\    \left\vert \frac{1}{\Delta}\hat{\sigma}^2- \frac{1}{\Delta}\hat{\sigma}_{-B}^2\right\vert
%\    \leq \frac{B^{2}}{K\cdot\Delta}+\frac{2B^{3/2}\sqrt{\Delta}C_{1}}{(K-1)\cdot\Delta}.
%\\end{aligned}
%\\end{align}
%\\end{lemma}

\begin{corollary}
For $B=C_{2}\log T$, the upper bound in  Lemma \ref{lem_app_var_sensitivity_aux} becomes
\begin{align}
    \frac{C_{2}^{2}(\log T)^2}{K}+\frac{2C_{2}^{3/2}(\log T)^{3/2}\sqrt{\Delta}C_{1}}{(K-1)} = \frac{C_{2}^{2}(\log T)^2 + 2C_{2}^{3/2}C_{1}\cdot\frac{K}{K-1}(\log T)^{3/2}\sqrt{\Delta}}{K}
\end{align}
\end{corollary}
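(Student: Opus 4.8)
The plan is to verify the claim by direct substitution into the sensitivity bound already established in Lemma~\ref{lem_app_var_sensitivity_aux}, followed by an elementary algebraic rearrangement; no probabilistic or analytic machinery is needed beyond what that lemma supplies.

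First I would recall that Lemma~\ref{lem_app_var_sensitivity_aux} gives, with probability at least $1-\gamma$, the upper bound $\frac{B^2}{K} + \frac{2B^{3/2}\sqrt{\Delta}C_{1}}{K-1}$ on the maximal change in the sample variance when the $B$ correlated events are removed, with $C_{1} = \sqrt{\frac{1.1\cdot\mu_{\text{upper}}}{(1-\alpha_{\text{upper}})^3}\cdot\frac{1}{\gamma}}$. Setting $B = C_{2}\log T$ and using $B^2 = C_{2}^2(\log T)^2$ together with $B^{3/2} = C_{2}^{3/2}(\log T)^{3/2}$, the two summands become $\frac{C_{2}^{2}(\log T)^2}{K}$ and $\frac{2C_{2}^{3/2}(\log T)^{3/2}\sqrt{\Delta}C_{1}}{K-1}$ respectively, which is exactly the left-hand side of the displayed identity.

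To recover the right-hand side I would place both terms over the common denominator $K$, rewriting the second fraction through $\frac{1}{K-1} = \frac{1}{K}\cdot\frac{K}{K-1}$, so that the numerator collects to $C_{2}^{2}(\log T)^2 + 2C_{2}^{3/2}C_{1}\cdot\frac{K}{K-1}(\log T)^{3/2}\sqrt{\Delta}$. Since this is a purely algebraic identity, there is no genuine obstacle in the argument; the only point worth a remark is that the qualifier \emph{``with probability at least $1-\gamma$''} is inherited verbatim from Lemma~\ref{lem_app_var_sensitivity_aux}, because the deterministic substitution $B = C_{2}\log T$ does not interact with the high-probability event on which that sensitivity bound holds.
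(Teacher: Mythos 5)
Your proposal is correct and matches the paper's treatment: the corollary is obtained by the direct substitution $B = C_{2}\log T$ into the bound of Lemma~\ref{lem_app_var_sensitivity_aux} followed by the trivial rewriting $\frac{1}{K-1} = \frac{1}{K}\cdot\frac{K}{K-1}$, which the paper leaves implicit. Your remark that the high-probability qualifier is inherited unchanged is also accurate.
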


%\newpage
\subsection{Random Differential Privacy}\label{appendix_random_DP}

%\begin{align}
%    \hat{\alpha}_{\text{private}} = 1-\sqrt{\hat{\eta}_{\text{private}}/\left(\frac{\hat{\sigma}^{2}_{\text{private}}}{\Delta}\right)}, \qquad
%    \hat{\mu}_{\text{private}} = \hat{\eta}_{\text{private}}\cdot(1-\hat{\alpha}_{\text{private}}).
%\end{align}

\begin{proof}[Proof of Lemma \ref{lemma_random_DP_mean_var}]
%Let us recall Corollary 1, which says that as a consequence of Lemma \ref{lemma_trees_bound} and Lemma \ref{lemma_sensitivity}, 
Recall from Corollary \ref{cor_sensitivity_relation_unaware} that for $0<\gamma\leq 1/2$ and $T\geq \left( {\mu\cdot e^2}/{\gamma} \right)^{5/2}$, the sensitivity of the sample mean and the sample variance are respectively $\frac{C_{2}\log T}{K}$ and $\frac{(C_{2}\log T)^2}{K} + \frac{2(C_{2}\log T)^{3/2}\sqrt{\Delta}C_{1}}{(K-1)}$, with probability at least $1-2\gamma$. Thus, with probability greater than $1-2\gamma$ over the realizations of the generating Hawkes process, $\hat{\eta}_{\text{private}}$ and $\hat{\sigma}^2_{\text{private}}$ are $\epsilon$-differentially private by the properties of Laplace mechanism \cite{dwork2014algorithmic}. %Having in mind Definition \ref{def_random_DP}, they are $(2\gamma, \epsilon)$-randomly differentially private.
\end{proof}

\begin{proof}[Proof of Theorem \ref{thm_alpha_mu_random_DP}]
As a consequence of Lemma \ref{lemma_random_DP_mean_var} and the standard composition theorem \cite{kairouz2015composition}, with probability at least $1-2\gamma$, $(\hat{\eta}_{\text{private}},\hat{\sigma}^2_{\text{private}})$ is $2\epsilon$-differentially private. By post processing immunity \cite{dwork2014algorithmic}, it follows that  $\hat{\alpha}_{\text{private}}$ and $\hat{\mu}_{\text{private}}$ are $2\epsilon$-differentially private, with probability at least $1-2\gamma$ over the realizations of the generating Hawkes process. Bearing in mind Definition \ref{def_random_DP}, they are $(2\gamma, 2\epsilon)$-randomly differentially private.
\end{proof}

\subsection{Non private estimators}\label{appendix_non_private_estimators}
\subsubsection{Notion of strongly mixing coefficient}

For a probability space $(\Omega, \mathcal{F}, \mathbb{P})$ and $\mathcal{A}$, $\mathcal{B}$ two sub-sigma algebras of $\mathcal{F}$, the strong mixing coefficient is defined \cite{cheysson2021strong} by
\begin{align}
    \rho(\mathcal{A}, \mathcal{B}) = \sup \{\left\vert \mathbb{P}(A\cap B) - \mathbb{P}(A)\mathbb{P}(B) \right\vert : A\in\mathcal{A}, B\in\mathcal{B}\}.
\end{align}

For a point process $\mbf N$ on $\mathbb{R}$, the strong mixing coefficient takes the form
\begin{align}
    \rho_{\mbf N}(r) := \sup_{t\in\mathbb{R}}\rho(\mathcal{E}_{-\infty}^{t}, \mathcal{E}_{t+r}^{\infty}),
\end{align}
where $\mathcal{E}_{a}^{b}$ stands for sigma-algebra generated by the cylinder sets on $(a,b]$.

For a given sequence $(X_{k})_{k\in\mathbb{Z}}$, strong mixing coefficient is defined as
\begin{align}
    \rho_{\mbf N}(r) := \sup_{n\in\mathbb{Z}}\rho(\mathcal{F}_{-\infty}^{t}, \mathcal{F}_{n+r}^{\infty}),
\end{align}
where $\mathcal{F}_{a}^{b}$ is the sigma algebra generated by $(X_{k})_{a\leq k \leq b}$.

\subsubsection{Strong mixing property of Hawkes process}

In this section we provide a brief overview on the results from \cite{cheysson2021strong} on the strong mixing property of Hawkes process. Their setting studies a general Hawkes process with intensity given by
\begin{align}
   \lambda^{*}_{t} = \mu +\int_{-\infty}^{t}h(t-s)d\mbf N(s) = \mu+\sum_{t_{i}<t} h(t-t_{i}).
\end{align}

Note that Hawkes process (\ref{hawkes_def}) we study in this paper is a special case with exponential impact function i.e. $h(t) = \alpha e^{-\beta t}$ for $\beta=1$.

\begin{lemma}[Theorem 1 from \cite{cheysson2021strong}]\label{lemma_app_strong_mixing_hawkes}
Let $\mathbf{H}(t)$ be a Hawkes process on $\mathbb{R}$ with reproduction function $h(t) = \theta \tilde{h}(t) $, where $\theta=\int_{\mathbb{R}}h(t)<1$ and $\int_{\mathbb{R}}\tilde{h}(t) = 1$. Suppose that there exists $\upsilon>0$ such that the distribution kernel $\tilde{h}$ has a finite moment of order $\upsilon+1$:
\begin{align}\label{eq_app_kernel_moments_condition}
    \int_{\mathbb{R}} t^{1+\upsilon}\tilde{h}(t)dt<\infty.
\end{align}
Then $\mathbf{H}(t)$ is strongly mixing and 
\begin{align}
    \rho_{\mathbf{H}}(r) = O\left(\frac{1}{r^{\upsilon}}\right).
\end{align}
\end{lemma}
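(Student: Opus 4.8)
Since this statement is quoted as Theorem 1 of \cite{cheysson2021strong}, in the paper itself the proof is simply deferred to that reference; nevertheless, the natural route is through the Poisson cluster (immigration–birth) representation of the stationary subcritical Hawkes process, which is available precisely because $\theta = \int_{\mathbb{R}} h(t)\,dt < 1$. Under this representation the immigrants arrive according to a homogeneous Poisson process of rate $\mu$, and each immigrant at time $s$ spawns an independent cluster, namely a Galton–Watson tree whose delays between a parent and an offspring follow the normalized kernel $\tilde h$ and whose per-node offspring count is Poisson with mean $\theta$. The decisive observation is that, for the two sub-$\sigma$-algebras $\mathcal{E}_{-\infty}^{t}$ and $\mathcal{E}_{t+r}^{\infty}$, the only source of statistical dependence across the temporal gap of length $r$ is a cluster that deposits points on both sides of the gap; clusters lying entirely in the past or entirely in the future contribute mutually independent randomness. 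I would therefore first make this precise by constructing a coupling in which all such \emph{straddling} clusters are suppressed, show that the resulting pair of $\sigma$-algebras is exactly independent, and conclude that $\rho_{\mathbf H}(r)$ is bounded by the probability that at least one straddling cluster occurs.

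Next I would bound the expected number of straddling clusters. A cluster whose immigrant arrives at time $s \le t$ can contribute a point at or after $t+r$ only if some descendant has age (elapsed time from the root) at least $t+r-s$. Using the first-moment measure of a cluster, whose age density is the renewal-type function $\Psi = \sum_{n\ge 1} h^{*n}$ (with $h^{*n} = \theta^{n}\tilde h^{*n}$), and integrating the immigrant times against the Poisson rate $\mu$, the expected number of straddling clusters reduces to a quantity of the form $\mu \int_r^{\infty} \overline{\Psi}(u)\,du$, where $\overline{\Psi}$ denotes the tail of the age distribution of cluster points. Subcriticality guarantees $\int_0^{\infty}\Psi(u)\,du = \theta/(1-\theta) < \infty$, so the total mass is finite and this reduction is legitimate.

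The crux is to transfer the moment hypothesis \eqref{eq_app_kernel_moments_condition} from the single-generation kernel $\tilde h$ to the whole cluster. Writing $\Psi = \sum_{n\ge1}\theta^{n}\,\tilde h^{*n}$, I would use subadditivity of the $(1+\upsilon)$-moment under convolution, $\int t^{1+\upsilon}(f*g)\,dt \lesssim \int t^{1+\upsilon}f\,dt + \int t^{1+\upsilon} g\,dt$, together with the geometric weights $\theta^{n}$ coming from the subcritical branching, to show that $\Psi$ itself has a finite moment of order $1+\upsilon$. A Markov-type estimate then yields $\overline{\Psi}(u) = O\big(u^{-(1+\upsilon)}\big)$, whence $\int_r^{\infty}\overline{\Psi}(u)\,du = O(r^{-\upsilon})$ and therefore $\rho_{\mathbf H}(r) = O(r^{-\upsilon})$, with the decay exponent inherited directly from the kernel's tail.

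The main obstacle I expect is twofold. First, making the decoupling rigorous: one must verify that suppressing straddling clusters produces a genuinely independent pair of $\sigma$-algebras and that the induced total-variation error is controlled exactly by the straddling probability, which requires treating the cluster representation carefully as a marked Poisson process and handling measurability and stationarity of $\mathbf H(t)$. Second, controlling the infinite convolution series for $\Psi$ uniformly: the moment subadditivity must be summed against $\theta^{n}$ without the per-generation constants accumulating, and one must ensure the tree is almost surely finite so that $\Psi$ is well defined. These are the steps where the interplay between the moment condition on $\tilde h$ and the subcriticality $\theta<1$ is essential.
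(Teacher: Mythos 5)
The paper offers no proof of this lemma --- it is imported verbatim as Theorem~1 of Cheysson and Lang \cite{cheysson2021strong}, so there is nothing internal to compare against. Your sketch faithfully reconstructs the argument of that reference: decoupling via the Poisson cluster representation, bounding the mixing coefficient by the probability of a cluster straddling the gap, and transferring the $(1+\upsilon)$-moment from $\tilde h$ to the cluster-age distribution through the series $\sum_{n\ge 1}\theta^{n}\tilde h^{*n}$, where the power-mean bound $\mathbb{E}\left[(T_1+\cdots+T_n)^{1+\upsilon}\right]\le n^{\upsilon}\sum_{i}\mathbb{E}\left[T_i^{1+\upsilon}\right]$ keeps the per-generation constants polynomial in $n$ and hence dominated by the geometric weights $\theta^{n}$.
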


\begin{corollary}[Corollary 1 from \cite{cheysson2021strong}]
Under conditions of Lemma \ref{lemma_app_strong_mixing_hawkes}, counting series of Hawkes process $\{Y_{i}(\Delta) = \mbf N(i\Delta)- \mbf N((i-1)\Delta) \}_{i\in\mathbb{Z}}$ where $\mbf N(t)$ is the count process associated with $\mbf H(t)$, is strongly mixing and
\begin{align}
    \rho_{Y}(r) = O\left(\frac{1}{r^{\upsilon}}\right).
\end{align}
\end{corollary}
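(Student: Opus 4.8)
The plan is to exploit the fact that the counting series is a deterministic measurable functional of the underlying point process, so that the strong mixing of $\mbf H$ transfers to the counts $(Y_i)_{i\in\mathbb{Z}}$ with only a harmless one-bin loss in the time gap. First I would recall the two notions of mixing coefficient introduced above: for the point process the relevant sub-sigma-algebras are $\mathcal{E}_{-\infty}^{t}$ and $\mathcal{E}_{t+r}^{\infty}$ (generated by cylinder sets on the indicated half-lines), while for the sequence they are $\mathcal{F}_{-\infty}^{n} = \sigma(Y_i : i \le n)$ and $\mathcal{F}_{n+r}^{\infty} = \sigma(Y_i : i \ge n+r)$.

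The key observation is that each count $Y_i(\Delta) = \mbf N(i\Delta) - \mbf N((i-1)\Delta)$ is measurable with respect to $\mathcal{E}_{(i-1)\Delta}^{i\Delta}$, i.e. it is determined by the point process on the single bin $((i-1)\Delta, i\Delta]$. Aggregating bins, I would establish the two inclusions
\[
\mathcal{F}_{-\infty}^{n} \subseteq \mathcal{E}_{-\infty}^{n\Delta}, \qquad \mathcal{F}_{n+r}^{\infty} \subseteq \mathcal{E}_{(n+r-1)\Delta}^{\infty},
\]
since every past count $Y_i$ with $i \le n$ is a function of the process on $(-\infty, n\Delta]$, and every future count $Y_i$ with $i \ge n+r$ is a function of the process on $((n+r-1)\Delta, \infty)$.

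Next I would invoke the elementary monotonicity of the mixing coefficient: shrinking either argument can only shrink the supremum in its definition, so $\rho(\mathcal{A}', \mathcal{B}') \le \rho(\mathcal{A}, \mathcal{B})$ whenever $\mathcal{A}' \subseteq \mathcal{A}$ and $\mathcal{B}' \subseteq \mathcal{B}$. Applying this to the inclusions above and reading off the time gap between $n\Delta$ and $(n+r-1)\Delta$ as $(r-1)\Delta$, then taking the supremum over $n \in \mathbb{Z}$, I obtain
\[
\rho_Y(r) \le \rho_{\mbf H}\bigl((r-1)\Delta\bigr).
\]
Lemma~\ref{lemma_app_strong_mixing_hawkes} gives $\rho_{\mbf H}(s) = O(s^{-\upsilon})$, and since $\Delta$ is a fixed positive constant with $(r-1)^{\upsilon} \sim r^{\upsilon}$ as $r \to \infty$, this yields $\rho_Y(r) = O(r^{-\upsilon})$, which is the claim.

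I do not expect a genuine obstacle here, as this is the standard statement that mixing is preserved under coarsening by a measurable map. The only point that needs care is the bookkeeping of the one-bin loss in the gap (the appearance of $(r-1)\Delta$ rather than $r\Delta$), which does not affect the polynomial rate. I would also note in passing that the hypothesis \eqref{eq_app_kernel_moments_condition} of Lemma~\ref{lemma_app_strong_mixing_hawkes} is automatic for the exponential kernel $h(t) = \alpha e^{-\beta t}$, whose normalized version has finite moments of every order, so the corollary transfers whatever rate $\upsilon$ the underlying process enjoys.
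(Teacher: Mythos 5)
Your argument is correct: the inclusions $\mathcal{F}_{-\infty}^{n}\subseteq\mathcal{E}_{-\infty}^{n\Delta}$ and $\mathcal{F}_{n+r}^{\infty}\subseteq\mathcal{E}_{(n+r-1)\Delta}^{\infty}$, the monotonicity of the strong mixing coefficient under sub-sigma-algebras, and the resulting bound $\rho_Y(r)\leq\rho_{\mathbf H}\bigl((r-1)\Delta\bigr)=O(r^{-\upsilon})$ are exactly the standard way to transfer mixing from the point process to its bin counts. Note that the paper supplies no proof of this statement at all --- it imports it verbatim as Corollary~1 of \cite{cheysson2021strong} --- so your derivation from Lemma~\ref{lemma_app_strong_mixing_hawkes} is the one the cited reference itself uses, and the one-bin bookkeeping (the gap $(r-1)\Delta$ rather than $r\Delta$) is handled correctly and is harmless for the polynomial rate.
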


\begin{corollary}\label{corr_app_strong_mixing_our_hawkes}
For exponential kernel $\tilde{h} = \beta e^{-\beta t}$, condition \eqref{eq_app_kernel_moments_condition} holds for any $\upsilon>0$ as all moments of exponential distribution are finite. Thus, for the strong mixing coefficient $\rho_{Y}(r)$ of counting series of Hawkes process defined by (\ref{hawkes_def}), we have
\begin{align}
    \rho_{Y}(r) = O\left(\frac{1}{r^{\upsilon}}\right),
\end{align}
for any $\upsilon>0$.
\end{corollary}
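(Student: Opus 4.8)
The plan is to specialize the general strong-mixing result of Lemma \ref{lemma_app_strong_mixing_hawkes} (Theorem 1 of \cite{cheysson2021strong}) and its counting-series companion (Corollary 1 of \cite{cheysson2021strong}) to the exponential reproduction kernel, so that the entire argument reduces to checking the finite-moment hypothesis \eqref{eq_app_kernel_moments_condition} for this particular $\tilde h$.

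First I would put the kernel $h(t) = \alpha e^{-\beta t}$ into the normalized form $h = \theta\tilde h$ demanded by the cited results. Setting $\theta = \int_{0}^{\infty} h(t)\,dt = \alpha/\beta$ and $\tilde h(t) = h(t)/\theta = \beta e^{-\beta t}$, one checks $\int_{0}^{\infty}\tilde h(t)\,dt = 1$, and since $\beta = 1$ and $\alpha < 1$ by assumption, that $\theta = \alpha < 1$. This confirms the subcriticality requirement $\theta < 1$ under which Lemma \ref{lemma_app_strong_mixing_hawkes} is stated, so the Hawkes process \ref{hawkes_def} fits the framework.

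Next I would verify \eqref{eq_app_kernel_moments_condition} for an arbitrary fixed $\upsilon > 0$. The integral $\int_{0}^{\infty} t^{1+\upsilon}\,\beta e^{-\beta t}\,dt$ is a Gamma integral equal to $\Gamma(2+\upsilon)/\beta^{1+\upsilon}$, which is finite for every $\upsilon > 0$ (indeed for every $\upsilon > -1$), reflecting the elementary fact that the exponential distribution has finite moments of all orders. With the hypothesis confirmed, I would invoke Lemma \ref{lemma_app_strong_mixing_hawkes} together with Corollary 1 of \cite{cheysson2021strong} to conclude $\rho_{Y}(r) = O(r^{-\upsilon})$ for that $\upsilon$; since $\upsilon$ was arbitrary, the bound holds for every $\upsilon > 0$.

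There is essentially no technical obstacle here, as the statement is a direct specialization of the cited theorem rather than a new estimate on the Hawkes dynamics. The only point worth flagging is uniformity: the implied constant in $O(r^{-\upsilon})$ is inherited from Corollary 1 of \cite{cheysson2021strong} and in general depends on $\upsilon$, potentially degrading as $\upsilon \to \infty$. Accordingly, the conclusion should be read as ``for each fixed $\upsilon > 0$ there is a constant for which the bound holds,'' and not as a single decay rate uniform in $\upsilon$. No property of the process beyond the all-moments-finite behaviour of the exponential kernel is required.
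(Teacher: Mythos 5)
Your proposal is correct and follows exactly the route the paper takes: the corollary is a direct specialization of Lemma \ref{lemma_app_strong_mixing_hawkes} and Corollary 1 of \cite{cheysson2021strong}, justified solely by the fact that the exponential kernel has finite moments of all orders. Your explicit normalization $\theta=\alpha/\beta<1$ and the Gamma-integral computation merely spell out what the paper asserts in one line, and your remark that the implied constant depends on $\upsilon$ is a fair (and correct) caveat.
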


\subsubsection{Speed of convergence in CLT for weakly dependent random variables}

In this section we state a classical result from \cite{tikhomirov1981convergence} for the speed of convergence in Central Limit Theorem (CLT) for weakly dependent random variables. Let $X_{1}, X_{2}, \dots$ be a sequence of random variables with mean zero and bounded variance. Let
\begin{align}
    F_{n}(z) = \mathbb{P}\left( \frac{1}{\sqrt{\mathbb{E}\left(\sum_{i=1}^{n}X_{i}\right)^2}} \sum_{i=1}^{n}X_{i} <z \right)
\end{align}

Let $\Phi(z)$ denote CDF of standard normal distribution i.e.
\begin{align}
    \Phi(z) = \frac{1}{\sqrt{2\pi}}\int_{-\infty}^{z}e^{-y^2/2}dy.
\end{align}

Let $\rho_{X}(r)$ denote the strong mixing coefficient of the sequence. The following result provides the speed of convergence in the CLT.

\begin{lemma}[Theorem 1 from \cite{tikhomirov1981convergence}]\label{lem_app_tikhomirov}
For a zero mean finite variance sequence $X_{1}, X_{2}, \dots$  let there exist constants $A>0$ and $a>0$ such that the inequalities 
\begin{align}\label{eq_tikhomirov_rho_cond}
    \rho(n)\leq A n^{-a(2+\delta)(1+\delta)/\delta^{2}},
\end{align}
and
\begin{align}\label{eq_app_cond_moment_delta}
    \mathbb{E}\vert X_{1}\vert^{2+\delta}<\infty,
\end{align}
hold for all $r$ and some $0<\delta\leq{1}$. Then
\begin{align}
     \mathbb{E}X_{1}^{2}+2\sum_{k=2}^{\infty}\mathbb{E}(X_{1}X_{k})<\infty,
\end{align}
and if $\mathbb{E}X_{1}^{2}+2\sum_{k=2}^{\infty}\mathbb{E}(X_{1}X_{k})>0$ then there is a constant $A_{1}$ depending just on $A$, $\delta$ and $a$ such that
\begin{align}
    \sup_{z\in\mathbb{R}}\left\vert F_{n}(z)-\Phi(z) \right\vert\leq A_{1}n^{-(\delta/2)(a-1)/(a+1)}.
\end{align}
\end{lemma}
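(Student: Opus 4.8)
The plan is to establish this Berry--Esseen-type bound through the classical big-block--small-block (Bernstein) decomposition, coupling it with a Berry--Esseen inequality for independent (non-identically distributed) summands. First I would set $S_n=\sum_{i=1}^n X_i$ and $\sigma_n^2=\mathbb{E}[S_n^2]$, and verify that the long-run variance $\sigma^2=\mathbb{E}X_1^2+2\sum_{k\geq 2}\mathbb{E}(X_1X_k)$ is finite. This follows from the Davydov--Ibragimov covariance inequality, which bounds $|\mathbb{E}(X_1X_k)|$ by a constant times $\rho(k-1)^{\delta/(2+\delta)}\big(\mathbb{E}|X_1|^{2+\delta}\big)^{2/(2+\delta)}$; the moment hypothesis \eqref{eq_app_cond_moment_delta} together with the polynomial decay \eqref{eq_tikhomirov_rho_cond} on $\rho$ then makes the covariance series summable, and under the positivity assumption one obtains $\sigma_n^2=n\sigma^2(1+o(1))$ with $\sigma^2>0$.

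Next I would partition $\{1,\dots,n\}$ into $k$ alternating ``big'' blocks of length $p$ and ``small'' blocks of length $q$, with $q\ll p$ and $k\approx n/(p+q)$, and let $\xi_1,\dots,\xi_k$ be the big-block partial sums. The central step is the approximate factorization of characteristic functions: applying the mixing bound telescopically across the $q$-sized gaps yields
\[
\Big| \mathbb{E}\exp\big(it\textstyle\sum_{j}\xi_j\big) - \prod_j \mathbb{E}\exp(it\xi_j) \Big| \leq C\, k\, \rho(q),
\]
so the big-block sum behaves like a sum of \emph{independent} random variables up to an error of order $k\rho(q)$. To the genuinely independent version I would apply the Esseen inequality for independent summands, whose error is the Lyapunov ratio $\sum_j \mathbb{E}|\xi_j|^{2+\delta}\big/\big(\sum_j \mathrm{Var}(\xi_j)\big)^{(2+\delta)/2}$; bounding block moments via Rosenthal-type moment inequalities for mixing sequences gives $\mathbb{E}|\xi_j|^{2+\delta}=O(p^{(2+\delta)/2})$, so this ratio is $O(k^{-\delta/2})$.

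Finally I would show the small blocks are asymptotically negligible: a variance computation (again via the covariance inequality) gives that their total contribution is $O(kq)$ in variance, i.e. a fraction of order $q/p$ of $\sigma_n^2$, which through the Esseen smoothing inequality and a variance-swap argument translates into a distributional error that is a fixed power of $q/p$. Collecting the three error sources --- the mixing error $k\rho(q)$, the Lyapunov error $k^{-\delta/2}$, and the small-block error --- and then choosing $p$ and $q$ as suitable powers of $n$ to balance them is where the exponent $(\delta/2)(a-1)/(a+1)$ emerges. \textbf{The main obstacle} I anticipate is exactly this optimization: the precise decay hypothesis \eqref{eq_tikhomirov_rho_cond} is calibrated so that $k\rho(q)$ decays at the same polynomial rate as $k^{-\delta/2}$ while $q/p\to 0$, and extracting the constant $A_1$ depending only on $A,\delta,a$ requires that every block-moment and covariance bound be tracked uniformly in $n$. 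The covariance and characteristic-function factorization inequalities are standard tools; reconciling the three competing rates into the single sharp exponent is the delicate part.
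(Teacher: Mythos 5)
The paper does not prove this statement at all: it is quoted verbatim as Theorem~1 of Tikhomirov \cite{tikhomirov1981convergence} and used as a black box (only its corollaries for the Hawkes count series are derived in the appendix). So there is no in-paper argument to match yours against, and for the purposes of this paper a citation suffices. Taking your sketch on its own terms, it follows the classical Bernstein big-block/small-block route: Davydov's covariance inequality for the finiteness of the long-run variance, telescoped factorization of characteristic functions across the small gaps, the Esseen inequality for the independentized big blocks, and a variance argument for the small blocks. That is a genuinely different method from Tikhomirov's actual proof, which is a Stein-type argument carried out at the level of characteristic functions: one derives an approximate differential equation $\varphi_n'(t)\approx -t\varphi_n(t)$ whose remainder is controlled by the mixing coefficients, integrates it, and passes to the Kolmogorov distance via the Esseen smoothing inequality. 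The specific exponent $(\delta/2)(a-1)/(a+1)$ and the calibration of the hypothesis $\rho(n)\leq A n^{-a(2+\delta)(1+\delta)/\delta^{2}}$ are artifacts of that optimization.

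This is also where your proposal has a genuine gap rather than a mere difference of route: you defer precisely the step that constitutes the theorem. Your three error sources are $k\rho(q)$, $k^{-\delta/2}$, and a power of $q/p$, and it is not established that they can be balanced to yield $n^{-(\delta/2)(a-1)/(a+1)}$ for every admissible $a$. For instance, matching the Lyapunov term to the target rate forces $p\asymp n^{2/(a+1)}$, and the requirement that the small-block error $(q/p)^{1/2}$ be dominated while $q\geq 1$ then imposes $a<1+2/\delta$, so the naive balance fails for large $a$ unless the small-block contribution is handled more sharply. Two further technical points are glossed over: converting the uniform characteristic-function factorization error $k\rho(q)$ into a Kolmogorov bound through Esseen smoothing requires integrating against $dt/|t|$, which diverges at the origin and needs a separate small-$|t|$ argument (or costs a logarithmic factor); and the claimed block-moment bound $\mathbb{E}|\xi_j|^{2+\delta}=O(p^{(2+\delta)/2})$ rests on a Rosenthal inequality for mixing sequences whose own hypotheses on $\rho$ must be checked against \eqref{eq_tikhomirov_rho_cond}. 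None of these is fatal to the blocking strategy in principle, but as written the proposal stops exactly where the stated rate would have to be extracted.
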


\begin{corollary}\label{cor_app_centered_mean_berry_essen}
For the counting series $\{Y_{i}(\Delta) = \mbf N(i\Delta)- \mbf N((i-1)\Delta) \}_{i\in\mathbb{Z}}$ where $\mbf N(t)$ is the count process associated with Hawkes process $\mbf H(t)$ introduced by (\ref{hawkes_def}), the sequence
\begin{align}
    \left\{\frac{Y_{i}-\eta}{\sigma}\right\}_{i= 1}^K,
\end{align}
where $\eta$ and $\sigma$ are given by \eqref{eq_stationary_mean_hawkes} and \eqref{eq_stationary_var_hawkes} is zero mean unit variance. Third moment of $Y_{i}(\Delta)$ is finite \cite{cui2020elementary} and so it follows that (\ref{eq_app_cond_moment_delta}) holds for $\delta=1$. Moreover, as a consequence of Corollary \ref{corr_app_strong_mixing_our_hawkes} for any $a>0$, we have
\begin{align}
    \rho(K)\leq K^{-6a}.
\end{align}
In particular, condition (\ref{eq_tikhomirov_rho_cond}) is fulfilled for any $a>0$, $\delta=1$ and $A=1$. Thus as a consequence of Lemma \ref{lem_app_tikhomirov}, for any $a>0$ we have
\begin{align}
    \sup_{z\in{\mathbb{R}}}\left\vert\mathbb{P}\left(\frac{\sum_{i=1}^{K}Y_{i}-\eta\cdot K}{\sqrt{K}\sigma}<z\right)-\Phi(z)\right\vert\leq K^{-\frac{1}{2}\cdot\frac{a-1}{a+1}}.
\end{align}
\end{corollary}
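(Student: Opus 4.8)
The plan is to verify the hypotheses of Tikhomirov's Berry--Esseen bound (Lemma \ref{lem_app_tikhomirov}) for the normalized sequence $X_i := (Y_i-\eta)/\sigma$ and then invoke it with $\delta=1$. First I would record that $X_i$ is zero mean and unit variance: by the stationarity assumption together with the closed forms (\ref{eq_stationary_mean_hawkes}) and (\ref{eq_stationary_var_hawkes}) we have $\E[Y_i]=\eta$ and $\mathrm{Var}(Y_i)=\sigma^2$, so $\E[X_i]=0$ and $\E[X_i^2]=1$. Since $\frac{\sum_{i=1}^{K}Y_i-\eta K}{\sqrt{K}\sigma}=\frac{1}{\sqrt{K}}\sum_{i=1}^{K}X_i$, the quantity appearing in the conclusion is exactly the normalized partial sum of the $X_i$, and the task reduces to bounding its distance from $\Phi$.

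Next I would check the two structural conditions of Lemma \ref{lem_app_tikhomirov} for $\delta=1$. The moment condition (\ref{eq_app_cond_moment_delta}) requires $\E|X_1|^{3}<\infty$; because $X_1$ is an affine function of the count $N_\Delta$ and the third moment of $N_\Delta$ in the stationary regime is finite \cite{cui2020elementary}, this holds immediately. For the mixing condition (\ref{eq_tikhomirov_rho_cond}), substituting $\delta=1$ gives the required decay exponent $a(2+\delta)(1+\delta)/\delta^2=6a$. Corollary \ref{corr_app_strong_mixing_our_hawkes} guarantees $\rho_Y(r)=O(r^{-\upsilon})$ for \emph{every} $\upsilon>0$, so for any target $a>0$ I would take $\upsilon=6a$; enlarging $\upsilon$ slightly if needed absorbs the $O$-constant and yields $\rho_Y(K)\le K^{-6a}$, i.e.\ (\ref{eq_tikhomirov_rho_cond}) with $A=1$. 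With both conditions in place, Lemma \ref{lem_app_tikhomirov} applies and delivers the rate $K^{-(\delta/2)(a-1)/(a+1)}=K^{-\frac12\frac{a-1}{a+1}}$, matching the claimed bound.

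The step requiring the most care is reconciling the normalizing constant. Lemma \ref{lem_app_tikhomirov} controls the law of $\sum_{i=1}^{K}X_i$ divided by $\sqrt{\E(\sum_{i=1}^{K}X_i)^2}$, whereas the corollary divides by $\sqrt{K}$ (the marginal standard deviation), and the two differ through the long-run variance $\sigma_\infty^2=\E[X_1^2]+2\sum_{k\ge 2}\E[X_1X_k]$. I would therefore first confirm the nondegeneracy hypothesis $\sigma_\infty^2>0$ demanded by the lemma, which holds because the summed autocovariances converge absolutely (again by the fast mixing decay of Corollary \ref{corr_app_strong_mixing_our_hawkes}) and the series is nondegenerate, and then argue that $\E(\sum_{i=1}^{K}X_i)^2=K\,\sigma_\infty^2(1+o(1))$, so the two normalizations agree up to the factor $\sigma_\infty$. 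Matching the stated $\sqrt{K}\sigma$ form then amounts to identifying this long-run variance with $\sigma^2$, equivalently to showing that the cross-bin correlations are negligible at the scale of the bound. This identification, rather than the verification of the moment and mixing hypotheses, is the genuine obstacle, and it is where the exponential kernel and the choice of bin size $\Delta$ enter.
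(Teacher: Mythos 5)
Your verification of the hypotheses of Lemma \ref{lem_app_tikhomirov} --- zero mean and unit variance of $X_i=(Y_i-\eta)/\sigma$ from stationarity and \eqref{eq_stationary_mean_hawkes}--\eqref{eq_stationary_var_hawkes}, the moment condition (\ref{eq_app_cond_moment_delta}) with $\delta=1$ via finiteness of the third stationary moment, and the mixing condition (\ref{eq_tikhomirov_rho_cond}) with $A=1$ by taking $\upsilon$ large enough in Corollary \ref{corr_app_strong_mixing_our_hawkes} --- is exactly the chain of reasoning the paper uses; the corollary is ``proved'' in the paper by precisely these three observations and nothing more.

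The normalization issue you isolate in your last paragraph is, however, a genuine gap, and it is present in the paper's own argument as much as in your proposal: Lemma \ref{lem_app_tikhomirov} controls the law of $\sum_i X_i$ divided by $\sqrt{\E\left(\sum_i X_i\right)^2}$, whereas the corollary (and everything downstream, e.g.\ Lemma \ref{lemma_mean_berry_essen}) divides by $\sqrt{K}$. For a Hawkes count series with $\alpha>0$ the bins are positively correlated, so these differ: from the exact variance formulas one gets $\mathrm{Var}\left(\sum_{i=1}^{K}Y_i\right)=\mathrm{Var}\left(\mbf N(K\Delta)\right)=\frac{\mu K\Delta}{(1-\alpha)^3}+O(1)$ while $K\sigma^2=\frac{\mu K\Delta}{(1-\alpha)^3}-K\cdot\frac{\mu\alpha(4-\alpha)}{2(1-\alpha)^4}+o(K)$, so the ratio of the two normalizers is $1+O(1/\Delta)$ rather than $1$. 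Consequently the displayed inequality is not a verbatim consequence of Lemma \ref{lem_app_tikhomirov}; one must either restate it with the partial-sum standard deviation, or add the standard rescaling argument that changing the normalizer by a factor $1+O(1/\Delta)$ perturbs the Kolmogorov distance by at most $O(1/\Delta)$ (via boundedness of the normal density), which is harmless under the large-$\Delta$ hypotheses of Theorems \ref{thm_non_private_params} and \ref{thm_private_params} but is not free at the level of the corollary as stated. Your diagnosis that this is where $\Delta$ enters is correct, and your observation that the nondegeneracy condition $\E X_1^2+2\sum_{k\ge2}\E(X_1X_k)>0$ must also be checked (it holds here since the autocovariances of the counts are nonnegative) is a point the paper skips; but neither you nor the paper actually carries out the reconciliation of the two normalizations, so the proof as proposed is incomplete at exactly the step you flag.
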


\begin{corollary}\label{cor_app_centered_var_berry_essen}
Let $\eta_{4}$ denote $4$th centered moment of count series of Hawkes on the interval of size $\Delta$, i.e. $\eta_{4} = \mathbb{E}\left(Y_{i}(\Delta)-\eta\right)^4$, and let $\kappa = \frac{\eta_{4}}{\sigma^{4}}$. Then
\begin{align}
    \left\{\frac{(Y_{i}-\eta)^{2}-\sigma^{2}}{\sigma^2 \cdot \sqrt{\kappa-1}}\right\}_{i\geq 1},
\end{align}
is zero mean unit variance sequence. Sixth moment of $Y_{i}(\Delta)$ is finite \cite{cui2020elementary} and thus (\ref{eq_app_cond_moment_delta}) holds for $\delta=1$. As a consequence of Corollary \ref{corr_app_strong_mixing_our_hawkes}, for any $a>0$, we have
\begin{align}
    \rho(K)\leq K^{-6a}.
\end{align}
Condition (\ref{eq_tikhomirov_rho_cond}) is fulfilled for any $a>0$, $\delta=1$ and $A=1$. Hence as a consequence of Lemma \ref{lem_app_tikhomirov}, for any $a>0$ we have
\begin{align}
    \sup_{z\in{\mathbb{R}}}\left\vert\mathbb{P}\left(\frac{\sum_{i=1}^{K} (Y_{i}-\eta)^{2}-\sigma^{2}\cdot K}{\sqrt{K}\sigma^2 \cdot \sqrt{\kappa-1}}<z\right)-\Phi(z)\right\vert\leq K^{-\frac{1}{2}\cdot\frac{a-1}{a+1}}.
\end{align}
\end{corollary}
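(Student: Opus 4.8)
The plan is to recognize this as a direct application of the Berry--Esseen bound for weakly dependent sequences (Lemma \ref{lem_app_tikhomirov}) to the sequence $X_i := \big((Y_i-\eta)^2-\sigma^2\big)/\big(\sigma^2\sqrt{\kappa-1}\big)$, exactly mirroring the treatment of the sample mean in Corollary \ref{cor_app_centered_mean_berry_essen}. First I would confirm that $\{X_i\}$ is a zero-mean, unit-variance stationary sequence. Zero mean is immediate since $\E[(Y_i-\eta)^2]=\sigma^2$ by the definition of $\sigma^2$ in \eqref{eq_stationary_var_hawkes}; for unit variance I would compute $\mathrm{Var}\big[(Y_i-\eta)^2\big]=\E[(Y_i-\eta)^4]-\sigma^4=\eta_4-\sigma^4=\sigma^4(\kappa-1)$, so dividing the centered quantity by $\sigma^2\sqrt{\kappa-1}$ normalizes the variance to one. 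Stationarity of the count series guarantees these moments are identical for every $i$.

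Next I would verify the two hypotheses of Lemma \ref{lem_app_tikhomirov} with $\delta=1$. The moment condition \eqref{eq_app_cond_moment_delta} requires $\E|X_1|^{3}<\infty$, which amounts to finiteness of the sixth centered moment $\E[(Y_1-\eta)^6]$; this holds because all moments of the interval counts of a stationary Hawkes process are finite \cite{cui2020elementary}. For the mixing condition \eqref{eq_tikhomirov_rho_cond}, with $\delta=1$ the required decay rate is $\rho(r)\le A\,r^{-6a}$. Since Corollary \ref{corr_app_strong_mixing_our_hawkes} establishes $\rho_Y(r)=O(r^{-\upsilon})$ for \emph{every} $\upsilon>0$, I would simply take $\upsilon=6a$ for the given $a>0$ (using $\rho\le 1$ to absorb small $r$), so the hypothesis is met with $A=1$; letting $a\to\infty$ pushes the eventual rate toward $K^{-1/2}$.

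Having checked both conditions, Lemma \ref{lem_app_tikhomirov} yields a constant $A_1$ (depending only on $A$, $a$, $\delta$) with $\sup_{z}\big|F_K(z)-\Phi(z)\big|\le A_1 K^{-(1/2)(a-1)/(a+1)}$, which is the claimed bound once $F_K$ is identified with the CDF of the normalized partial sum. The step I expect to require the most care is reconciling the normalization: Tikhomirov's statement scales $\sum_{i=1}^K X_i$ by its \emph{true} standard deviation $\sqrt{\E(\sum_{i=1}^K X_i)^2}$, whereas the corollary writes the scaling as $\sqrt{K}\,\sigma^2\sqrt{\kappa-1}$, i.e.\ the independent-case $\sqrt{K}$ normalization. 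Justifying this requires showing the long-run variance $\E X_1^2+2\sum_{k\ge2}\E(X_1X_k)$ is finite and strictly positive --- which Lemma \ref{lem_app_tikhomirov} itself guarantees under the stated hypotheses --- and then arguing that the covariance contribution does not alter the leading $\sqrt{K}$ scaling (equivalently, reading the display with this long-run variance in place of $\sigma^4(\kappa-1)K$). I would make this identification explicit rather than leave it implicit, since it is the only place where the dependence structure enters non-trivially.
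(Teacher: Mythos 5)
Your proposal follows the same route as the paper: verify that the centered-squared, normalized sequence is zero mean and unit variance (via $\mathrm{Var}[(Y_i-\eta)^2]=\eta_4-\sigma^4=\sigma^4(\kappa-1)$), check the moment condition \eqref{eq_app_cond_moment_delta} with $\delta=1$ through finiteness of the sixth moment of $Y_i(\Delta)$, check the mixing condition \eqref{eq_tikhomirov_rho_cond} via the super-polynomial decay from Corollary \ref{corr_app_strong_mixing_our_hawkes}, and invoke Lemma \ref{lem_app_tikhomirov}. Your closing remark about reconciling Tikhomirov's normalization $\sqrt{\E\left(\sum_{i=1}^{K}X_i\right)^2}$ with the $\sqrt{K}\,\sigma^2\sqrt{\kappa-1}$ appearing in the display is a genuine subtlety that the paper leaves implicit, so your extra care there only strengthens the argument.
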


\subsubsection{Sample mean error bound}\label{appendix_mean_berry_essen}

Let $\Psi(\cdot)$ denote the inverse CDF of the standard normal distribution.

\begin{lemma}\label{lemma_mean_berry_essen}
Let $\hat{\eta}$ be the estimate of $\eta=\mathbb{E}(Y_{i}(\Delta))$, the mean value of counting series of Hawkes process started from stationarity. Let $T=K\Delta$ denote the length of time the Hawkes process is observed, divided into $K$ intervals of size $\Delta$. For $\xi>0$, if
\begin{align}
 T> \frac{\sigma^{2}}{\xi^2\Delta}\left(\Psi (1-\delta/4) \right)^{2}.
 \end{align}
 for some $0<\delta\leq 1$, we have
 \begin{align}
    \mathbb{P}\left(\left\vert \frac{\hat{\eta}}{\Delta}-\frac{\mu}{(1-\alpha)} \right\vert>\xi \right)<\delta.
\end{align}
%For given $\xi>0$, $0<\delta\leq 1$ and size of bins $\Delta>0$,
%let $T=\Delta\cdot K$ were $K$ denotes the number of bins be such that
%\begin{align}
% T> \frac{\sigma^{2}(\Delta)}{\xi^2\Delta}\left(\Phi^{-1} (1-\delta/4) \right)^{2}.
% \end{align}
%Then we have
%\begin{align}
%    \mathbb{P}\left(\left\vert \frac{\hat{\eta}}{\Delta}-\frac{\mu}{(1-\alpha)} \right\vert>\xi \right)<\delta.
%\end{align}
\end{lemma}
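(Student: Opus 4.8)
The plan is to reduce the claim to a two-sided tail bound on the standardized sample mean and then invoke the Berry--Esseen estimate of Corollary \ref{cor_app_centered_mean_berry_essen}. First I would record that, by the stationary-mean formula \eqref{eq_stationary_mean_hawkes}, $\eta/\Delta = \mu/(1-\alpha)$, so that the event of interest is simply
\[
\left\{ \left\vert \tfrac{\hat\eta}{\Delta} - \tfrac{\mu}{1-\alpha} \right\vert > \xi \right\} = \left\{ \vert \hat\eta - \eta\vert > \xi\Delta \right\}.
\]
Writing the sample mean as $\hat\eta = \tfrac1K\sum_{i=1}^K Y_i$ and standardizing, I would rewrite this event exactly as $\{\vert Z_K\vert > t\}$, where $Z_K = \frac{\sum_{i=1}^K Y_i - \eta K}{\sqrt{K}\,\sigma}$ is the centered, unit-variance statistic appearing in Corollary \ref{cor_app_centered_mean_berry_essen}, and $t = \frac{\sqrt{K}\,\xi\Delta}{\sigma}$.

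Second, I would control $\mathbb{P}(\vert Z_K\vert > t)$ by comparison with the standard normal. Corollary \ref{cor_app_centered_mean_berry_essen} gives, for every $a>0$, the uniform rate $\sup_z\vert\mathbb{P}(Z_K<z)-\Phi(z)\vert \le K^{-\frac12\frac{a-1}{a+1}}$. Applying this bound to each side of the symmetric tail yields
\[
\mathbb{P}(\vert Z_K\vert > t) \le 2\bigl(1-\Phi(t)\bigr) + 2K^{-\frac12\frac{a-1}{a+1}}.
\]

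Third, I would split the failure budget $\delta$ into two equal halves, one for the Gaussian tail and one for the approximation error. For the Gaussian part I require $2\bigl(1-\Phi(t)\bigr) \le \delta/2$, i.e.\ $\Phi(t)\ge 1-\delta/4$, i.e.\ $t\ge\Psi(1-\delta/4)$; since $t = \tfrac{\sqrt{K}\,\xi\Delta}{\sigma} = \xi\sqrt{T\Delta}/\sigma$ using $K=T/\Delta$, the hypothesis $T > \sigma^2\,\Psi(1-\delta/4)^2/(\xi^2\Delta)$ is precisely what forces $t>\Psi(1-\delta/4)$. For the remaining $\delta/2$ I would exploit the strong mixing of the count series: Corollary \ref{corr_app_strong_mixing_our_hawkes} permits the exponent $a$ to be chosen arbitrarily large, so the remainder $2K^{-\frac12\frac{a-1}{a+1}}$ can be driven below $\delta/2$ on the relevant range of $K$. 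Combining the two halves gives $\mathbb{P}(\vert Z_K\vert > t) < \delta$, which is the assertion.

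I expect the last point --- rigorously absorbing the Berry--Esseen remainder into the $\delta/2$ budget using only the stated lower bound on $T$ --- to be the main obstacle, since that hypothesis is calibrated exactly to the Gaussian tail and the dependence of the implicit Tikhomirov constant on $a$ must be handled through the freedom in choosing $a$ together with the largeness of $K$. It is worth emphasizing that the factor $\delta/4$ in the statement (rather than $\delta/2$) is exactly the artifact of this equal split: the outer factor $2$ comes from the two-sided tail, and the inner $\delta/2$ reserves room for the mixing/approximation error.
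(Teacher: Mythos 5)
Your proposal is correct and follows essentially the same route as the paper's proof: reduce to the two-sided tail of the standardized sum, apply Corollary \ref{cor_app_centered_mean_berry_essen} to get the bound $2-2\Phi(t)+2K^{-\frac{1}{2}\frac{a-1}{a+1}}$ with $t=\xi\Delta\sqrt{K}/\sigma$, allocate $\delta/2$ to the Gaussian tail (which is exactly what the hypothesis on $T$ guarantees via $t\geq\Psi(1-\delta/4)$), and absorb the Berry--Esseen remainder into the other $\delta/2$ by taking $a$ large. The obstacle you flag --- that the implicit Tikhomirov constant depends on $a$ --- is real, but the paper treats it with the same informality you anticipate, simply asserting that a suitable $a$ can be found.
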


%Let
%\begin{align}
%    S_n = \frac{1}{\sqrt{\text{Var}(\sum_{j=1}^{n}X_j)}}\sum_{j=1}^{n}X_j
%\end{align}
%and
%\begin{align}
%    F_{n}(z) = \mathbb{P}(S_n \leq z)
%\end{align}

%\begin{lemma}
%Let 
%\begin{align}
%    F_{n}(z)=\mathbb{P}\left( \frac{1}{\sqrt{k}}\sum_{i=1}^{k} X_{i} \leq z\right)
%\end{align}
%Then for any $\gamma>0$ we have 
%\begin{align}
%    \sup_{z} \left\vert F_{n}(z)-\Phi(z) \right\vert \leq n^{-\frac{1}{2}\cdot\frac{\gamma-1}{\gamma+1}}
%\end{align}
%\end{lemma}
%\begin{proof}
%We rely on Theorem 2 from section 2.3.2 i.e. Theorem 1 from \url{https://epubs.siam.org/doi/10.1137/1125092}.

%Firstly, let us note that sequence $(X_{i})_{i\geq 1}$ is zero mean as required by Theorem 2. 

%Then, $X_{1}$ has a finite third moment as the consequence of the fact that Hawkes has finite third moment \textcolor{red}{(give a proper argument here, find reference)}. Thus, condition on the higher moments required by Theorem 2 holds for $\delta=1$ \mg{we need to confirm this}. It remains to check condition on the upper bound on strong mixing coefficient of sequence $(X_{i})$. By Corollary 1 of Theorem 1 (which is in fact Corollary 1 of \url{https://hal.archives-ouvertes.fr/hal-03117924/document}), we have that for any $\gamma>0$
%\begin{align}
%    \alpha(k)\leq k^{-6\gamma}
%\end{align}
%which is exactly condition required by Theorem 2 %for $\delta=1$.

%Then by Theorem 2, for any $\gamma>0$ we have 
%\begin{align}
%    \sup_{z} \left\vert F_{n}(z)-\Phi(z) \right\vert \leq n^{-\frac{1}{2}\cdot\frac{\gamma-1}{\gamma+1}}
%\end{align}
%which completes the proof.
%\end{proof}

\begin{proof}
For a given $K$ and $a>0$, we have
\begin{align}
    \mathbb{P}\left(\left\vert \frac{\hat{\eta}}{\Delta}-\frac{\mu}{(1-\alpha)} \right\vert>\xi \right) &= \mathbb{P}\left( \left\vert \frac{1}{K}\sum_{i=1}^{K} \frac{Y_{i}-\eta}{\Delta} \right\vert>\xi\right)\\
    &=\mathbb{P}\left( \left\vert\frac{\sum_{i=1}^{K} Y_{i}-\eta\cdot K}{\sqrt{K}}\right\vert> \xi\cdot\Delta\cdot\sqrt{K}\right)\\
    &=\mathbb{P}\left( \left\vert\frac{\sum_{i=1}^{K} Y_{i}-\eta\cdot K}{\sqrt{K}\sigma}\right\vert> \frac{\xi\cdot\Delta\cdot\sqrt{K}}{\sigma}\right)\\
    &\leq 2-2\Phi(\frac{\xi\cdot\Delta\cdot\sqrt{K}}{\sigma})+2K^{-\frac{1}{2}\cdot \frac{a-1}{a+1}},
\end{align}
%\begin{align}
%    \mathbb{P}\left(\left\vert \hat{\eta}-\eta \right\vert>\epsilon \right) &= \mathbb{P}\left( \left\vert \frac{1}{K}\sum_{i=1}^{K} \frac{Y_{i}-\eta\Delta}{\Delta} \right\vert>\epsilon\right)\\
%    &=\mathbb{P}\left( \left\vert\frac{\sum_{i=1}^{K} Y_{i}-\eta\cdot\Delta\cdot K}{\sqrt{K}}\right\vert> \epsilon\cdot\Delta\cdot\sqrt{K}\right)\\
%    &=\mathbb{P}\left( \left\vert\frac{\sum_{i=1}^{K} Y_{i}-\eta\cdot\Delta\cdot K}{\sqrt{K}\sigma}\right\vert> \frac{\epsilon\cdot\Delta\cdot\sqrt{K}}{\sigma}\right)\\
%    & = \mathbb{P} \left( \left\vert \frac{1}{\sqrt{K}}\sum_{i=1}^{K}X_{i}\right\vert >\frac{\epsilon\cdot\Delta\cdot\sqrt{k}}{\sigma} \right)\\
%    &\leq 2-2\Phi(\frac{\epsilon\cdot\Delta\cdot\sqrt{k}}{\sigma})+2K^{-\frac{1}{2}\cdot \frac{\gamma-1}{\gamma+1}}
%\end{align}
where the inequality follows by Corollary \ref{cor_app_centered_mean_berry_essen}. Note that for any $K\geq 2$ we can find $a>0$ so that $n^{-\frac{1}{2}\cdot\frac{a-1}{a+1}}<\delta/2$. If we have
\begin{align}
    K \geq \frac{\sigma^2}{\xi^2\Delta^2}\left(\Psi\left(1-{\delta}/{4}\right)\right)^2,
\end{align}
then
\begin{align}
    2-2\Phi(\frac{\xi\cdot\Delta\cdot\sqrt{K}}{\sigma})<\delta/2.
\end{align}
Given that $T=K\Delta$, this completes the proof.
\end{proof}

\subsubsection{Sample variance error bound}\label{appendix_var_berry_essen}

\begin{lemma}\label{lem_app_unscaled_var_berry_essen}
Let $\hat{\sigma}^2$ be the estimate of $\sigma^2=\mathbf{Var}(Y_{i}(\Delta))$, the variance of counting series of Hawkes process started from stationarity. Let $T=K\Delta$ denote the length of time the Hawkes process is observed, divided into $K$ intervals of size $\Delta$. For $\xi>0$, if
$$T\geq \max\left\{ \frac{12\Delta\sigma^2}{\delta\xi}, \frac{3\Delta\sigma^2}{\xi}\left( \Psi\left( 1-\frac{\delta}{8}\right) \right)^2, \frac{9\Delta\sigma^2(\eta_4 - \sigma^2)}{\xi^2}\left(\Psi\left( 1-\frac{\delta}{8}\right)\right)^2 \right\}.$$
for some $0<\delta\leq1$, then we have
\begin{align}
    \mathbb{P}\left( \left\vert \hat{\sigma}^{2}-\sigma^{2} \right\vert>\xi \right)<\delta,
\end{align}
%For given $\xi>0$ and $\delta>0$ we have
%\begin{align}
%    \mathbb{P}\left( \left\vert \hat{\sigma}^{2}(\Delta)-\sigma^{2}(\Delta) \right\vert>\xi \right)<\delta,
%\end{align}
%if $$K\geq \max\left\{ \frac{12\sigma^2(\Delta)}{\delta\xi}, \frac{3\sigma^2(\Delta)}{\xi}\left( \Phi^{-1}\left( 1-\frac{\delta}{8}\right) \right)^2, \frac{9\sigma^2(\Delta)(\eta_4(\Delta) - \sigma^2(\Delta))}{\xi^2}\left(\Phi^{-1}\left( 1-\frac{\delta}{8}\right)\right)^2 \right\}.$$
\end{lemma}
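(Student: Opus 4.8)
The plan is to prove this exactly as the variance portion of the Theorem~\ref{thm_non_private_params} sketch, starting from the algebraic identity recorded there,
\[
\hat{\sigma}^2 - \sigma^2 = \frac{1}{K}\sum_{i=1}^{K}\big[(Y_i-\eta)^2 - \sigma^2\big] - \Big(\frac{1}{K}\sum_{i=1}^{K}Y_i - \eta\Big)^2 + \frac{\hat\sigma^2}{K},
\]
which follows from $\hat\sigma^2 = \tfrac{1}{K-1}\sum_i(Y_i-\hat\eta)^2 = \tfrac{1}{K-1}\sum_i(Y_i-\eta)^2 - \tfrac{K}{K-1}(\hat\eta-\eta)^2$ with $\hat\eta=\tfrac1K\sum_i Y_i$. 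Since the last two summands are nonnegative, the triangle inequality and a union bound reduce the claim to driving each of the three summands below $\xi/3$ with total failure probability $\delta$. I would split the budget as $\delta/4$ for the correction term $\hat\sigma^2/K$, and $\delta/4+\delta/8$ for each of the other two, so that the Gaussian parts consume $3\delta/4$ and the Berry--Esseen remainders absorb the remaining $\delta/4$.

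First I would handle the correction term $\hat\sigma^2/K$ by Markov's inequality. Taking expectations in the identity above gives $\mathbb{E}[\hat\sigma^2]=\tfrac{K}{K-1}(\sigma^2-\mathrm{Var}(\hat\eta))\le \tfrac{K}{K-1}\sigma^2$, so $\mathbb{E}[\hat\sigma^2/K]\le \sigma^2/(K-1)$ and $\mathbb{P}(\hat\sigma^2/K>\xi/3)\le 3\sigma^2/((K-1)\xi)$; requiring this to be at most $\delta/4$ yields the first term $T\ge 12\Delta\sigma^2/(\delta\xi)$ (up to the harmless $K$ versus $K-1$). I only need an \emph{upper} bound on $\mathbb{E}[\hat\sigma^2]$, which stationarity together with $\mathrm{Var}(\hat\eta)\ge 0$ supplies directly, so the dependence among the $Y_i$ causes no trouble here.

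Next, for the mean-fluctuation term, $\mathbb{P}((\hat\eta-\eta)^2>\xi/3)=\mathbb{P}(|\sum_i(Y_i-\eta)|>K\sqrt{\xi/3})$; standardizing by $\sqrt{K}\sigma$ and invoking Corollary~\ref{cor_app_centered_mean_berry_essen} bounds this by $2-2\Phi(\sqrt{K\xi/3}/\sigma)+2K^{-\frac12\frac{a-1}{a+1}}$, and forcing the Gaussian part below $\delta/4$ requires $\sqrt{K\xi/3}/\sigma\ge\Psi(1-\delta/8)$, i.e. $T\ge 3\Delta\sigma^2(\Psi(1-\delta/8))^2/\xi$, the second term. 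For the centered-square term I would run the identical argument on $V_i:=(Y_i-\eta)^2$, which has mean $\sigma^2$ and variance $\tau^2=\mathbb{E}(Y_i-\eta)^4-\sigma^4=\eta_4-\sigma^4$; standardizing by $\sqrt{K}\,\tau$ and applying Corollary~\ref{cor_app_centered_var_berry_essen} reduces $\mathbb{P}(|\tfrac1K\sum_i(V_i-\sigma^2)|>\xi/3)$ to a Gaussian tail that falls below $\delta/4$ once $\sqrt{K}\,\xi/(3\tau)\ge\Psi(1-\delta/8)$, giving the third term $T\ge 9\Delta\tau^2(\Psi(1-\delta/8))^2/\xi^2$, which matches the third displayed condition with the variance factor of the centered squares.

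The main obstacle is the two Berry--Esseen remainders $2K^{-\frac12\frac{a-1}{a+1}}$, which must be absorbed into the leftover $\delta/8$ budget for each of the mean and centered-square terms. Here I would exploit that the count series is strongly mixing at a rate faster than any polynomial (Corollary~\ref{corr_app_strong_mixing_our_hawkes}), so the exponent $a$ in Tikhomirov's theorem (Lemma~\ref{lem_app_tikhomirov}) may be chosen arbitrarily large; this makes the remainder $o(1)$ in $K$ and, for $K$ in the range forced by the three displayed conditions (which grows as $\xi,\delta\to 0$), below $\delta/8$. Collecting the three requirements under a single $\max$ and recalling $T=K\Delta$ then gives the statement. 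This absorption is the only delicate step, since for moderate $K$ the remainder need not be small, so the bound is genuinely an asymptotic (large-$T$) statement rather than a finite-sample one.
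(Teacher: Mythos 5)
Your proposal is correct and follows essentially the same route as the paper's proof: the identity decomposing $\hat\sigma^2-\sigma^2$ into the centered-square average, the squared mean fluctuation, and the $\hat\sigma^2/K$ correction; a union bound with Markov's inequality for the correction term and Corollaries~\ref{cor_app_centered_mean_berry_essen} and \ref{cor_app_centered_var_berry_essen} for the other two; and absorption of the Berry--Esseen remainders by taking the mixing exponent $a$ arbitrarily large. Your handling of $\mathbb{E}[\hat\sigma^2]$ via an upper bound (rather than the paper's appeal to unbiasedness, which is only exact for i.i.d.\ data) and your slightly different $\delta$-budget split are cosmetic differences that do not change the argument.
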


\begin{proof}
First let us note that
\begin{align}
    \sqrt{K}\left(\hat{\sigma}^2 - \sigma^2\right) &=
    \sqrt{K}\left( \frac{1}{K}\sum_{i=1}^{K}(Y_{i}-\eta)^2-\sigma^2 \right)\\
    &- \sqrt{K}\left( \frac{1}{K}\sum_{i=1}^{K}Y_{i}-\eta \right)\left( \frac{1}{K}\sum_{i=1}^{K}Y_{i}-\eta \right)+\frac{1}{\sqrt{K}}\hat{\sigma}^2.
\end{align}

Thus, for $\kappa=\frac{\eta_{4}}{\sigma^4}$ and $\eta_{4} = \mathbb{E}(Y_{i}(\Delta)-\eta)^4$, we have
\begin{align}
\begin{aligned}
    \frac{\sqrt{K}\left(\hat{\sigma}^2 - \sigma^2\right)}{\sigma^2\sqrt{\kappa-1}} &= \frac{1}{\sqrt{K}}\sum_{i=1}^{K} {\frac{(Y_{i}-\eta)^2-\sigma^2}{\sigma^2\sqrt{\kappa-1}}}
    -\frac{\sqrt{K}}{\sigma^{2}\sqrt{\kappa-1}}\left( \frac{1}{K}\sum_{i=1}^{K}Y_{i}-\eta \right)\left( \frac{1}{K}\sum_{i=1}^{K}Y_{i}-\eta \right)\\
    &+\frac{1}{\sigma^2\sqrt{\kappa-1}}\frac{\hat{\sigma}^2}{\sqrt{K}}.
    \end{aligned}
\end{align}

It follows that
\begin{align}
\begin{aligned}\label{eq_app_var_develop}
    \mathbb{P}\left( \left\vert \hat{\sigma}^2 - \sigma^2 \right\vert >\xi \right) & = \mathbb{P}\left( \frac{\sqrt{K}\left\vert\hat{\sigma}^2 - \sigma^2\right\vert}{\sigma^2\sqrt{\kappa-1}} >\frac{\xi\sqrt{K}}{\sigma^{2}\sqrt{\kappa-1}} \right)\\
    &\leq \mathbb{P}\left( \left\vert\frac{1}{\sqrt{K}}\sum_{i=1}^{K} {\frac{(Y_{i}-\eta)^2-\sigma^2}{\sigma^2\sqrt{\kappa-1}}}\right\vert >\frac{\xi\sqrt{K}}{3\sigma^{2}\sqrt{\kappa-1}}\right)\\
    & + \mathbb{P}\left( \frac{\sqrt{K}}{\sigma^{2}\sqrt{\kappa-1}}\left\vert \frac{1}{K}\sum_{i=1}^{k}Y_{i}-\eta \right\vert^2 >\frac{\xi\sqrt{K}}{3\sigma^{2}\sqrt{\kappa-1}} \right)\\
    & + \mathbb{P}\left( \frac{1}{\sigma^2\sqrt{\kappa-1}}\frac{\hat{\sigma}^2}{\sqrt{K}} >\frac{\xi\sqrt{K}}{3\sigma^{2}\sqrt{\kappa-1}} \right).
    \end{aligned}
\end{align}

Let us consider the three terms on the right hand side of (\ref{eq_app_var_develop}) separately. For the first term, by Corollary \ref{cor_app_centered_var_berry_essen}, for any $a>0$ we have
\begin{align}
\mathbb{P}\left( \left\vert\frac{1}{\sqrt{K}}\sum_{i=1}^{K} {\frac{(Y_{i}-\eta)^2-\sigma^2}{\sigma^2\sqrt{\kappa-1}}}\right\vert >\frac{\xi\sqrt{K}}{3\sigma^{2}\sqrt{\kappa-1}}\right) \leq  2-2\Phi\left(\frac{\xi \sqrt{K}}{3\sigma^2\sqrt{\kappa-1}}\right)+2K^{-\frac{1}{2}\frac{a-1}{a+1}}.
\end{align}
For the second term, for any $a>0$, we have
\begin{align}
\begin{aligned}
\mathbb{P}\left( \frac{\sqrt{K}}{\sigma^{2}\sqrt{\kappa-1}}\left\vert \frac{1}{K}\sum_{i=1}^{K}Y_{i}-\eta \right\vert^2 >\frac{\xi\sqrt{K}}{3\sigma^{2}\sqrt{\kappa-1}} \right) & = \mathbb{P}\left( \left\vert \frac{1}{K}\sum_{i=1}^{K}Y_{i}-\eta \right\vert^2 >\frac{\xi}{3} \right)\\
& = \mathbb{P}\left( \left\vert \frac{1}{K}\sum_{i=1}^{K}(Y_{i}-\eta) \right\vert >\sqrt{\frac{\xi}{3}} \right)\\
& = \mathbb{P}\left( \frac{\sigma}{K} \left\vert \sum_{i=1}^{K}\frac{Y_{i}-\eta}{\sigma} \right\vert >\sqrt{\frac{\xi}{3}} \right)\\
&= \mathbb{P}\left(  \left\vert \frac{1}{\sqrt{K}}\sum_{i=1}^{K}\frac{Y_{i}-\eta}{\sigma} \right\vert >{\frac{\sqrt{\xi}\sqrt{K}}{\sqrt{3}\sigma}} \right)\\
& \leq 2-2\Phi\left(\frac{\sqrt{\xi}\sqrt{K}}{\sqrt{3}\sigma}\right)+2K^{-\frac{1}{2}\frac{a-1}{a+1}},
\end{aligned}
\end{align}
where the last inequality follows by Corollary \ref{cor_app_centered_mean_berry_essen}. Finally for the third term, by Markov inequality we have
\begin{align}
\begin{aligned}
    \mathbb{P}\left( \frac{1}{\sigma^2\sqrt{\kappa-1}}\frac{\hat{\sigma}^2}{\sqrt{K}} >\frac{\xi\sqrt{K}}{3\sigma^{2}\sqrt{\kappa-1}} \right) = \mathbb{P}\left(\hat{\sigma}^2  >\frac{\xi K}{3}\right) \leq \frac{3\mathbb{E}(\hat{\sigma}^2)}{\xi K}=\frac{3\sigma^2}{\xi K},
\end{aligned}
\end{align}
where the last equality holds as $\hat{\sigma}^2$ is unbiased. Thus, it follows from  (\ref{eq_app_var_develop}) that
\begin{align}
    \mathbb{P}\left( \left\vert \hat{\sigma}^2 - \sigma^2 \right\vert >\xi \right) \leq \frac{3\sigma^2}{\xi K }+2-2\Phi\left(\frac{\sqrt{\xi}\sqrt{K}}{\sqrt{3}\sigma}\right)+2-2\Phi\left(\frac{\xi \sqrt{K}}{3\sigma^2\sqrt{\kappa-1}}\right)+4K^{-\frac{1}{2}\frac{a-1}{a+1}}.
\end{align}
Let us note that for any $K\geq{2}$ and any $\delta>0$ we can find $a$ so that $4K^{-\frac{1}{2}\frac{\gamma-1}{\gamma+1}}<\delta/4$.
Moreover, if $K>\frac{12\sigma^2}{\delta\xi}$, we have
\begin{align}
\frac{3\sigma^2}{\xi K}< \delta/4.
\end{align}

Next, if $K\geq \frac{3\sigma^2}{\xi}\left( \Psi\left( 1-\frac{\delta}{8}\right) \right)^2$, we have
\begin{align}
2-2\Phi\left(\frac{\sqrt{\xi}\sqrt{K}}{\sqrt{3}\sigma}\right) <\frac{\delta}{4}.
\end{align}
Similarly, if $$K\geq \frac{9\sigma^2(\eta_4 - \sigma^2)}{\xi^2}\left(\Psi\left( 1-\frac{\delta}{8}\right)\right)^2,$$
%where as before $\eta_4$ denotes the fourth moment, and we recall that $\kappa$ is defined as $\kappa=\frac{\eta_4}{\sigma^2}$, 
then we have
\begin{align}
    2-2\Phi\left(\frac{\xi \sqrt{K}}{3\sigma^2\sqrt{\kappa-1}}\right)<\frac{\delta}{4}.
\end{align}
Thus, if
$$K\geq \max\left\{ \frac{12\sigma^2}{\delta\xi}, \frac{3\sigma^2}{\xi}\left( \Psi\left( 1-\frac{\delta}{8}\right) \right)^2, \frac{9\sigma^2(\eta_4 - \sigma^2)}{\xi^2}\left(\Psi\left( 1-\frac{\delta}{8}\right)\right)^2 \right\}$$
then
$$\mathbb{P}\left( \left\vert \hat{\sigma}^2 - \sigma^2 \right\vert >\xi \right)<\delta.$$
Having in mind $T=K\Delta$, this completes the proof.
\end{proof}

\begin{corollary}\label{lemma_berry_essen_var}
Let $\hat{\sigma}^2$ be the estimate of $\sigma^2=\mathbf{Var}(Y_{i}(\Delta))$, the variance of counting series of Hawkes process started from stationarity. Let $T=K\Delta$ denote the length of time the Hawkes process is observed, divided into $K$ intervals of size $\Delta$. For $\xi>0$, if
%$$T\geq \max\left\{ \frac{12\sigma^2}{\delta\xi}, \frac{3\sigma^2}{\xi}\left( \Psi\left( 1-\frac{\delta}{8}\right) \right)^2, \frac{9\sigma^2(\eta_4 - \sigma^2)}{\xi^2}\left(\Psi\left( 1-\frac{\delta}{8}\right)\right)^2 \right\}.$$
$$T\geq \max\left\{ \frac{12\sigma^2}{\delta\xi}, \frac{3\sigma^2}{\xi}\left( \Psi\left( 1-\frac{\delta}{8}\right) \right)^2, \frac{9\sigma^2(\eta_4 - \sigma^2)}{\xi^2\Delta}\left(\Psi\left( 1-\frac{\delta}{8}\right)\right)^2 \right\},$$
for some $0<\delta\leq1$, then we have
\begin{align}
    \mathbb{P}\left( \left\vert \frac{\hat{\sigma}^{2}}{\Delta}-\frac{\sigma^{2}}{\Delta} \right\vert>\xi \right)<\delta.
\end{align}
%For a given $\xi>0$ and $0<\delta\leq 1$, let $T=\Delta\cdot{K}$ where $K$ denotes the number of bins, be such that
%$$t\geq \max\left\{ \frac{12\sigma^2(\Delta)}{\delta\xi}, \frac{3\sigma^2(\Delta)}{\xi}\left( \Phi^{-1}\left( 1-\frac{\delta}{8}\right) \right)^2, \frac{9\sigma^2(\Delta)(\eta_4(\Delta) - \sigma^2(\Delta))}{\xi^2\Delta}\Phi^{-1}\left( 1-\frac{\delta}{8}\right) \right\}.$$
%Then we have
%\begin{align}
%    \mathbb{P}\left( \left\vert \frac{\hat{\sigma}^{2}(\Delta)}{\Delta}-\frac{\sigma^{2}(\Delta)}{\Delta} \right\vert>\xi \right)<\delta,
%\end{align}
%if $$K\geq \max\left\{ \frac{12\sigma^2(\Delta)}{\delta\xi\Delta}, \frac{3\sigma^2(\Delta)}{\xi\Delta}\left( \Phi^{-1}\left( 1-\frac{\delta}{8}\right) \right)^2, \frac{9\sigma^2(\Delta)(\eta_4(\Delta) - \sigma^2(\Delta))}{\xi^2\Delta^2}\Phi^{-1}\left( 1-\frac{\delta}{8}\right) \right\},$$
%or equivalently having in mind $t=\Delta\cdot{K}$,
%$$t\geq \max\left\{ \frac{12\sigma^2(\Delta)}{\delta\xi}, \frac{3\sigma^2(\Delta)}{\xi}\left( \Phi^{-1}\left( 1-\frac{\delta}{8}\right) \right)^2, \frac{9\sigma^2(\Delta)(\eta_4(\Delta) - \sigma^2(\Delta))}{\xi^2\Delta}\Phi^{-1}\left( 1-\frac{\delta}{8}\right) \right\},$$
\end{corollary}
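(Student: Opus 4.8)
The plan is to reduce this corollary directly to Lemma \ref{lem_app_unscaled_var_berry_essen} by rescaling the deviation threshold. Since $\Delta>0$, the event $\{\,|\hat{\sigma}^2/\Delta - \sigma^2/\Delta| > \xi\,\}$ coincides exactly with $\{\,|\hat{\sigma}^2 - \sigma^2| > \xi\Delta\,\}$. Hence
\[
\mathbb{P}\left( \left\vert \frac{\hat{\sigma}^{2}}{\Delta}-\frac{\sigma^{2}}{\Delta} \right\vert>\xi \right) = \mathbb{P}\left( \left\vert \hat{\sigma}^{2}-\sigma^{2} \right\vert>\xi\Delta \right),
\]
and it suffices to bound the right-hand side, which is precisely the quantity controlled by Lemma \ref{lem_app_unscaled_var_berry_essen} once its deviation level is taken to be $\xi\Delta$ rather than $\xi$.

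Next I would invoke Lemma \ref{lem_app_unscaled_var_berry_essen} with the substitution $\xi \mapsto \xi\Delta$. Its hypothesis then reads
\[
T\geq \max\left\{ \frac{12\Delta\sigma^2}{\delta\,\xi\Delta}, \; \frac{3\Delta\sigma^2}{\xi\Delta}\left( \Psi\left( 1-\tfrac{\delta}{8}\right) \right)^2, \; \frac{9\Delta\sigma^2(\eta_4 - \sigma^2)}{(\xi\Delta)^2}\left(\Psi\left( 1-\tfrac{\delta}{8}\right)\right)^2 \right\}.
\]
The only remaining work is the algebraic simplification of the three arguments of the maximum, cancelling factors of $\Delta$: the first reduces to $\tfrac{12\sigma^2}{\delta\xi}$, the second to $\tfrac{3\sigma^2}{\xi}\big(\Psi(1-\tfrac{\delta}{8})\big)^2$, and the third to $\tfrac{9\sigma^2(\eta_4-\sigma^2)}{\xi^2\Delta}\big(\Psi(1-\tfrac{\delta}{8})\big)^2$. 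These are exactly the three terms in the hypothesis of the corollary.

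Therefore any $T$ satisfying the corollary's condition satisfies the rescaled hypothesis of Lemma \ref{lem_app_unscaled_var_berry_essen}, and the lemma yields $\mathbb{P}(|\hat{\sigma}^2 - \sigma^2| > \xi\Delta) < \delta$, which by the identity above is the claimed bound. I do not anticipate any real obstacle: the entire analytic content — the decomposition of $\hat{\sigma}^2-\sigma^2$ into a centered-square average, a squared sample-mean deviation, and a $\hat{\sigma}^2/\sqrt{K}$ remainder, together with the Berry--Esseen estimates of Corollary \ref{cor_app_centered_var_berry_essen} and Markov's inequality — is already carried by Lemma \ref{lem_app_unscaled_var_berry_essen}, so this corollary is purely a change-of-variable bookkeeping step tracking how the $\Delta$ factors redistribute among the three terms.
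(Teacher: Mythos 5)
Your proposal is correct and matches the paper's own proof exactly: the paper also derives the corollary from Lemma \ref{lem_app_unscaled_var_berry_essen} via the identity $\mathbb{P}\big(|\hat{\sigma}^2/\Delta - \sigma^2/\Delta| > \xi\big) = \mathbb{P}\big(|\hat{\sigma}^2 - \sigma^2| > \xi\Delta\big)$, i.e.\ the substitution $\xi \mapsto \xi\Delta$, and your bookkeeping of how the $\Delta$ factors cancel in the three terms is accurate.
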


\begin{proof}
This is a straightforward consequence of Lemma \ref{lem_app_unscaled_var_berry_essen}, given that
\begin{align}
    \mathbb{P}\left( \left\vert \frac{\hat{\sigma}^{2}}{\Delta}-\frac{\sigma^{2}}{\Delta} \right\vert>\xi \right) = \mathbb{P}\left( \left\vert \hat{\sigma}^{2}-{\sigma^{2}} \right\vert>\xi\Delta \right).
\end{align}
\end{proof}

\begin{lemma} \label{lemma_var_berry_essen_ratio}
Let $\hat{\sigma}^2$ be the estimate of $\sigma^2=\mathbf{Var}(Y_{i}(\Delta))$, the variance of counting series of Hawkes process started from stationarity. Let $T=K\Delta$ denote the length of time the Hawkes process is observed, divided into $K$ intervals of size $\Delta$ such that
\begin{align}
    \Delta > \frac{4\mu^{\text{upper}}}{(1-\alpha^{\text{upper}})^4}\cdot\frac{1}{\xi},
\end{align}
for some $\xi>0$. If we have
%$$T\geq \max\left\{ \frac{12\sigma^2}{\delta\xi}, \frac{3\sigma^2}{\xi}\left( \Psi\left( 1-\frac{\delta}{8}\right) \right)^2, \frac{9\sigma^2(\eta_4 - \sigma^2)}{\xi^2}\left(\Psi\left( 1-\frac{\delta}{8}\right)\right)^2 \right\}.$$
$$T\geq \max\left\{ \frac{12\sigma^2}{\delta\xi}, \frac{3\sigma^2}{\xi}\left( \Psi\left( 1-\frac{\delta}{8}\right) \right)^2, \frac{9\sigma^2(\eta_4 - \sigma^2)}{\xi^2\Delta}\left(\Psi\left( 1-\frac{\delta}{8}\right)\right)^2 \right\},$$
for some $0<\delta\leq1$, then we have
\begin{align}
    \mathbb{P}\left( \left\vert \frac{\hat{\sigma}^{2}}{\Delta}-\frac{\mu}{(1-\alpha)^3} \right\vert>2\xi \right)<\delta.
\end{align}
%For a given $\xi>0$, $0<\delta\leq 1$ and size of bins $\Delta$ such that
%\begin{align}
%    \Delta > \frac{4\mu^{\text{upper}}}{(1-\alpha^{\text{upper}})^4}\cdot\frac{1}{\xi},
%\end{align}
%if $$K\geq \max\left\{ \frac{12\sigma^2(\Delta)}{\delta\xi\Delta}, \frac{3\sigma^2(\Delta)}{\xi\Delta}\left( \Phi^{-1}\left( 1-\frac{\delta}{8}\right) \right)^2, \frac{9\sigma^2(\Delta)(\eta_4(\Delta) - \sigma^2(\Delta))}{\xi^2\Delta^2}\Phi^{-1}\left( 1-\frac{\delta}{8}\right) \right\},$$
%let $T=\Delta\cdot{K}$ where $K$ denotes the number of bins, be such that
%$$T\geq \max\left\{ \frac{12\sigma^2(\Delta)}{\delta\xi}, \frac{3\sigma^2(\Delta)}{\xi}\left( \Phi^{-1}\left( 1-\frac{\delta}{8}\right) \right)^2, \frac{9\sigma^2(\Delta)(\eta_4(\Delta) - \sigma^2(\Delta))}{\xi^2\Delta}\Phi^{-1}\left( 1-\frac{\delta}{8}\right) \right\}.$$
%Then we have
%\begin{align}
%    \mathbb{P}\left( \left\vert \frac{\hat{\sigma}^{2}(\Delta)}{\Delta}-\frac{\mu}{(1-\alpha)^3} \right\vert>2\xi \right)<\delta.
%\end{align}
\end{lemma}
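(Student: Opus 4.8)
The plan is to split the target error into a \emph{statistical} fluctuation, which is already controlled by Corollary~\ref{lemma_berry_essen_var}, and a deterministic \emph{bias} coming from the fact that $\sigma^2/\Delta$ is not exactly the limiting quantity $\mu/(1-\alpha)^3$ at finite $\Delta$. Concretely, I would start from the triangle inequality
\[
\left|\frac{\hat\sigma^2}{\Delta}-\frac{\mu}{(1-\alpha)^3}\right|
\le
\left|\frac{\hat\sigma^2}{\Delta}-\frac{\sigma^2}{\Delta}\right|
+\left|\frac{\sigma^2}{\Delta}-\frac{\mu}{(1-\alpha)^3}\right|,
\]
and treat the two summands separately.

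For the deterministic second summand I would use the closed form \eqref{eq_stationary_var_hawkes}. After dividing by $\Delta$, the leading term is precisely $\mu/(1-\alpha)^3$, so the residual equals
\[
\left|\frac{\sigma^2}{\Delta}-\frac{\mu}{(1-\alpha)^3}\right|
=\frac{1}{\Delta}\left|\frac{\alpha^2\mu\bigl(1-e^{-2(1-\alpha)\Delta}\bigr)}{2(1-\alpha)^4}-\frac{2\alpha\mu\bigl(1-e^{-(1-\alpha)\Delta}\bigr)}{(1-\alpha)^4}\right|.
\]
Using $0\le 1-e^{-x}\le 1$ and $0<\alpha<1$, together with the parameter bounds \eqref{eq_params_bounds}, I would bound the bracketed quantity by a constant multiple of $\mu^{\text{upper}}/(1-\alpha^{\text{upper}})^4$; the crude triangle-inequality estimate gives the factor $\tfrac12+2=\tfrac52\le 4$. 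The hypothesis $\Delta>4\mu^{\text{upper}}/\bigl((1-\alpha^{\text{upper}})^4\xi\bigr)$ then yields that this residual is strictly less than $\xi$, making the bias negligible against the $2\xi$ tolerance.

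For the statistical first summand I would simply invoke Corollary~\ref{lemma_berry_essen_var}: under the condition on $T$ assumed here (which is identical to the one in that corollary), $\mathbb{P}\bigl(|\hat\sigma^2/\Delta-\sigma^2/\Delta|>\xi\bigr)<\delta$. Combining the two pieces, whenever the event $|\hat\sigma^2/\Delta-\mu/(1-\alpha)^3|>2\xi$ occurs, the triangle inequality forces $|\hat\sigma^2/\Delta-\sigma^2/\Delta|>\xi$, since the bias term is deterministically below $\xi$; hence
\[
\mathbb{P}\left(\left|\frac{\hat\sigma^2}{\Delta}-\frac{\mu}{(1-\alpha)^3}\right|>2\xi\right)
\le
\mathbb{P}\left(\left|\frac{\hat\sigma^2}{\Delta}-\frac{\sigma^2}{\Delta}\right|>\xi\right)<\delta,
\]
which is the claim. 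The only nonroutine point is pinning down the constant in the deterministic bias bound (and verifying it is dominated by the $4$ appearing in the $\Delta$ condition); everything else follows immediately from the previously established corollary and the triangle inequality, so I anticipate no genuine obstacle.
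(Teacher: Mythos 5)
Your proposal is correct and follows essentially the same route as the paper's proof: the same triangle-inequality split into a statistical term handled by Corollary~\ref{lemma_berry_essen_var} and a deterministic bias term bounded via the closed form \eqref{eq_stationary_var_hawkes} and the hypothesis on $\Delta$. Your crude constant $\tfrac{5}{2}$ for the bias is in fact slightly sharper than the paper's $\tfrac{4\alpha}{1-\alpha}\le\tfrac{4}{1-\alpha}$ bookkeeping, and both are dominated by the factor $4$ in the assumption on $\Delta$, so the argument goes through unchanged.
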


\begin{proof}
Let us recall (\ref{eq_stationary_var_hawkes}), namely
\begin{align}
   \sigma^2 = \frac{\mu\Delta}{(1-\alpha)^3} +\frac{\alpha^2 \mu \left(1-e^{-2(1-\alpha)\Delta}\right)}{2(1-\alpha)^4}-\frac{2\alpha\mu \left(1-e^{-(1-\alpha)\Delta}\right)}{(1-\alpha)^4}.
\end{align}
We have
\begin{align}
\begin{aligned} \label{eq_app_sigma_proxy}
    \left\vert \frac{\sigma^2}{\Delta} - \frac{\mu}{(1-\alpha)^3} \right\vert &= \frac{\mu}{(1-\alpha)^3} \frac{1}{\Delta}\left\vert \frac{\alpha(\alpha-4)}{2(1-\alpha)}+\frac{2\alpha}{1-\alpha}e^{-(1-\alpha)\Delta}-\frac{\alpha^2}{2(1-\alpha)}e^{-2(1-\alpha)\Delta}\right\vert\\
    & \leq \frac{\mu}{(1-\alpha)^3} \frac{1}{\Delta}\left(\frac{\alpha(4-\alpha)}{2(1-\alpha)}+\frac{2\alpha}{(1-\alpha)}+\frac{\alpha^2}{2(1-\alpha)}\right)\\
    &= \frac{\mu}{(1-\alpha)^3} \frac{1}{\Delta}\frac{8\alpha}{2(1-\alpha)}\\
    & \leq \frac{\mu}{(1-\alpha)^3} \frac{1}{\Delta}\frac{4}{(1-\alpha)}\\
    & \leq \frac{1}{\Delta} \frac{4\mu^{\text{upper}}}{(1-\alpha^{\text{upper}})^4}\\
    & \leq \xi
    \end{aligned}
\end{align}
where the last inequality holds due to the assumption $\Delta > \frac{4\mu^{\text{upper}}}{(1-\alpha^{\text{upper}})^4}\cdot\frac{1}{\xi}$ in the statement of the lemma. We have
\begin{align}
    \mathbb{P}\left( \left\vert \frac{\hat{\sigma}^{2}}{\Delta}-\frac{\mu}{(1-\alpha)^3} \right\vert>2\xi \right)\leq \mathbb{P}\left( \left\vert \frac{\hat{\sigma}^{2}}{\Delta}-\frac{\sigma^2}{\Delta} \right\vert>\xi \right) + \mathbb{P}\left( \left\vert \frac{{\sigma}^{2}}{\Delta}-\frac{\mu}{(1-\alpha)^3} \right\vert>\xi \right).
\end{align}
Let us note that (\ref{eq_app_sigma_proxy}) holds with probability $1$. It remains to note that by Corollary \ref{lemma_berry_essen_var} we have $\mathbb{P}\left( \left\vert \frac{\hat{\sigma}^{2}}{\Delta}-\frac{\sigma^2}{\Delta} \right\vert>\xi \right)<\delta$. This completes the proof.
\end{proof}

%\begin{lemma}\label{lemma_berry_essen_var}
%For a given $\xi>0$ and $\delta>0$ we have
%\begin{align}
%    \mathbb{P}\left( \left\vert \frac{\hat{\sigma}^{2}(\Delta)}{\Delta}-\frac{\sigma^{2}(\Delta)}{\Delta} \right\vert>\xi \right)<\delta,
%\end{align}
%if $$K\geq \max\left\{ \frac{12\sigma^2(\Delta)}{\delta\xi\Delta}, \frac{3\sigma^2(\Delta)}{\xi\Delta}\left( \Phi^{-1}\left( 1-\frac{\delta}{8}\right) \right)^2, \frac{9\sigma^2(\Delta)(\eta_4(\Delta) - \sigma^2(\Delta))}{\xi^2\Delta^2}\Phi^{-1}\left( 1-\frac{\delta}{8}\right) \right\},$$
%or equivalently having in mind $t=\Delta\cdot{K}$,
%$$t\geq \max\left\{ \frac{12\sigma^2(\Delta)}{\delta\xi}, \frac{3\sigma^2(\Delta)}{\xi}\left( \Phi^{-1}\left( 1-\frac{\delta}{8}\right) \right)^2, \frac{9\sigma^2(\Delta)(\eta_4(\Delta) - \sigma^2(\Delta))}{\xi^2\Delta}\Phi^{-1}\left( 1-\frac{\delta}{8}\right) \right\},$$
%\end{lemma}

\subsubsection{Non private estimators error bound: proof of Theorem \ref{thm_non_private_params}}\label{appendix_thm_non_private_params}

\begin{lemma}\label{lemma_app_non_private_params_aux}
Let $\hat\mu$ and $\hat \alpha$ be the estimates of parameters $\mu_{\text{lower}}<\mu<\mu_{\text{upper}}$ and $\alpha^{\text{lower}}<\alpha<\alpha^{\text{upper}}$ of Hawkes process started from stationarity. Let $T=K\Delta$ denote the length of time the Hawkes process is observed, divided into $K$ intervals of size $\Delta$ such that $\Delta > \frac{4\mu^{\text{upper}}}{(1-\alpha^{\text{upper}})^4\xi}$ for some $0<\xi<\frac{\mu_{\text{lower}}}{6}$.
%$$K\geq \max\left\{ \frac{\sigma^{2}(\Delta)}{\xi^2\Delta^2}\left(\Phi^{-1} \left(1-\frac{\delta}{4}\right) \right)^{2}, \frac{12\sigma^2(\Delta)}{\delta\xi\Delta}, \frac{3\sigma^2(\Delta)}{\xi\Delta}\left( \Phi^{-1}\left( 1-\frac{\delta}{8}\right) \right)^2, \frac{9\sigma^2(\Delta)(\eta_4(\Delta) - \sigma^2(\Delta))}{\xi^2\Delta^2}\Phi^{-1}\left( 1-\frac{\delta}{8}\right) \right\},$$
%
Then, if 
\begin{align}
 T\geq \max\Big\{\frac{12\sigma^2}{\delta\xi}, \frac{(\Psi\big (1-\frac{\delta}{4})\big)^{2}\sigma^{2}}{\xi^2\Delta}, 
 \frac{3( \Psi\big( 1-\frac{\delta}{8}\big) )^2\sigma^2}{\xi}, \frac{9\left(\Psi\big( 1-\frac{\delta}{8}\big)\right)^2\sigma^2(\eta_4 - \sigma^2)}{\xi^2\Delta} \Big\},
\end{align}
we have $\mathbb{P}\left( \left\vert \alpha - \hat{\alpha}\right\vert >C_{5}\cdot\xi\right)\leq 2\delta$ and $\mathbb{P}\left(\left\vert \mu-\hat{\mu} \right\vert>C_{6}\cdot\xi\right)\leq 2\delta$, where $C_{5} = \frac{6}{\mu_{\text{lower}}}\cdot\frac{1}{(1-\alpha_{\text{upper}})}$ and $C_{6} = 1+\frac{6\mu_{\text{upper}}}{\mu_{\text{lower}}(1-\alpha_{\text{upper}})^2}+\frac{1}{1-\alpha_{\text{upper}}}$.
\end{lemma}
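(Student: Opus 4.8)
The plan is to split the argument into a probabilistic stage and a deterministic error-propagation stage. The stationary identities \eqref{eq_stationary_mean_hawkes} and \eqref{eq_stationary_var_hawkes} let me encode the true parameters through the two \emph{proxy} quantities $p := \mu/(1-\alpha)$ and $q := \mu/(1-\alpha)^3$: since $p/q = (1-\alpha)^2$ we have $1-\alpha = \sqrt{p/q}$ and $\mu = p\sqrt{p/q}$. The scaled sample statistics $\hat p := \hat\eta/\Delta$ and $\hat q := \hat\sigma^2/\Delta$ are the natural estimators of $p$ and $q$, and $(\hat\mu,\hat\alpha)$ is obtained by inverting the same relations. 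So the lemma reduces to (a) showing $\hat p,\hat q$ are close to $p,q$ with high probability, and (b) showing the inverse map $(p,q)\mapsto(\mu,\alpha)$ is Lipschitz on the relevant region with the right constants.

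For stage (a) I would simply invoke the two preceding bounds. The second term in the assumed lower bound on $T$ is exactly the hypothesis of Lemma \ref{lemma_mean_berry_essen}, giving $\mathbb{P}(|\hat p - \mu/(1-\alpha)| > \xi) < \delta$; the remaining three terms, together with the standing assumption on $\Delta$, are exactly the hypotheses of Lemma \ref{lemma_var_berry_essen_ratio}, giving $\mathbb{P}(|\hat q - \mu/(1-\alpha)^3| > 2\xi) < \delta$. A union bound then yields that, with probability at least $1-2\delta$, both $|\hat p - p|\le\xi$ and $|\hat q - q|\le 2\xi$ hold; this union bound is the source of the factor $2\delta$ in the conclusion.

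Stage (b) is the deterministic heart of the argument and is performed on the good event. For $\alpha$, I would write $\hat\alpha - \alpha = \sqrt{p/q} - \sqrt{\hat p/\hat q}$ and bound it by $\frac{1}{\sqrt{p/q}}\,|p/q - \hat p/\hat q| \le \frac{1}{1-\alpha_{\text{upper}}}\,|p/q - \hat p/\hat q|$, using $\sqrt{p/q} = 1-\alpha \ge 1-\alpha_{\text{upper}}$. The ratio difference expands as $|p/q - \hat p/\hat q| \le (p\,|\hat q - q| + q\,|\hat p - p|)/(q\hat q)$, which I would bound using $2p/q + 1 = 2(1-\alpha)^2+1 \le 3$ together with the lower bounds $q = \mu/(1-\alpha)^3 \ge \mu_{\text{lower}}$ and hence $\hat q \ge q - 2\xi \ge \tfrac{2}{3}\mu_{\text{lower}}$; this is where the hypothesis $\xi < \mu_{\text{lower}}/6$ is essential, since it keeps the denominator bounded away from zero. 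Collecting constants gives $|\hat\alpha-\alpha| \le C_5\xi$. For $\mu$, I would use $\hat\mu - \mu = \hat p(1-\hat\alpha) - p(1-\alpha)$, split it as $\hat p(\sqrt{\hat p/\hat q}-\sqrt{p/q}) + \sqrt{p/q}\,(\hat p - p)$, bound the first summand by $\hat p\,|\hat\alpha-\alpha| \le (\tfrac{\mu_{\text{upper}}}{1-\alpha_{\text{upper}}}+\xi)\,C_5\xi$ and the second by $(1-\alpha)\,\xi \le \xi$; using $\xi<\mu_{\text{lower}}/6$ once more to absorb the cross term $C_5\xi^2 \le \tfrac{1}{1-\alpha_{\text{upper}}}\xi$, collecting terms yields exactly $|\hat\mu-\mu|\le C_6\xi$.

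The main obstacle is stage (b): the inverse map involves a ratio and a square root, so it is Lipschitz only away from $q=0$, and the delicate point is certifying a uniform lower bound on the \emph{estimated} proxy $\hat q$, not merely on the true $q$. This is precisely what $\xi<\mu_{\text{lower}}/6$ buys, and carefully propagating it through the two square-root compositions is what produces the specific values of $C_5$ and $C_6$. A secondary technical point, if one insists that $(\hat\mu,\hat\alpha)$ solve the \emph{full} variance equation \eqref{eq_stationary_var_hawkes} rather than its leading term, is that the correction terms evaluated at the estimated parameters must be absorbed; these are controlled by the same $O(1/\Delta)$ estimate as in \eqref{eq_app_sigma_proxy}, which is bounded by the standing lower bound on $\Delta$.
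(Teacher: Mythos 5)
Your proposal is correct and follows essentially the same route as the paper's proof: the paper also works with the proxies $\vartheta=\eta/\Delta=\mu/(1-\alpha)$ and $\omega=\sigma^2/\Delta=\mu/(1-\alpha)^3$, invokes Lemma \ref{lemma_mean_berry_essen} and Lemma \ref{lemma_var_berry_essen_ratio} with a union bound to get the $2\delta$, and then propagates $|\hat\vartheta-\vartheta|\le\xi$ and $|\hat\omega-\omega|\le 2\xi$ through the ratio and square root exactly as you do, with $\xi<\mu_{\text{lower}}/6$ keeping the estimated denominator bounded away from zero. The only cosmetic difference is that the paper bounds $|\alpha-\hat\alpha|\le\frac{1}{2(1-\alpha_{\text{upper}})}\left|(1-\alpha)^2-(1-\hat\alpha)^2\right|$ whereas you bound the square-root difference by $\frac{1}{1-\alpha}\le\frac{1}{1-\alpha_{\text{upper}}}$ times the ratio difference; both land on the same constants $C_5$ and $C_6$.
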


\begin{proof}
Let us for convenience introduce
\begin{align}
    \omega = \frac{\sigma^2}{\Delta} =\frac{\mu}{(1-\alpha)^3},\qquad
    \hat{\omega} = \frac{\hat{\sigma}^2(\Delta)}{\Delta},
\end{align}
and
\begin{align}
    \vartheta = \frac{\eta}{\Delta} = \frac{\mu}{(1-\alpha)},\qquad
    \hat{ \vartheta} = \frac{\hat{\eta}}{\Delta}.
\end{align}

Before we proceed, let us note that for $\xi\leq\frac{\mu_{\text{lower}}}{6},$ we have
\begin{align}
    \omega - 2\xi = \frac{\mu - 2\xi\cdot(1-\alpha)^3}{(1-\alpha)^3}\geq\frac{\mu-2\xi}{(1-\alpha)^3}\geq\frac{\mu_{\text{lower}}-2\xi}{(1-\alpha)^3}\geq\frac{\mu_{\text{lower}}-2\xi}{(1-\alpha_{\text{lower}})^3}\geq\frac{\mu_{\text{lower}}}{2(1-\alpha_{\text{lower}})^3}
\end{align}
where in the last inequality we use $\xi\leq\frac{\mu_{\text{lower}}}{6}$. Thus, we have
\begin{align} \label{eq_Psi-Xi_bound}
    \frac{1}{\omega - 2\xi}\leq \frac{2(1-\alpha_{\text{lower}})^3}{\mu_{\text{lower}}}\leq \frac{2}{\mu_{\text{lower}}}.
\end{align}
On the event
%\begin{align} \label{event_eta_Psi_tight}
%    \{ \vert\eta-\hat{\eta}\vert<\xi\}\cap\{ \vert\Psi-\hat{\Psi}\vert<2\xi\}
%\end{align}
\begin{align} \label{event_eta_Psi_tight}
    \{ \vert\vartheta-\hat{\vartheta}\vert<\xi\}\cap\{ \vert\omega-\hat{\omega}\vert<2\xi\}
\end{align}

we have
\begin{align}
    \left\vert (1-\alpha)^2 - (1-\hat{\alpha})^2 \right\vert = \left\vert \frac{\vartheta}{\omega} - \frac{\hat{\vartheta}}{\hat{\omega}} \right\vert & = \left\vert \frac{\vartheta(\hat{\omega}-\omega)+\omega(\vartheta-\hat{\vartheta})}{ \omega\hat{\omega}} \right\vert\\
    & \leq 3\cdot\xi\cdot\frac{\vartheta+\omega}{\omega\hat{\omega}}\\
    & \leq 3\cdot\xi\cdot\frac{\vartheta+\omega}{\omega(\Psi-2\xi)}\\
    & \leq 3\cdot\xi\cdot\left(\frac{2}{\omega-2\xi}\right)\\
    &\leq \frac{12\xi}{\mu_{\text{lower}}} ,\label{eq_1-alpha^2_event_bound}
\end{align}
where the third inequality follows from the fact that $\vartheta = \frac{\mu}{(1-\alpha)}$ and therefore $\frac{\vartheta}{\omega} = (1-\alpha)^2<1$. The last inequality is a consequence of (\ref{eq_Psi-Xi_bound}). Let us further note that
\begin{align}
    \left\vert \alpha-\hat{\alpha} \right\vert = \left\vert \sqrt{(1-\alpha)^2} - \sqrt{(1-\hat{\alpha})^2} \right\vert \leq \frac{1}{2(1-\alpha_{\text{upper}})}\cdot \left\vert (1-\alpha)^2 - (1-\hat{\alpha})^2 \right\vert 
\end{align}
and thus on the event (\ref{event_eta_Psi_tight}), by (\ref{eq_1-alpha^2_event_bound}) we have 
\begin{align}
    \left\vert \alpha-\hat{\alpha} \right\vert \leq \frac{6\xi}{\mu_{\text{lower}}}\cdot\frac{1}{(1-\alpha_{\text{upper}})}=C_{5}\cdot\xi.
\end{align}

Next, on the event (\ref{event_eta_Psi_tight}) we have
\begin{align}
    \left\vert \hat{\mu}-\mu \right\vert = \left\vert \vartheta(1-\alpha) - \hat{\vartheta}(1-\hat{\alpha}) \right\vert &= \left\vert \vartheta(1-\alpha) - \hat{\vartheta}(1-\alpha) +\hat{\vartheta}(1-\alpha)- \hat{\vartheta}(1-\hat{\alpha}) \right\vert\\
    & \leq (1-\alpha)\vert\hat{\vartheta}-\vartheta\vert +\hat{\vartheta}\vert\alpha-\hat{\alpha}\vert\\
    & \leq \xi + \hat{\vartheta}\vert\alpha-\hat{\alpha}\vert\\
    & \leq \xi + (\vartheta+\xi)\vert\alpha-\hat{\alpha}\vert\\
    & \leq \xi + (\vartheta+\xi)\cdot C_{5}\cdot\xi\\
    & \leq \xi + \left(\frac{\mu}{(1-\alpha)}+\xi\right)\cdot C_{5}\cdot\xi\\
    & \leq \xi+\left(\frac{\mu_{\text{upper}}}{1-\alpha_{\text{upper}}}+\frac{\mu_{\text{lower}}}{6}\right)\cdot C_{5}\cdot\xi\\
    & = C_{6}\cdot\xi
\end{align}
It remains to note that the probability of event (\ref{event_eta_Psi_tight}) is at least $1-2\delta$ as the consequence of Lemma \ref{lemma_mean_berry_essen} and Lemma \ref{lemma_var_berry_essen_ratio}, and a union bound argument.
\end{proof}

Finally, we prove Theorem \ref{thm_non_private_params}.

\begin{proof}[Proof of Theorem \ref{thm_non_private_params}]
For $0<\xi<\frac{C_{9}\mu_{\text{lower}}}{6}$, let
$$\tilde{\xi} = \frac{1}{C_{9}}\xi.$$ As a consequence of Lemma \ref{lemma_app_non_private_params_aux}, for $T$ as in the statement of the theorem, we have
\begin{align}
    \mathbb{P}\left( \left\vert \alpha - \hat{\alpha}\right\vert >C_{5}\cdot\tilde{\xi}\right)&\leq \delta, \label{eq_aux_alpha_C5}\\
    \mathbb{P}\left( \left\vert \mu - \hat{\mu}\right\vert >C_{6}\cdot\tilde{\xi}\right)&\leq \delta \label{eq_aux_mu_C6}
\end{align}
where we recall $C_{5} = \frac{6}{\mu_{\text{lower}}}\cdot\frac{1}{(1-\alpha_{\text{upper}})}$ and $C_{6} = 1+\frac{6\mu_{\text{upper}}}{\mu_{\text{lower}}(1-\alpha_{\text{upper}})^2}+\frac{1}{1-\alpha_{\text{upper}}}$. It remains to note that $C_{5}<C_{9}$ and $C_{6}<C_{9}$ and thus
\begin{align}
     \mathbb{P}\left( \left\vert \alpha - \hat{\alpha}\right\vert >\xi\right) < \mathbb{P}\left( \left\vert \alpha - \hat{\alpha}\right\vert >\frac{C_{5}}{C_{9}}\xi\right) = \mathbb{P}\left( \left\vert \alpha - \hat{\alpha}\right\vert >C_{5}\cdot\tilde{\xi}\right)\leq \delta,
\end{align}
where the last inequality follows by (\ref{eq_aux_alpha_C5}). Analogously, by (\ref{eq_aux_mu_C6}) we have
\begin{align}
     \mathbb{P}\left( \left\vert \mu - \hat{\mu}\right\vert >\xi\right) < \mathbb{P}\left( \left\vert \mu - \hat{\mu}\right\vert >\frac{C_{6}}{C_{9}}\xi\right) = \mathbb{P}\left( \left\vert \mu - \hat{\mu}\right\vert >C_{6}\cdot\tilde{\xi}\right)\leq \delta.
\end{align}
This completes the proof.
\end{proof}

\subsection{Private estimates}\label{appendix_private_estimates}
\subsubsection{Private sample mean error bound}\label{appendix_private_sample_mean}

\begin{lemma} \label{lemma_priv_complex_mean}
Let ${\hat{\eta}_{\text{private}}}$ be the private estimate as per \eqref{eq_DP_mean} of $\eta=\mathbb{E}(Y_{i}(\Delta))$, the mean value of counting series of Hawkes process started from stationarity. Let $T=K\Delta$ denote the length of time the Hawkes process is observed, divided into $K$ intervals of size $\Delta$. For $\xi>0$, if
\begin{align}
 T> \frac{\sigma^{2}}{\xi^2\Delta}\left(\Psi (1-\delta/4) \right)^{2}.
 \end{align}
 for some $0<\delta\leq 1$,
 %For a given $\xi\geq 0$ and $0< \delta \leq 1$, let $T=\Delta\cdot{K}$, where $K$ is the number of bins be such that
%F\begin{align}
%F T> \frac{\sigma^{2}(\Delta)}{\xi^2\Delta}\left(\Phi^{-1} (1-\delta/4) \right)^{2}.
%F\end{align}
for $(2\gamma,\epsilon)$-DP estimate $\eta_{\text{private}}$, we have
\begin{align}
    \mathbb{P}\left( \left\vert \frac{\hat{\eta}_{\text{private}}}{\Delta}-\frac{\mu}{(1-\alpha)} \right\vert \geq 2\xi\right)\leq \exp\left(-\frac{\epsilon\cdot\xi}{C_{2}}\cdot\frac{T}{\log T}\right)+\delta,
\end{align}
where $C_{2} = \frac{3}{(1-\alpha_{\text{upper}})^2}$.
\end{lemma}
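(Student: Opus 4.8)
The plan is to decompose the deviation of the private estimate from the target $\mu/(1-\alpha)$ into a \emph{privacy-noise} contribution and a \emph{statistical} contribution, and then bound each separately before recombining by a union bound. Recalling from \eqref{eq_DP_mean} that $\hat{\eta}_{\text{private}} = \hat{\eta} + \Lambda$ with $\Lambda$ a centered Laplace variable of scale $b = \tfrac{C_{2}\log T}{K\epsilon}$ (the scale dictated by the sample-mean sensitivity in Corollary \ref{cor_sensitivity_relation_unaware}, which also underlies the $(2\gamma,\epsilon)$-random-DP guarantee of Lemma \ref{lemma_random_DP_mean_var}), the triangle inequality gives
\begin{align*}
\left\vert \frac{\hat{\eta}_{\text{private}}}{\Delta} - \frac{\mu}{1-\alpha}\right\vert \;\leq\; \left\vert\frac{\Lambda}{\Delta}\right\vert + \left\vert \frac{\hat{\eta}}{\Delta} - \frac{\mu}{1-\alpha}\right\vert .
\end{align*}
Consequently the event $\{\vert \hat{\eta}_{\text{private}}/\Delta - \mu/(1-\alpha)\vert \geq 2\xi\}$ is contained in $\{\vert\Lambda/\Delta\vert \geq \xi\}\cup\{\vert\hat{\eta}/\Delta - \mu/(1-\alpha)\vert \geq \xi\}$, so it suffices to control these two halves at level $\xi$ and add the resulting probabilities. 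No independence is even required beyond the elementary subadditivity of $\mathbb{P}$.

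For the statistical half I would invoke Lemma \ref{lemma_mean_berry_essen} verbatim: its hypothesis is precisely $T > \sigma^{2}(\Psi(1-\delta/4))^{2}/(\xi^{2}\Delta)$, which is exactly the condition assumed here, and its conclusion is $\mathbb{P}(\vert\hat{\eta}/\Delta - \mu/(1-\alpha)\vert > \xi) < \delta$. This supplies the additive $\delta$ term directly, with no need to rescale $\xi$ or split the failure budget. For the privacy-noise half I would use the exact tail of a centered Laplace law: if $\Lambda$ has scale $b$ then $\mathbb{P}(\vert\Lambda\vert \geq t) = e^{-t/b}$ for $t\geq 0$. Taking $t = \xi\Delta$ and $b = C_{2}\log T/(K\epsilon)$, and using the defining relation $K\Delta = T$, this yields
\begin{align*}
\mathbb{P}\!\left(\left\vert\frac{\Lambda}{\Delta}\right\vert \geq \xi\right) = \exp\!\left(-\frac{\xi\Delta K\epsilon}{C_{2}\log T}\right) = \exp\!\left(-\frac{\epsilon\,\xi}{C_{2}}\cdot\frac{T}{\log T}\right),
\end{align*}
which is the first term in the claimed bound. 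Summing the two estimates gives the statement.

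There is no genuine obstacle here: the argument is a clean union bound resting on an already-proved statistical error bound plus an exact Laplace tail evaluation, and the algebraic identification $\xi\Delta K/( C_2 \log T) = (\epsilon\xi/C_2)(T/\log T)$ via $K\Delta = T$ is the only bookkeeping step. The sole point demanding care is consistency of the noise scale: one must confirm that the Laplace parameter in \eqref{eq_DP_mean} is indeed $C_{2}\log T/(K\epsilon)$ (sensitivity over $\epsilon$), since any mismatch there would propagate linearly into the exponent of the tail bound. Given that this scale is exactly the one for which $\hat{\eta}_{\text{private}}$ was shown to be $(2\gamma,\epsilon)$-random-DP, the utility computation is self-consistent and the proof closes.
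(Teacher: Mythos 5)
Your proof is correct and follows essentially the same route as the paper: a triangle-inequality/union-bound decomposition into the Laplace-noise tail (evaluated with scale $C_{2}\log T/(K\epsilon)$ and the identity $K\Delta = T$) and the non-private error controlled by Lemma \ref{lemma_mean_berry_essen}. The only difference is cosmetic — you make the event inclusion explicit, whereas the paper states the two bounds and adds them (with a typo in its final display where the left-hand side should be the $2\xi$ deviation of $\hat{\eta}_{\text{private}}/\Delta$ from $\mu/(1-\alpha)$); your write-up is, if anything, the cleaner of the two.
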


\begin{proof}
For $\xi>0$, bearing in mind \eqref{eq_DP_mean}, we have
\begin{align}
    \mathbb{P}\left( \left\vert \frac{\hat{\eta}_{\text{private}}}{\Delta}-\frac{\hat{\eta}}{\Delta} \right\vert \geq \xi\right)\leq \exp\left(-\frac{\epsilon\cdot\xi}{C_{2}}\cdot\frac{T}{\log T}\right)
\end{align}
by the tail behaviour of Laplace distribution (see e.g. Theorem 3.8. of \cite{dwork2014algorithmic}). As a consequence of Lemma \ref{lemma_mean_berry_essen}, we have
\begin{align}
    \mathbb{P}\left(\left\vert \frac{\hat{\eta}}{\Delta}-\frac{\mu}{(1-\alpha)} \right\vert>\xi \right)<\delta.
\end{align}
It remains to note
\begin{align}
    \mathbb{P}\left( \left\vert \frac{\hat{\eta}_{\text{private}}}{\Delta}-\frac{\hat{\eta}}{\Delta} \right\vert \geq \xi\right)\leq  \mathbb{P}\left( \left\vert \frac{\hat{\eta}_{\text{private}}}{\Delta}-\frac{\hat{\eta}}{\Delta} \right\vert \geq \xi\right)+  \mathbb{P}\left(\left\vert \frac{\hat{\eta}}{\Delta}-\frac{\mu}{(1-\alpha)} \right\vert>\xi \right).
\end{align}
This completes the proof.
\end{proof}

\subsubsection{Private sample variance error bound }\label{appendix_private_sample_var}

\begin{lemma} \label{lemma_priv_var_accurate}
%For a given $\xi>0$ and $0< \delta \leq 1$, let $T=\Delta\cdot K$ where $K$ denotes the number of bins, be such that
%\begin{align}
%    T\geq \max\left\{ \frac{12\sigma^2(\Delta)}{\delta\xi}, \frac{3\sigma^2(\Delta)}{\xi}\left( \Phi^{-1}\left( 1-\frac{\delta}{8}\right) \right)^2, \frac{9\sigma^2(\Delta)(\eta_4(\Delta) - \sigma^2(\Delta))}{\xi^2\Delta}\Phi^{-1}\left( 1-\frac{\delta}{8}\right) \right\}.
%\end{align}
Let $\hat{\sigma}^{2}_{\text{private}}$ be the private estimate as per \eqref{eq_DP_var} of $\sigma^2=\mathbf{Var}(Y_{i}(\Delta))$  the variance of counting series of Hawkes process started from stationarity. Let $T=K\Delta$ denote the length of time the Hawkes process is observed, divided into $K$ intervals of size $\Delta$. In addition, let $C_{1} = \sqrt{\frac{1.1\cdot\mu_{\text{upper}}}{(1-\alpha_{\text{upper}})^3}\cdot\frac{1}{\gamma}}$ and $C_{2} = \frac{3}{(1-\alpha_{\text{upper}})^2}$. For the choice of $\Delta =c \log T$ for some constant $c$, if
%$$T\geq \max\left\{ \frac{12\sigma^2}{\delta\xi}, \frac{3\sigma^2}{\xi}\left( \Psi\left( 1-\frac{\delta}{8}\right) \right)^2, \frac{9\sigma^2(\eta_4 - \sigma^2)}{\xi^2}\left(\Psi\left( 1-\frac{\delta}{8}\right)\right)^2 \right\}.$$
\begin{align}
    &T\geq \max\left\{ \frac{12\sigma^2}{\delta\xi}, \frac{3\sigma^2}{\xi}\left( \Psi\left( 1-\frac{\delta}{8}\right) \right)^2, \frac{9\sigma^2(\eta_4 - \sigma^2)}{\xi^2 \Delta}\left(\Psi\left( 1-\frac{\delta}{8}\right)\right)^2 \right\},\\
    &\frac{T}{(\log T)^{5/2}} \geq \frac{4\sqrt{c}C_{1}C_{2}^2}{\epsilon\xi}\log \left(\frac{1}{\delta}\right),
\end{align}

for some for some $\xi>0$ and $0<\delta\leq1$, then for $(2\gamma,\epsilon)$-DP estimate $\hat{\sigma}^{2}_{\text{private}}$, we have
%\begin{align}
%   \mathbb{P}\left( \left\vert \frac{\hat{\sigma}^{2}_{\text{private}}}{\Delta} - \frac{\sigma^2}{\Delta} \right\vert \geq 2\xi \right)\leq \exp\left(-\frac{\epsilon\cdot \xi\cdot \sqrt{T}}{(\log T)^2}\right)+\delta.
%\end{align}
\begin{align}
   \mathbb{P}\left( \left\vert \frac{\hat{\sigma}^{2}_{\text{private}}}{\Delta} - \frac{\sigma^2}{\Delta} \right\vert \geq 2\xi \right)\leq 2\delta.
\end{align}
\end{lemma}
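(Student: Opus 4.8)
The plan is to mirror the argument used for the private sample mean in Lemma \ref{lemma_priv_complex_mean}: decompose the total error of $\hat{\sigma}^2_{\text{private}}/\Delta$ into the non-private statistical error and the error introduced by the privatizing Laplace noise, control each at level $\xi$ with failure probability at most $\delta$, and conclude by a union bound. First I would apply the triangle inequality
\[
\left|\frac{\hat{\sigma}^2_{\text{private}}}{\Delta}-\frac{\sigma^2}{\Delta}\right| \leq \left|\frac{\hat{\sigma}^2_{\text{private}}}{\Delta}-\frac{\hat{\sigma}^2}{\Delta}\right| + \left|\frac{\hat{\sigma}^2}{\Delta}-\frac{\sigma^2}{\Delta}\right|,
\]
so that it suffices to bound each of the events $\{|\hat{\sigma}^2_{\text{private}}/\Delta-\hat{\sigma}^2/\Delta|\geq\xi\}$ and $\{|\hat{\sigma}^2/\Delta-\sigma^2/\Delta|\geq\xi\}$ by $\delta$.

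The second (statistical) term is handled directly: the first displayed hypothesis on $T$ is exactly the condition of Corollary \ref{lemma_berry_essen_var}, which yields $\mathbb{P}(|\hat{\sigma}^2/\Delta-\sigma^2/\Delta|>\xi)<\delta$. For the privacy term, recall from \eqref{eq_DP_var} that $\hat{\sigma}^2_{\text{private}}-\hat{\sigma}^2=\Lambda(b)$ is a centered Laplace variable whose scale $b$ is the variance-sensitivity of Corollary \ref{cor_sensitivity_relation_unaware} divided by $\epsilon$. Using the Laplace tail bound (Theorem 3.8 of \cite{dwork2014algorithmic}, exactly as invoked in Lemma \ref{lemma_priv_complex_mean}),
\[
\mathbb{P}\left(\left|\frac{\hat{\sigma}^2_{\text{private}}}{\Delta}-\frac{\hat{\sigma}^2}{\Delta}\right|\geq\xi\right)=\mathbb{P}\big(|\Lambda(b)|\geq\xi\Delta\big)=\exp\left(-\frac{\xi\Delta}{b}\right),
\]
so it remains to verify that the second displayed hypothesis forces $b\leq \xi\Delta/\log(1/\delta)$, equivalently $\exp(-\xi\Delta/b)\leq\delta$.

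To do this I would substitute $\Delta=c\log T$ and $K=T/\Delta=T/(c\log T)$ into
\[
b=\frac{C_{2}^{2}(\log T)^2 + 2C_{2}^{3/2}C_{1}\tfrac{K}{K-1}(\log T)^{3/2}\sqrt{\Delta}}{K\epsilon},
\]
bound $\tfrac{K}{K-1}\leq 2$ and $C_{2}^{3/2}\leq C_{2}^{2}$ (valid since $C_2=3/(1-\alpha_{\text{upper}})^2>1$), and collect the two numerator terms. Since $\tfrac{1}{K}=c\log T/T$, the scale $b$ shrinks like a fixed power of $\log T$ over $T$, and comparing against the required threshold $\xi\Delta/\log(1/\delta)=\xi c\log T/\log(1/\delta)$ reduces the requirement to a lower bound of the form $T/(\log T)^{5/2}\geq 4\sqrt{c}\,C_{1}C_{2}^{2}\log(1/\delta)/(\epsilon\xi)$, which is precisely the assumed (indeed conservative) condition. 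A union bound over the two events then gives the claimed probability of at most $2\delta$.

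The main obstacle is the bookkeeping in this last step: carefully combining the two sensitivity contributions, tracking the constants $C_1,C_2,c$ together with the $K/(K-1)$ factor, and confirming that the stated $(\log T)^{5/2}$ scaling is sufficient (not necessarily tight) to guarantee $b\leq\xi\Delta/\log(1/\delta)$. Everything else is the routine decomposition plus the two already-established ingredients, namely the statistical bound of Corollary \ref{lemma_berry_essen_var} and the Laplace tail estimate.
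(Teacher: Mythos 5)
Your proposal is correct and follows essentially the same route as the paper's proof: the same triangle-inequality split into a statistical term (controlled by Corollary \ref{lemma_berry_essen_var} under the first hypothesis on $T$) and a Laplace-noise term (controlled by the two-sided Laplace tail, with $\Delta=c\log T$ substituted and the numerator of the noise scale absorbed into $4C_1C_2^2\sqrt{c}(\log T)^{5/2}$ via the same $K/(K-1)$ and $C_2^{3/2}\leq C_2^2$ bookkeeping), followed by a union bound giving $2\delta$. No substantive differences.
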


\begin{proof}
We have 
\begin{align}
   \mathbb{P}\left( \left\vert \frac{\hat{\sigma}^{2}_{\text{private}}}{\Delta} - \frac{\sigma^2}{\Delta} \right\vert \geq 2\xi \right)\leq \mathbb{P}\left( \left\vert \frac{\hat{\sigma}^{2}_{\text{private}}}{\Delta} - \frac{\hat{\sigma}^2}{\Delta} \right\vert \geq \xi \right) + \mathbb{P}\left( \left\vert \frac{\hat{\sigma}^{2}}{\Delta} - \frac{{\sigma}^2}{\Delta} \right\vert \geq \xi \right).
\end{align}
As a consequence of Corollary \ref{lemma_berry_essen_var}, we have  $\mathbb{P}\left( \left\vert \frac{\hat{\sigma}^{2}}{\Delta} - \frac{{\sigma}^2}{\Delta} \right\vert \geq \xi \right)\leq \delta$. Therefore, it suffices to show that also $\mathbb{P}\left( \left\vert \frac{\hat{\sigma}^{2}_{\text{private}}}{\Delta} - \frac{\hat{\sigma}^2}{\Delta} \right\vert \geq \xi \right)\leq \delta$. Bearing in mind \eqref{eq_DP_var}, when choosing $\Delta=c\log T$, for the tail of Laplace distribution we have
\begin{align}
  \mathbb{P}\left( \left\vert \frac{\hat{\sigma}^{2}_{\text{private}}}{\Delta} - \frac{\hat{\sigma}^2}{\Delta} \right\vert \geq \xi \right) &= \exp\left(-\frac{t\cdot\epsilon\cdot\xi}{C_{2}^{2}(\log T)^2 + 2C_{2}^{3/2}C_{1}\cdot\frac{K}{K-1}(\log T)^{3/2}\sqrt{\Delta}}\right)\\
  & = \exp\left(-\frac{t\cdot\epsilon\cdot\xi}{C_{2}^{2}(\log T)^2 + 2C_{2}^{3/2}C_{1}\cdot\frac{K}{K-1}(\log T)^{3/2}\sqrt{c\log T}}\right)\\
  & \leq \exp\left(-\frac{t\cdot\epsilon\cdot\xi}{4C_{1}C_{2}^{2}\sqrt{c}(\log T)^{5/2}}\right)\\
  & \leq \delta,
\end{align}
where in the last inequality we rely on assumption $\frac{T}{(\log T)^{5/2}} \geq \frac{4\sqrt{c}C_{1}C_{2}^2}{\epsilon\xi}\log \left(\frac{1}{\delta}\right)$. This completes the proof.
\end{proof}

\begin{lemma} \label{lemma_priv_complex_var}
Let $\hat{\sigma}^{2}_{\text{private}}$ be the private estimate as per \eqref{eq_DP_var} of $\sigma^2=\mathbf{Var}(Y_{i}(\Delta))$  the variance of counting series of Hawkes process started from stationarity. Let $T=K\Delta$ denote the length of time the Hawkes process is observed, divided into $K$ intervals of size $\Delta$
such that
\begin{align}
    \Delta > \frac{4\mu^{\text{upper}}}{(1-\alpha^{\text{upper}})^4}\cdot\frac{1}{\xi},
\end{align}
for some $\xi>0$. In addition, let $C_{1} = \sqrt{\frac{1.1\cdot\mu_{\text{upper}}}{(1-\alpha_{\text{upper}})^3}\cdot\frac{1}{\gamma}}$ and $C_{2} = \frac{3}{(1-\alpha_{\text{upper}})^2}$. For the choice of $\Delta =c \log T$ for some constant $c$, if
%$$T\geq \max\left\{ \frac{12\sigma^2}{\delta\xi}, \frac{3\sigma^2}{\xi}\left( \Psi\left( 1-\frac{\delta}{8}\right) \right)^2, \frac{9\sigma^2(\eta_4 - \sigma^2)}{\xi^2}\left(\Psi\left( 1-\frac{\delta}{8}\right)\right)^2 \right\}.$$
\begin{align}
    &T\geq \max\left\{ \frac{12\sigma^2}{\delta\xi}, \frac{3\sigma^2}{\xi}\left( \Psi\left( 1-\frac{\delta}{8}\right) \right)^2, \frac{9\sigma^2(\eta_4 - \sigma^2)}{\xi^2\Delta}\left(\Psi\left( 1-\frac{\delta}{8}\right)\right)^2 \right\},\\
    &\frac{T}{(\log T)^{5/2}} \geq \frac{4\sqrt{c}C_{1}C_{2}^2}{\epsilon\xi}\log \left(\frac{1}{\delta}\right),
\end{align}
for some $0<\delta\leq1$, then for $(2\gamma,\epsilon)$-DP estimate $\hat{\sigma}^{2}_{\text{private}}$, we have
%\begin{align}
%   \mathbb{P}\left( \left\vert \frac{\hat{\sigma}^{2}_{\text{private}}}{\Delta} - \frac{\sigma^2}{\Delta} \right\vert \geq 2\xi \right)\leq \exp\left(-\frac{\epsilon\cdot \xi\cdot \sqrt{T}}{(\log T)^2}\right)+\delta.
%\end{align}
\begin{align}
   \mathbb{P}\left( \left\vert \frac{\hat{\sigma}^{2}_{\text{private}}}{\Delta} - \frac{\mu}{(1-\alpha)^3} \right\vert \geq 3\xi \right)\leq 2\delta.
\end{align}
%For a given $\xi>0$ and $0< \delta \leq 1$, let the size of bins $\Delta$ be such that
%\begin{align}
%    \Delta > \frac{4\mu^{\text{upper}}}{(1-\alpha^{\text{upper}})^4}\cdot\frac{1}{\xi},
%\end{align}
%and let $T=\Delta\cdot K$ where $K$ denotes the number of bins, be such that
%\begin{align}
%    T\geq \max\left\{ \frac{12\sigma^2(\Delta)}{\delta\xi}, \frac{3\sigma^2(\Delta)}{\xi}\left( \Phi^{-1}\left( 1-\frac{\delta}{8}\right) \right)^2, \frac{9\sigma^2(\Delta)(\eta_4(\Delta) - \sigma^2(\Delta))}{\xi^2\Delta}\Phi^{-1}\left( 1-\frac{\delta}{8}\right) \right\}.
%\end{align}
%Then, for $(2\gamma,\epsilon)$-DP estimate $\hat{\sigma}^{2}_{\text{private}}$, we have
%\begin{align}
%   \mathbb{P}\left( \left\vert \frac{\hat{\sigma}^{2}_{\text{private}}}{\Delta} - \frac{\mu}{(1-\alpha)^3} \right\vert \geq 3\xi \right)\leq \exp\left(-\frac{\epsilon\cdot \xi\cdot \sqrt{T}}{(\log T)^2}\right)+\delta.
%\end{align}
\end{lemma}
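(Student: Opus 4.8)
The plan is to reduce this statement to two results already in hand: Lemma \ref{lemma_priv_var_accurate}, which controls the private estimate relative to the true scaled variance $\sigma^2/\Delta$, and the purely deterministic estimate that appears inside the proof of Lemma \ref{lemma_var_berry_essen_ratio}. The point is that the target $\mu/(1-\alpha)^3$ is exactly the proxy $\omega=\sigma^2/\Delta$ analyzed there, so it suffices to pass from $\hat{\sigma}^{2}_{\text{private}}/\Delta$ to $\sigma^2/\Delta$ probabilistically and then from $\sigma^2/\Delta$ to $\mu/(1-\alpha)^3$ deterministically.

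First I would apply the triangle inequality together with a union bound, splitting the error budget $3\xi$ as $2\xi+\xi$:
\begin{align*}
\mathbb{P}\left( \left\vert \frac{\hat{\sigma}^{2}_{\text{private}}}{\Delta} - \frac{\mu}{(1-\alpha)^3} \right\vert \geq 3\xi \right) &\leq \mathbb{P}\left( \left\vert \frac{\hat{\sigma}^{2}_{\text{private}}}{\Delta} - \frac{\sigma^2}{\Delta} \right\vert \geq 2\xi \right) \\
&\quad + \mathbb{P}\left( \left\vert \frac{\sigma^2}{\Delta} - \frac{\mu}{(1-\alpha)^3} \right\vert \geq \xi \right).
\end{align*}
For the first term on the right, I would invoke Lemma \ref{lemma_priv_var_accurate} verbatim: its two hypotheses on $T$ coincide with those assumed here, so it delivers $\mathbb{P}\big(\vert \hat{\sigma}^{2}_{\text{private}}/\Delta - \sigma^2/\Delta \vert \geq 2\xi\big)\leq 2\delta$. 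For the second term, I would reuse the algebraic computation \eqref{eq_app_sigma_proxy} from the proof of Lemma \ref{lemma_var_berry_essen_ratio}: bounding the three exponential correction terms of \eqref{eq_stationary_var_hawkes} and using the standing hypothesis $\Delta > 4\mu^{\text{upper}}/((1-\alpha^{\text{upper}})^4\xi)$ shows $\vert \sigma^2/\Delta - \mu/(1-\alpha)^3 \vert \leq \xi$ with probability one. Hence the second probability is exactly $0$, and the two contributions sum to $2\delta$, which is the claim.

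There is no substantive obstacle here; the analytic content has already been extracted into the two cited lemmas, and this statement is the synthesis step that replaces the proxy $\sigma^2/\Delta$ by the genuine parameter combination $\mu/(1-\alpha)^3$. The only care needed is bookkeeping: verifying that the $\Delta$-lower-bound assumed in this lemma is precisely the one required to make \eqref{eq_app_sigma_proxy} valid, that the budget splits cleanly as $2\xi+\xi=3\xi$, and that because the deterministic term contributes nothing to the failure probability the total remains $2\delta$ rather than inflating to $3\delta$. This mirrors exactly how Lemma \ref{lemma_var_berry_essen_ratio} upgraded Corollary \ref{lemma_berry_essen_var} in the non-private setting.
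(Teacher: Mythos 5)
Your proposal matches the paper's proof essentially line for line: the same $2\xi+\xi$ decomposition via the triangle inequality, the same appeal to Lemma \ref{lemma_priv_var_accurate} for the $2\delta$ bound on the private-to-$\sigma^2/\Delta$ term, and the same observation that \eqref{eq_app_sigma_proxy} makes the deterministic proxy term vanish under the stated lower bound on $\Delta$. (If anything, your use of $\leq$ in the union-bound step is cleaner than the paper's equality sign.)
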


\begin{proof}
We have
\begin{align}
    \mathbb{P}\left( \left\vert \frac{\hat{\sigma}^{2}_{\text{private}}}{\Delta} - \frac{\mu}{(1-\alpha)^3} \right\vert \geq 3\xi \right) = \mathbb{P}\left( \left\vert \frac{\hat{\sigma}^{2}_{\text{private}}}{\Delta} - \frac{\sigma^2}{\Delta} \right\vert \geq 2\xi \right) + \mathbb{P}\left( \left\vert \frac{\sigma^2}{\Delta} -\frac{\mu}{(1-\alpha)^3}\right\vert \geq \xi \right)
\end{align}
The first term on the right hand side is not larger than $2\delta$ by Lemma \ref{lemma_priv_var_accurate}. The second term is zero as \eqref{eq_app_sigma_proxy} holds with probability 1 for $\Delta > \frac{4\mu^{\text{upper}}}{(1-\alpha^{\text{upper}})^4}\cdot\frac{1}{\xi}$. This completes the proof.
\end{proof}

\subsubsection{Private estimators error bound: proof of Theorem \ref{thm_private_params}} \label{appendix_thm_private_params}

\begin{lemma} \label{lemma_app_private_alpha_mu}
Let $\hat{\mu}_{\text{private}}$ and $\hat{\alpha}_{\text{private}}$ be the estimates of parameters $\mu_{\text{lower}}<\mu<\mu_{\text{upper}}$ and $\alpha^{\text{lower}}<\alpha<\alpha^{\text{upper}}$ of Hawkes process started from stationarity. Let $T=K\Delta$ denote the length of time the Hawkes process is observed, divided into $K$ intervals of size $\Delta$
such that
\begin{align}
    \Delta > \frac{4\mu^{\text{upper}}}{(1-\alpha^{\text{upper}})^4}\cdot\frac{1}{\xi},
\end{align}
for some $0<\xi<\frac{\mu_{\text{lower}}}{6}$. For the choice of $\Delta =c \log T$ for some constant $c$, if
%$$T\geq \max\left\{ \frac{12\sigma^2}{\delta\xi}, \frac{3\sigma^2}{\xi}\left( \Psi\left( 1-\frac{\delta}{8}\right) \right)^2, \frac{9\sigma^2(\eta_4 - \sigma^2)}{\xi^2}\left(\Psi\left( 1-\frac{\delta}{8}\right)\right)^2 \right\}.$$
\begin{align}
    &T\geq \max\left\{ \frac{\sigma^{2}}{\xi^2\Delta}\Psi \left(1-\frac{\delta}{4}\right)^{2}, \frac{12\sigma^2}{\delta\xi}, \frac{3\sigma^2}{\xi}\Psi\left( 1-\frac{\delta}{8}\right)^2, \frac{9\sigma^2(\eta_4 - \sigma^2)}{\xi^2\Delta}\Psi\left( 1-\frac{\delta}{8}\right)^2 \right\},\\
    &\frac{T}{(\log T)^{5/2}} \geq \frac{4\sqrt{c}C_{1}C_{2}^2}{\epsilon\xi}\log \left(\frac{1}{\delta}\right) \label{cond_app_T_log_T_0},
\end{align}
for some $0<\delta\leq1$, then for $(2\gamma,2\epsilon)$-DP estimates we have $\mathbb{P}\left(\left\vert\hat{\alpha}_{\text{private}}-\alpha\right\vert>C_{7}\xi\right)\leq 4\delta$ and $\mathbb{P}\left(\left\vert\hat{\mu}_{\text{private}}-\mu\right\vert>C_{8}\xi\right)\leq 4\delta$, where $C_{7}=\frac{10}{\mu_{\text{lower}}(1-\alpha_{\text{upper}})}$ and $C_{8} = 2+\frac{10\mu_{\text{upper}}}{\mu_{\text{lower}}(1-\alpha_{\text{upper}})^2}+\frac{10}{3(1-\alpha_{\text{upper}})}.$
\end{lemma}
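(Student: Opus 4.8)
The plan is to mirror the structure of the non-private argument in Lemma~\ref{lemma_app_non_private_params_aux}, replacing the deterministic concentration events for $\hat\eta$ and $\hat\sigma^2$ by their private analogues and absorbing the extra Laplace noise into slightly enlarged tolerance radii. First I would introduce the rescaled quantities $\vartheta = \eta/\Delta = \mu/(1-\alpha)$ and $\omega = \sigma^2/\Delta = \mu/(1-\alpha)^3$, together with their private versions $\hat\vartheta_{\text{private}} = \hat\eta_{\text{private}}/\Delta$ and $\hat\omega_{\text{private}} = \hat\sigma^2_{\text{private}}/\Delta$, and define the good event
$$E = \{\,|\vartheta - \hat\vartheta_{\text{private}}| < 2\xi\,\} \cap \{\,|\omega - \hat\omega_{\text{private}}| < 3\xi\,\}.$$
The radii $2\xi$ and $3\xi$ are precisely the accuracy levels guaranteed by Lemma~\ref{lemma_priv_complex_mean} and Lemma~\ref{lemma_priv_complex_var} for the private mean and variance (recall $\vartheta = \mu/(1-\alpha)$ and $\omega = \mu/(1-\alpha)^3$).

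Working deterministically on $E$, I would then repeat the algebra of Lemma~\ref{lemma_app_non_private_params_aux} with the enlarged radii. Writing $(1-\alpha)^2 - (1-\hat\alpha_{\text{private}})^2 = \vartheta/\omega - \hat\vartheta_{\text{private}}/\hat\omega_{\text{private}}$ and expanding the difference of ratios, the bound $\vartheta|\hat\omega_{\text{private}}-\omega| + \omega|\vartheta - \hat\vartheta_{\text{private}}| \le (3+2)\,\xi\,(\vartheta+\omega) \le 10\xi\,\omega$ (using $\vartheta/\omega = (1-\alpha)^2 < 1$) combined with the analogue of \eqref{eq_Psi-Xi_bound}, namely $\hat\omega_{\text{private}} > \omega - 3\xi \ge \mu_{\text{lower}}/2$ — which still holds because $\xi < \mu_{\text{lower}}/6$ forces $\mu_{\text{lower}} - 3\xi \ge \mu_{\text{lower}}/2$ — yields $|(1-\alpha)^2 - (1-\hat\alpha_{\text{private}})^2| \le 20\xi/\mu_{\text{lower}}$, hence $|\alpha - \hat\alpha_{\text{private}}| \le C_7\xi$ after dividing by $2(1-\alpha_{\text{upper}})$. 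The bound on $\mu$ then follows by the same telescoping $\hat\mu_{\text{private}}-\mu = \vartheta(1-\alpha) - \hat\vartheta_{\text{private}}(1-\hat\alpha_{\text{private}})$, estimated as $(1-\alpha)|\hat\vartheta_{\text{private}}-\vartheta| + \hat\vartheta_{\text{private}}|\alpha-\hat\alpha_{\text{private}}| \le 2\xi + (\vartheta + 2\xi)\,C_7\xi$ and using $\vartheta + 2\xi \le \mu_{\text{upper}}/(1-\alpha_{\text{upper}}) + \mu_{\text{lower}}/3$, which collects exactly into the stated constant $C_8 = 2+\frac{10\mu_{\text{upper}}}{\mu_{\text{lower}}(1-\alpha_{\text{upper}})^2}+\frac{10}{3(1-\alpha_{\text{upper}})}$.

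Finally I would bound $\mathbb{P}(E^c)$ by a union bound over its two defining events. Lemma~\ref{lemma_priv_complex_var} gives $\mathbb{P}(|\omega - \hat\omega_{\text{private}}| \ge 3\xi) \le 2\delta$ directly. For the mean, Lemma~\ref{lemma_priv_complex_mean} gives $\mathbb{P}(|\vartheta - \hat\vartheta_{\text{private}}| \ge 2\xi) \le \exp\!\big(-\tfrac{\epsilon\xi}{C_2}\tfrac{T}{\log T}\big) + \delta$, so the one genuinely new point — and the step I expect to require the most care — is checking that condition \eqref{cond_app_T_log_T_0} pushes this Laplace tail term below $\delta$. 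Since \eqref{cond_app_T_log_T_0} reads $T/(\log T)^{5/2} \ge \tfrac{4\sqrt{c}\,C_1 C_2^2}{\epsilon\xi}\log(1/\delta)$, multiplying through by $(\log T)^{3/2}$ shows $T/\log T \ge \tfrac{C_2}{\epsilon\xi}\log(1/\delta)$ whenever $4\sqrt{c}\,C_1 C_2\,(\log T)^{3/2}\ge 1$, which holds on the relevant range of $T$; thus the exponential is at most $\delta$ and $\mathbb{P}(|\vartheta - \hat\vartheta_{\text{private}}| \ge 2\xi) \le 2\delta$. Combining the two estimates gives $\mathbb{P}(E^c) \le 4\delta$, and since both parameter bounds were shown to hold on $E$, we obtain $\mathbb{P}(|\hat\alpha_{\text{private}}-\alpha|>C_7\xi) \le 4\delta$ and $\mathbb{P}(|\hat\mu_{\text{private}}-\mu|>C_8\xi)\le 4\delta$; the $(2\gamma,2\epsilon)$-random-DP guarantee is inherited from Theorem~\ref{thm_alpha_mu_random_DP} and so is simply quoted.
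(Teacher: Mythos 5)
Your proposal is correct and follows essentially the same route as the paper's proof: the same event with radii $2\xi$ and $3\xi$ from Lemmas~\ref{lemma_priv_complex_mean} and \ref{lemma_priv_complex_var}, the same ratio-difference algebra yielding $C_7$ and $C_8$, and the same union bound giving $4\delta$. If anything, you are slightly more careful than the paper in checking that condition \eqref{cond_app_T_log_T_0} (stated for $T/(\log T)^{5/2}$) actually controls the mean's Laplace tail, which depends on $T/\log T$.
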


\begin{proof}
Condition \eqref{cond_app_T_log_T_0} implies $\exp\left(-\frac{\epsilon\cdot\xi}{C_{2}}\cdot\frac{T}{\log T}\right)<\delta$, and thus by Lemma \ref{lemma_priv_complex_mean}, we have $\mathbb{P}\left( \left\vert \frac{\hat{\eta}_{\text{private}}}{\Delta}-\frac{\mu}{(1-\alpha)} \right\vert \geq 2\xi\right)\leq 2\delta$. Also, following Lemma \ref{lemma_priv_complex_var} we have $\mathbb{P}\left( \left\vert \frac{\hat{\sigma}^{2}_{\text{private}}}{\Delta} - \frac{\mu}{(1-\alpha)^3} \right\vert \geq 3\xi \right)\leq 2\delta.$ The proof closely resembles that of Lemma \ref{lemma_app_non_private_params_aux}. Let us for convenience introduce
\begin{align}
    \omega = \frac{\sigma^2}{\Delta} =\frac{\mu}{(1-\alpha)^3},\qquad
    \hat{\omega} = \frac{\hat{\sigma}^2(\Delta)}{\Delta},
\end{align}
and
\begin{align}
    \vartheta = \frac{\eta}{\Delta} = \frac{\mu}{(1-\alpha)},\qquad
    \hat{ \vartheta} = \frac{\hat{\eta}}{\Delta}.
\end{align}

Let us note that for $\xi\leq\frac{\mu_{\text{lower}}}{6},$ we have
\begin{align}
    \omega - 3\xi = \frac{\mu - 3\xi\cdot(1-\alpha)^3}{(1-\alpha)^3}\geq\frac{\mu-2\xi}{(1-\alpha)^3}\geq\frac{\mu_{\text{lower}}-3\xi}{(1-\alpha)^3}\geq\frac{\mu_{\text{lower}}-3\xi}{(1-\alpha_{\text{lower}})^3}\geq\frac{\mu_{\text{lower}}}{2(1-\alpha_{\text{lower}})^3}
\end{align}
where in the last inequality we use $\xi\leq\frac{\mu_{\text{lower}}}{6}$. Thus, we have
\begin{align} \label{eq_Psi-Xi_bound_2nd}
    \frac{1}{\omega - 3\xi}\leq \frac{2(1-\alpha_{\text{lower}})^3}{\mu_{\text{lower}}}\leq \frac{2}{\mu_{\text{lower}}}.
\end{align}
On the event
%\begin{align} \label{event_eta_Psi_tight}
%    \{ \vert\eta-\hat{\eta}\vert<\xi\}\cap\{ \vert\Psi-\hat{\Psi}\vert<2\xi\}
%\end{align}
\begin{align} \label{event_eta_Psi_tight_2nd}
    \{ \vert\vartheta-\hat{\vartheta}\vert<2\xi\}\cap\{ \vert\omega-\hat{\omega}\vert<3\xi\}
\end{align}

we have
\begin{align}
    \left\vert (1-\alpha)^2 - (1-\hat{\alpha})^2 \right\vert = \left\vert \frac{\vartheta}{\omega} - \frac{\hat{\vartheta}}{\hat{\omega}} \right\vert & = \left\vert \frac{\vartheta(\hat{\omega}-\omega)+\omega(\vartheta-\hat{\vartheta})}{ \omega\hat{\omega}} \right\vert\\
    & \leq 5\cdot\xi\cdot\frac{\vartheta+\omega}{\omega\hat{\omega}}\\
    & \leq 5\cdot\xi\cdot\frac{\vartheta+\omega}{\omega(\Psi-3\xi)}\\
    & \leq 5\cdot\xi\cdot\left(\frac{2}{\omega-3\xi}\right)\\
    &\leq \frac{20\xi}{\mu_{\text{lower}}} \label{eq_1-alpha^2_event_bound_2nd}
\end{align}
where the third inequality follows from $\vartheta = \frac{\mu}{(1-\alpha)}$, implying  $\frac{\vartheta}{\omega} = (1-\alpha)^2<1$. The last inequality follows from  (\ref{eq_Psi-Xi_bound_2nd}). Let us further note that
\begin{align}
    \left\vert \alpha-\hat{\alpha} \right\vert = \left\vert \sqrt{(1-\alpha)^2} - \sqrt{(1-\hat{\alpha})^2} \right\vert \leq \frac{1}{2(1-\alpha_{\text{upper}})}\cdot \left\vert (1-\alpha)^2 - (1-\hat{\alpha})^2 \right\vert 
\end{align}
and thus on the event (\ref{event_eta_Psi_tight_2nd}), by (\ref{eq_1-alpha^2_event_bound_2nd}) we have 
\begin{align}
    \left\vert \alpha-\hat{\alpha} \right\vert \leq \frac{10\xi}{\mu_{\text{lower}}}\cdot\frac{1}{(1-\alpha_{\text{upper}})}=C_{7}\cdot\xi.
\end{align}

Next, on the event (\ref{event_eta_Psi_tight_2nd}) we have
\begin{align}
    \left\vert \hat{\mu}-\mu \right\vert = \left\vert \vartheta(1-\alpha) - \hat{\vartheta}(1-\hat{\alpha}) \right\vert &= \left\vert \vartheta(1-\alpha) - \hat{\vartheta}(1-\alpha) +\hat{\vartheta}(1-\alpha)- \hat{\vartheta}(1-\hat{\alpha}) \right\vert\\
    & \leq (1-\alpha)\vert\hat{\vartheta}-\vartheta\vert +\hat{\vartheta}\vert\alpha-\hat{\alpha}\vert\\
    & \leq \xi + \hat{\vartheta}\vert\alpha-\hat{\alpha}\vert\\
    & \leq 2\xi + (\vartheta+2\xi)\vert\alpha-\hat{\alpha}\vert\\
    & \leq 2\xi + (\vartheta+2\xi)\cdot C_{7}\cdot\xi\\
    & \leq 2\xi + \left(\frac{\mu}{(1-\alpha)}+2\xi\right)\cdot C_{7}\cdot\xi\\
    & \leq 2\xi+\left(\frac{\mu_{\text{upper}}}{1-\alpha_{\text{upper}}}+\frac{2\mu_{\text{lower}}}{6}\right)\cdot C_{7}\cdot\xi\\
    & = C_{8}\cdot\xi
\end{align}
It remains to note that the probability of event (\ref{event_eta_Psi_tight_2nd}) is at least $1-4\delta$ by the union bound.
\end{proof}

Finally, we are ready to prove Theorem \ref{thm_private_params}.

\begin{proof}[Proof of Theorem \ref{thm_private_params}]
For $0<\xi<\frac{C_{9}\mu_{lower}}{6}$, let
$$\tilde{\xi} = \frac{1}{C_{9}}\xi,$$ 
where $C_{9}=\max\{ \frac{10}{\mu_{\text{lower}}(1-\alpha_{\text{upper}})}, 2+\frac{10\mu_{\text{upper}}}{\mu_{\text{lower}}(1-\alpha_{\text{upper}})^2}+\frac{10}{3(1-\alpha_{\text{upper}})} \}.$ As a consequence of Lemma \ref{lemma_app_private_alpha_mu}, if 
\begin{align}
    &T\geq \max\left\{ \frac{C_{9}^2\sigma^{2}}{\xi^2\Delta}\Psi (1-\frac{\delta}{16})^{2}, \frac{48C_{9}\sigma^2}{\delta\xi}, \frac{3C_{9}\sigma^2}{\xi}\Psi( 1-\frac{\delta}{32})^2, \frac{9C_{9}^2\sigma^2(\eta_4 - \sigma^2)}{\xi^2\Delta}\Psi( 1-\frac{\delta}{32})^2 \right\}\label{cond_app_dp_1_aux}
\end{align}
and
\begin{align}
    &\frac{T}{(\log T)^{5/2}} \geq \frac{4\sqrt{c}C_{1}C_{2}^2C_{9}}{\epsilon\xi}\log \left(\frac{4}{\delta}\right) \label{cond_app_T_log_T}
\end{align}
for some $0<\delta\leq1$, then we have $\mathbb{P}\left(\left\vert\hat{\alpha}_{\text{private}}-\alpha\right\vert>C_{7}\tilde{\xi}\right)\leq \delta$ and $\mathbb{P}\left(\left\vert\hat{\mu}_{\text{private}}-\mu\right\vert>C_{8}\tilde{\xi}\right)\leq \delta$, where $C_{7}=\frac{10}{\mu_{\text{lower}}(1-\alpha_{\text{upper}})}$ and $C_{8} = 2+\frac{10\mu_{\text{upper}}}{\mu_{\text{lower}}(1-\alpha_{\text{upper}})^2}+\frac{10}{3(1-\alpha_{\text{upper}})}.$ Given that $C_{7}\leq C_{9}$ and $C_{8}\leq C_{9}$, we have
\begin{align}
    \mathbb{P}\left( \left\vert\hat{\alpha}_{\text{private}}-\alpha\right\vert>{\xi} \right) < \mathbb{P}\left( \left\vert\hat{\alpha}_{\text{private}}-\alpha\right\vert>\frac{C_{7}}{C_{9}}{\xi} \right) = \mathbb{P}\left( \left\vert\hat{\alpha}_{\text{private}}-\alpha\right\vert>{C_{7}}\tilde{\xi} \right)\leq \delta.
\end{align}
Similarly, we obtain $\mathbb{P}\left( \left\vert\hat{\mu}_{\text{private}}-\mu\right\vert>{\xi} \right)\leq \delta$. Finally, it follows from equation \eqref{eq_stationary_var_hawkes} that
$$\sigma^2 < \frac{\mu_{\text{upper}\Delta}}{(1-\alpha_{\text{upper}})^3},$$ and therefore conditions \eqref{condition_dp_1} and \eqref{condition_dp_2} from the statement of the theorem are sufficient for \eqref{cond_app_dp_1_aux} to hold. This completes the proof.
\end{proof}

%\newpage

\subsection{Experiments}

Throughout the experiment section we define the utility as the normalized absolute error. Given an estimate $(\hat \mu, \hat \alpha)$ of the Hawkes process parameters $(\mu, \alpha)$, the normalized absolute error is defined as:

$$
E_{\mu} = \frac{|\hat \mu - \mu |}{\mu} \quad \quad {\rm and} \quad \quad E_{\alpha} = \frac{|\hat \alpha - \alpha |}{\alpha}
$$

Figure~\ref{fig:pu_synth_baselines} provides the privacy-utility tradeoff for the estimation of $\mu_1$ (left) and $\mu_2$ (right). This figure is the same as Figure~\ref{fig:pu_synth}, but the baseline intensity parameters instead. The same takeaways provided in Section~\ref{sect_experiments} apply.

\begin{figure}[!h]
    \centering
    \includegraphics[width=1.0\textwidth]{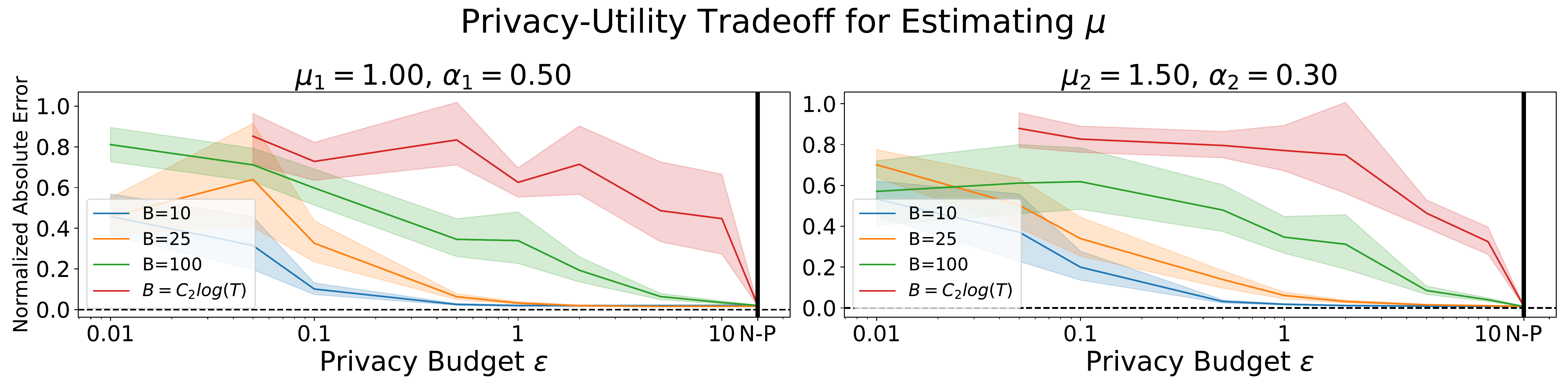}
    \caption{Privacy-utility trade-off for our differentially private estimator for $\mu_1$ (left) and $\mu_2$ (right). Mean and $95\%$ bands observed over 50 repetitions at each privacy level, along with the non-private estimation error appended at the rightmost part of the x-axis. Utility increases with a larger privacy budget, and decreases with the maximum tree length $B$. Same takeaways provided in Section~\ref{sect_experiments} for Figure~\ref{fig:pu_synth} apply.  }
    \label{fig:pu_synth_baselines}
\end{figure}

% We share the code to reproduce all our experiments in a public \texttt{GitHub} repository\footnote{Repository anonymized.} 
All experiments were run on a single machine with a 2 GHz Quad-Core Intel Core i5 CPU and 32GB of RAM. Finally, the licenses attached to the real datasets used all allow free use of the data, including commercial use. The \textit{MathOverflow} data are a part of the ``Stack Exchange Data Dump'', with a Creative Commons license BY-SA 4.0\footnote{\url{https://creativecommons.org/licenses/by-sa/4.0/}}. The \textit{911 Calls} dataset has an open database contents license \footnote{\url{https://opendatacommons.org/licenses/dbcl/1-0/}}, while the \textit{MOOC} dataset is equipped with a Creative Commons CC0 1.0 license \footnote{\url{https://creativecommons.org/publicdomain/zero/1.0/}}.

\end{document}